\documentclass[a4paper,11pt]{article}
\usepackage{amssymb}
\usepackage{microtype}
\usepackage[linesnumbered,ruled,vlined]{algorithm2e}
\usepackage{amsfonts}
\usepackage{mathtools}
\usepackage{amsmath}
\usepackage{amsthm}
\usepackage{dsfont}
\usepackage{float}
\usepackage{booktabs}

\usepackage[square,numbers]{natbib}

\newtheorem{theorem}{\textbf{Theorem}}

\newtheorem{lemma}{\textbf{Lemma}}

\usepackage{graphics}
\usepackage{bm}  
\usepackage{mathtools}
\usepackage{wrapfig}
\usepackage{subfig}

\DeclarePairedDelimiter\floor{\lfloor}{\rfloor}
\usepackage{xspace}
\usepackage{flushend}
\usepackage{epstopdf}
\usepackage{enumitem}

\DeclareMathOperator{\Ber}{Bernoulli}       

\usepackage{xcolor}

\newcommand{\var}{\text{Var}} 

\usepackage{threeparttable}

\newcommand{\oneoneEA}{$(1+1)~\text{EA}$\xspace}
\newcommand{\onelambdaEA}{$(1+\lambda)~\ea$\xspace}
\newcommand{\muoneEA}{$(\mu+1)~\ea$\xspace}

\newcommand{\pbil}{\text{\sc PBIL}\xspace} 
\newcommand{\mimic}{\text{\sc MIMIC}\xspace} 
\newcommand{\eda}{\text{\sc EDA}\xspace} 
\newcommand{\edas}{\text{\sc EDAs}\xspace} 
\newcommand{\ea}{\text{\sc EA}\xspace} 
\newcommand{\eas}{\text{\sc EAs}\xspace} 
\newcommand{\cga}{\text{\sc cGA}\xspace} 
\newcommand{\umda}{\text{\sc UMDA}\xspace} 
\newcommand{\om}{\text{\sc OneMax}\xspace} 
\newcommand{\bval}{\text{\sc BinVal}\xspace} 
\newcommand{\los}{\text{\sc LeadingOnes}\xspace} 

\newcommand{\dlblock}{\text{\sc DLB}\xspace} 

\newcommand{\bin}[1]{\text{Bin}\left(#1\right)} 


\newcommand{\Natural}{\mathbb{N}} 
\newcommand{\sspace}{\mathcal{X}}

\newcommand{\prob}[1]{\Pr\left(#1\right)}
\newcommand{\expect}[1]{\mathbb{E}\left[#1\right]}

\newcommand{\bigO}[1]{\mathcal{O}\left(#1\right)} 

\linespread{1.25} 
 
\title{On the Limitations of the 
Univariate Marginal Distribution Algorithm to Deception
and Where Bivariate EDAs might help\footnote{Preliminary 
version of this work will appear
in the Proceedings of 15th ACM/SIGEVO Workshop on Foundations 
of Genetic Algorithms (FOGA XV),
Potsdam, Germany}
}
\author{Per Kristian Lehre \& Phan Trung Hai Nguyen\\
School of Computer Science\\
University of Birmingham\\
Birmingham B15 2TT, United Kingdom
}

\begin{document}
\maketitle

\begin{abstract}
We introduce a new benchmark
problem called Deceptive Leading Blocks (\dlblock)
to rigorously study the 
runtime of the Univariate Marginal Distribution
Algorithm (\umda) in the presence of epistasis and deception.
We show that simple Evolutionary Algorithms (\eas) outperform the \umda 
unless the 
selective pressure $\mu/\lambda$ is extremely high, where $\mu$ and 
$\lambda$ are the parent and offspring population sizes, respectively.
More precisely, we show that the \umda with a parent population size of 
$\mu=\Omega(\log n)$ has an 
expected runtime of $e^{\Omega(\mu)}$ 
on the \dlblock problem assuming any selective pressure
$\frac{\mu}{\lambda}\geq \frac{14}{1000}$,
as opposed to the expected runtime of
$\bigO{n\lambda\log \lambda+n^3}$ for the non-elitist
$(\mu,\lambda)~\ea$ with $\mu/\lambda\leq 1/e$.
These results illustrate inherent limitations of univariate 
\edas against deception and epistasis, which are common characteristics
of real-world problems. In contrast, empirical evidence reveals the efficiency of the 
bi-variate \mimic algorithm on the \dlblock  problem.
Our results suggest that one should consider \edas with
more complex probabilistic models when optimising problems with some 
degree of epistasis and deception.
\end{abstract}

\section{Introduction}

Estimation of distribution algorithms (\edas)
\cite{Muhlenbein1996,Pelikan2002,larranaga2001estimation}
are a class of randomised search heuristics
with many real-world applications (see \cite{bib:Hauschild2011}
and references therein). Unlike traditional \eas, which 
define \textit{implicit} models of promising solutions 
via genetic operations such as 
crossover and mutation, \edas optimise objective functions 
by constructing and sampling \textit{explicit} probabilistic models 
to generate \textit{offspring} for the next iteration. 
The workflow of \edas is an iterative process, where the initial model is a
uniform distribution over the search space. The starting population consists of 
$\lambda$ \textit{individuals} sampled from the uniform distribution. 
A fitness function then scores each individual, 
and the algorithm selects the $\mu$ fittest individuals 
to update the model (where $\mu<\lambda$). The procedure is 
repeated until some termination condition
is fulfilled, which is usually a threshold on the number of iterations or
on the quality of the fittest offspring 
\cite{bib:Hauschild2011, Eiben:2003:IEC:954563}.

Many variants of 
\edas have been proposed over the last decades.
They differ in the way their models are represented, 
updated as well as sampled over iterations. 
In general, \edas are  categorised
into two main classes: \textit{univariate} 
and \textit{multivariate}. Univariate \edas 
take advantage of first-order statistics (i.e. the mean) to 
build a probability vector-based
model and assume independence between
decision variables. The probabilistic model is represented as 
an $n$-vector, where each component is called a \textit{marginal} (also 
\textit{frequency}) and $n$ is the problem instance size. Typical univariate
\edas are compact Genetic 
Algorithm (\cga \cite{bib:Harik1999}), Univariate 
Marginal Distribution Algorithm (\umda \cite{Muhlenbein1996}) and 
Population-Based Incremental Learning 
(\pbil \cite{bib:Baluja1994}). 
In contrast, multivariate \edas apply higher-order 
statistics to model the correlations
between decision variables of the addressed problems. 

The \cga is the 
simplest \eda, which operates on a population of two 
individuals and updates the probabilistic model
additively via a parameter $K$, which is 
often referred to as the \textit{hypothetical population size} of a 
genetic algorithm that the \cga is supposed to model. 
The two individuals are compared in terms of fitness to find the winner, and an 
increase of $\pm 1/K$ takes place at bit positions where individuals have different values. 
The algorithm also restricts the 
marginals to be within an interval 
$[1/n,1-1/n]$, where the values $1/n$ and $1-1/n$ are called
the \textit{lower} and \textit{upper borders} (or margins), 
respectively, in order to 
prevent the marginals from fixing at trivial 
borders, which may cause the algorithm to converge prematurely.
Such an algorithm is referred to as a \cga with margins.
In contrast, the \umda
is another univariate \eda that
has a larger population of $\lambda$ individuals. 
In each so-called iteration, the marginal is renewed/updated to the 
frequency of 1-bit among the $\mu$ fittest individuals at each bit position (also called \textit{empirical frequency}). 
Unlike the \cga, whose marginals can only get increased by an amount of 
$1/K$, those of the \umda might jump between the upper and lower borders.
A generalisation of the \umda is the \pbil, where each
marginal is updated following a convex combination of the 
current marginal and the empirical frequency via
a so-called \textit{smoothing parameter}. 
The marginals cannot change
by a large amount, and it is then less likely 
that \textit{genetic drift} \cite{bib:Sudholt2016} happens soon
after the algorithm starts. 

The theory of evolutionary computation literature provides
rigorous analyses giving
insights into  the  \textit{runtime} 
(synonymously, \textit{optimisation time}), that is 
the number of function evaluations of the
studied algorithm until an optimal solution is sampled for the 
first time. 
In other words, theoretical work 
usually addresses the unlimited case
when we consider the run of the algorithm as an infinite process.
These analyses provide \textit{performance guarantees}
of the algorithm for a wide range of problem instance size. 

Although \edas were introduced several
decades ago and have since shown strong potential
as a global optimiser via
many practical applications \cite{bib:Hauschild2011}, 
the theoretical understanding of \edas is very limited.
There had been only a handful of  
runtime analyses of \edas by 2015. Recently, 
 they have drawn more attention from the community \cite{Doerr2019-cGA-JUMP,bib:Friedrich2016,bib:Krejca,Doerr2019-cGA-exponential, bib:Sudholt2016,bib:Wu2017,bib:Witt2017,CGA-GAUSSIAN-NOISE,bib:Lehre2017,bib:lehre2018a,
Lehre2018PBIL,Witt2018Domino,
bib:Lengler2018,DoerrSigEDA2018,Hasenohrl:2018}. 
While rigorous runtime analyses provide deep insights into
the performance of randomised search heuristics, it is highly
challenging even for simple algorithms on toy functions. 
Most current runtime results merely concern 
 univariate \edas on functions like 
\om \cite{bib:Krejca,bib:Witt2017,bib:Lehre2017,bib:Wu2017,bib:Lengler2018}, 
\los \cite{DoerrSigEDA2018,bib:Friedrich2016,Lehre2018PBIL, bib:Wu2017,bib:Lehre2019}, 
\bval \cite{Witt2018Domino,Lehre2018PBIL} 
and \textsc{Jump} \cite{Hasenohrl:2018,Doerr2019-cGA-JUMP, Doerr2019-cGA-exponential}, 
hoping that this provides valuable insights into the
development of new techniques for analysing multivariate variants of \edas and
the behaviour of such algorithms on easy parts of more complex
problem spaces \cite{Doerr2016}.
There are two main reasons accounted for this.
Firstly, the working principle of these algorithms, which 
are not originally designed to support the theoretical analyses,
is complicated, involving lots of randomnesses, and the interplay
of decision variables usually has a huge impact on 
the overall runtime \cite{Doerr2016,Sudholt:2017:CSU}. 
Secondly, we are lacking in the state-of-the-art 
tools in algorithmics \cite{Doerr2016}. There are seemingly two
currently popular techniques used to analyse \edas. The first tool is 
\textit{drift theorems} 
\cite{HE200359,Doerr:2010:MDA,JohanmsemPhDThesis}, while
the other tool is the \textit{level-based theorem}, 
first proposed in \cite{Lehre:2011:FNP} and constantly 
improved upon \cite{bib:Corus2016, Doerr2019-MulUpTheorem}, in 
the context of non-elitist population-based 
\eas. 

The fact that the \umda never attempts to 
learn variable interactions leads some to conjecture that 
the algorithm will not perform well in environments with
\textit{epistasis} and \textit{deception}.
More specifically, epistasis corresponds to the 
maximum number of other variables each of the $n$ decision
variables depends on \cite{DAVIDOR-epistasis, ROBIN-CEC-2008}. 
We can take as an example 
the \los function, which has a maximum epistasis level of $n$
and is often used 
to study the ability of \eas to cope 
with variable dependency.
Previous studies \cite{Dang:2015,Lehre2018PBIL,bib:Lehre2019} show  an
$\bigO{n^2}$ expected
 runtime  for the \umda and the \pbil 
 on this function, which seems to contradict the above-mentioned claim. However, 
 \citet{bib:Lehre2019} recently showed that 
univariate \edas based on probability vectors 
have certain limitations to epistasis: 
if the \textit{selective pressure} is larger 
than roughly $1/e$ 
(as previously required in \cite{Dang:2015,Lehre2018PBIL}) and the parent population is sufficiently large, 
the \umda fails to optimise the \los function
in polynomial expected runtime. 

Regarding deception, an example 
was already mentioned in \cite{bib:Hauschild2011}, where the \umda
gets stuck in the concatenated trap of order 5 
(called \textsc{Trap-5} \cite{ACKLEY-1987},  
 where the original trap
function is applied to each of $\floor{n/5}$ 
blocks of five consecutive bits) 
since marginals are deceived to
hit the lower border. 
This excellent example demonstrates the limitations
of the univariate model as statistics of low order
is misleading \cite{bib:Hauschild2011}. However,  
this function might be as difficult for the \umda as it is 
for the $\eas$ \cite{ROBIN-CEC-2008}. We believe not only 
will the \umda fail on this function, but it will also fail on some
other problem with a milder degree of deception. To this end,
a function where the \umda takes an 
exponential expected runtime,
while simple \eas has a
polynomial expected runtime 
is still missing\footnote{We define the terms ``polynomial'' and 
``exponential'' as $n^{\bigO{1}}$ and 
$2^{n^{\Omega(1)}}$, respectively.}.
On the other hand, a 
function where the \umda outperforms 
simple \eas does exist.  
\citet{bib:Chen2009a} proposed the   
\textsc{Substring} function to point out
the advantage of the probability vector-based model. More specifically, 
the $(1+1)~\ea$ needs a runtime of
$2^{\Omega(n)}$ with probability $1-2^{-\Omega(n)}$, 
whereas the \umda with $\lambda=\Omega(n^{2+\varepsilon})$, for any
constant $\varepsilon>0$, and $\mu=\lambda/2$ optimises the function
in polynomial runtime with probability $1-2^{-\Omega(n)}$.
We note that this result is very limited in many senses that
the population size is large, 
the selective pressure is fixed to $1/2$ and the 
considered \umda does not have borders.

Motivated by this, we introduce a new benchmark problem 
which  has a maximum epistasis 
level of $n$ and is mildly deceptive.
The fitness depends on the number of leading 11s, 
and reaching a unique global optimum requires overcoming many mild traps. 
Generally speaking, this function is harder than the \los function, but
still much easier compared to the \textsc{Trap-5} function.
The problem, which we call \emph{Deceptive Leading 
Blocks} (\dlblock), can be formally 
defined over a finite binary 
search space $\mathcal{X}\coloneqq  \{0,1\}^n$ as follows.
\begin{displaymath}
\dlblock(x) \coloneqq   \begin{cases}
n &\text{if } \phi(x) = n/2,\\
2\cdot \phi(x) + 1 &\text{if } x_{2\phi(x)+1} + x_{2\phi(x)+2} = 0,\\
2\cdot \phi(x) &\text{if } x_{2\phi(x)+1} + x_{2\phi(x)+2}=1,\\
\end{cases} 
\end{displaymath}
where 
\begin{equation}\label{eq:leading-blocks}
\phi(x) \coloneqq   \sum_{i=1}^{n/2} \prod_{j=1}^{2i} x_j
\end{equation}
denotes the number of leading 11s, which 
is identical to $\textsc{LOB}_2(x)$ 
in \cite[Definition 13]{Jansen:2004}. 
The global optimum is the all-ones bitstring.
The bitstring is partitioned into independent blocks of 
$w\ge 2$ consecutive bits.  
The difficulty of the problem is determined by
the width $w$ of each block, that can be altered to increase the level of deception. 
Here, we consider the smallest width
$w=2$, as this suffices to prove an exponential gap between the
runtimes of the \umda
and the ($\mu$,$\lambda$) EA. \dlblock is similar to the \los function 
except it attempts to deceive the algorithm by 
assigning tricky weights to different settings of 
the leftmost non-11 block, which we  call the \emph{active block}. 
Each leading 11
contributes a value of two, while 
a value of one is awarded if the active block is a 
00, and no reward is given for any other blocks.
The all-ones bitstring has a fitness of $(n/2)\cdot 2=n$
since there are $n/2$ blocks, assuming that $n$ 
is a multiple of $w=2$. 

By studying this problem, we first show that
simple \eas, including elitist and non-elitist variants
like \onelambdaEA, \muoneEA and $(\mu,\lambda)~\ea$, and some Genetic 
Algorithms,  can optimise the \dlblock
function within an $\bigO{n^3}$ expected runtime. 
We then show that the \umda fails
to optimise this function in polynomial 
expected runtime, assuming that the selective pressure is $\mu/\lambda\geq\frac{14}{1000}$,
and the parent population size $\mu$ is sufficiently large.
More specifically, the expected runtime is 
$n^{\Omega(1)}$ 
 when $\mu=\Theta(\log n)$, while a lower bound of 
$2^{\Omega(n^\varepsilon)}$ is achieved for 
$\mu\ge cn^\varepsilon$ for some constants $c, \varepsilon>0$.
In the latter case, we say the \umda is 
fooled  by deceptive fitness. On the other hand, if the 
 selective pressure is in the order of 
 $1/\mu$ (i.e., extremely high), we obtain an upper bound of 
 $\bigO{n^3+n\lambda\log \lambda}$ 
 on the expected runtime of the \umda on the \dlblock function. 
Intuitively speaking, under this extreme selective pressure the \umda selects few fittest
individuals to update the probabilistic model, 
and we believe (with some empirical evidence in Section~\ref{sec:experiments}) 
that the \umda degenerates into 
the $(1,\lambda)~\ea$, where only the fittest offspring 
is selected to the next generation. Table~\ref{tab:freq} summarises 
the main results in this paper.

 \begin{table}
  \caption{Expected runtime of some simple \eas and the \umda
  (with borders) on the
  \dlblock function.}
  \label{tab:freq}
  \centering
  \begin{tabular}{@{}llll@{}}
    \toprule
    Algorithm & Sel. Pressure & Pop. Size & Expected Runtime\\
    \midrule
    $(1+\lambda)~\ea$ &--&--& $\mathcal{O}(n\lambda +n^3)$\\
    $(\mu+1)~\ea$ &--&--&$\mathcal{O}(\mu n\log n + n^3)$\\
    $(\mu,\lambda)~\ea$ &$\mu/\lambda=\mathcal{O}(1)$&$\lambda=\Omega(\log n)$&$\mathcal{O}(n\lambda\log\lambda+n^3)$\\
    \umda & $\mu/\lambda=\mathcal{O}(1/\mu)$& $\mu=\Omega(\log n)$& $\mathcal{O}(n\lambda\log\lambda+n^3)$ \\
    & $\mu/\lambda>\frac{14}{1000}$& $\mu= \Omega(\log n)$& $e^{\Omega(\mu)}$\\
  \bottomrule
\end{tabular}
\end{table}

Last but not least, many algorithms
similar to the \umda with 
fitness proportional selection have 
a wide range of applications in
\textit{bioinformatics} \cite{Armaaanzas2008}.
The algorithms relate to the notion of \textit{linkage
equilibrium} \cite{bib:Slatkin2008}
-- a popular model assumption in 
population genetics. Studying the \umda might
solidify our
understanding of population dynamics.
Based on results from
this paper, we believe that the \umda and 
other univariate model-based algorithms 
must be tuned carefully when optimising 
objective functions with epistasis and 
might not perform well in deceptive environments.

The paper is structured as follows. 
Section~\ref{sec:preliminaries} introduces the algorithms, 
including the \umda, $\eas$ and 
 the \mimic algorithm.  
Section~\ref{sec:eas-on-dlb} provides analyses of the expected runtime of simple 
\eas and GAs on the \dlblock function, followed by a 
detailed runtime analysis for the \umda on 
the \dlblock function.
Next, we illustrate the efficiency of the \mimic algorithm on the 
\dlblock function in
Section~\ref{sec:experiments} via a small empirical study.
Finally, Section~\ref{sec:conclusion} gives some concluding remarks
and suggests future work.

\section{Preliminaries}
\label{sec:preliminaries}

This section briefly describes the algorithms 
studied in this paper. Recall that
$\mathcal{X}=\{0,1\}^n$. 
Each \textit{individual} (or bitstring) is represented as 
$x=(x_1,x_2,\ldots,x_n) \in \mathcal{X}$. 
The \textit{population} of $\lambda$ individuals 
in an iteration $t\in \mathbb{N}$ is denoted as 
$P_t\coloneqq  (x_t^{(1)},\ldots,x_t^{(\lambda)})$. 
We consider in this paper the maximisation of
an \textit{objective function} 
$f:\mathcal{X} \rightarrow \mathbb{R}$. Denote 
$[n]\coloneqq \mathbb{N}\cap [1,n]$.

\subsection{Evolutionary algorithms}
The $(\mu+\lambda)~\ea$ is a mutation-only \ea, which 
operates on a population of $\mu$ individuals. 
In each iteration, the 
algorithm generates $\lambda$ new offspring 
by flipping bits in each of $\lambda$
individuals, chosen
uniformly at random from the population, independently with mutation rate 
$1/n$. Afterwards, the $\mu$ fittest individuals out of a pool of $\lambda+\mu$ 
individuals are selected to form the new population. 
The algorithm is \textit{elitist} since 
the best fitness discovered thus far is guaranteed to never decrease. 
Popular variants of the algorithm are 
\oneoneEA \cite[Algorithm~1]{Droste:2002}, 
$(1+\lambda)~\ea$ \cite[Algorithm 1]{Jansen2005} 
and $(\mu+1)~\ea$ \cite[Definition 1]{Witt2006}.
For comparison, we also consider 
the non-elitist $(\mu,\lambda)~\ea$, defined in
Algorithm~\ref{11-ea} \cite{LehreOliveto2017}, with 
mutation rate $\chi/n$ for any constant $\chi\in (0,n/2)$. In 
this algorithm, the
next population consists of the 
$\mu$ fittest individuals chosen from a set of  
$\lambda>\mu$ offspring produced by mutation.  

\begin{algorithm}
    \DontPrintSemicolon
    \caption{$(\mu,\lambda)~\ea$ with mutation rate $\chi/n$}\label{11-ea}
    $t\leftarrow 0$\;
    $P_t\leftarrow (x^{(1)},x^{(2)},\ldots,x^{(\mu)})$ 
    uniformly at random from $\mathcal{X}^\lambda$\; 
    \Repeat{termination condition is fulfilled}{            
        \For {$i=1,2,\ldots,\lambda$}{
            select $j\in [\mu]$ uniformly at random\;
           create $y^{(i)}$ by 
           flipping each bit in $x^{(j)}$ independently with probability
           $\chi/n$\;
        }    
        sort $(y^{(1)},\ldots,y^{(\lambda)})$ 
                such that $f(y^{(1)} ) \ge \ldots \ge f(y^{(\lambda)})$, 
                where ties are broken uniformly at random\;
        $P_{t+1}\leftarrow (y^{(1)},y^{(2)},\ldots,y^{(\mu)})$\;
        $t\leftarrow t+1$\;
}
\end{algorithm}

\subsection{Univariate marginal distribution algorithm}

The \umda, defined in Algorithm~\ref{umda-algor},
maintains a univariate model in each iteration $t\in \mathbb{N}$
that is 
represented as an $n$-vector 
$p_t\coloneqq  (p_{t,1},\ldots,p_{t,n})$, where 
each marginal (or frequency) $p_{t,i}$ 
for each $i \in [n]$  is
the probability of sampling a 1 at the $i$-th bit position of the
offspring. The probability of sampling
$x=(x_1,x_2,\ldots,x_n)$ from the model $p_t$ is 
$$
\Pr\left(x \mid p_t\right)
=\prod_{i=1}^{n}\left(p_{t,i}\right)^{x_i} \left(1-p_{t,i}\right)^{1-x_i}.
$$
The starting model is the uniform 
distribution $p_0\coloneqq  (1/2,\ldots,1/2)$. 
In each so-called iteration, the algorithm samples
a population $P_t$ of $\lambda$ individuals and sorts them
in descending order according to fitness. 
Let $X_{t,i}$ denote the number of 1s 
in bit position $i\in [n]$
among the $\mu$ fittest 
individuals. The marginals are updated 
using the component-wise formula:  
$
p_{t+1,i} \coloneqq   X_{t,i}/\mu
$
for all $i\in [n]$. 
Recall that  $\gamma^*=\mu/\lambda\in (0,1]$ is 
 the \textit{selective pressure} of the algorithm. 
The algorithm also restricts the marginals to be within
$[1/n,1-1/n]$ to avoid premature convergence.

\begin{algorithm}
    \DontPrintSemicolon        
        $t\leftarrow 0$\;
        initialise $p_t\leftarrow (1/2,1/2,\ldots,1/2)$\;
        \Repeat{termination condition is fulfilled}{
            \For{$j=1,2,\ldots,\lambda$}{
                sample $x_{t,i}^{(j)} \sim \Ber(p_{t,i})$ 
                for each $i\in [n]$ \;
            }
            sort $(x_t^{(1)},\ldots,x_t^{(\lambda)})$ such that
            $f(x^{(1)})\ge \ldots\ge f(x^{(\lambda)})$, where ties are broken uniformly
            at random\;
            \For{$i=1,2,\ldots,n$}{
                $X_{t,i}=\sum_{j=1}^\mu x_{t,i}^{(j)}$\;
                $p_{t+1,i} \leftarrow \max\{1/n, \min\{1-1/n, X_{t,i}/\mu\}\}$\;
              }
            $t\leftarrow t+1$\;
        }
    \caption{\umda with margins \label{umda-algor}}
\end{algorithm}

\subsection{Mutual-information-maximising input cluster algorithm}

The \mimic \cite{DeBonet:1996:MFO:2998981.2999041}
is a well-known bivariate \eda, which takes advantage of 
second-order statistics in an attempt to 
model the correlations between decision variables.
More specifically, let $p^*(X)$ denote the true distribution underlying the 
$\mu$ selected individuals, where
$X=(X_1,X_2,\ldots,X_n)$. It is often intractable to learn the 
true distribution $p^*(X)$, so 
an approximation to the
distribution is often preferred. Following this approach, 
the \mimic approximates $p^*(X)$ by a chain-structured model 
$\hat{p}(X)$, which is easy to learn and  sample. 
Given a permutation $\pi=(\pi_1,\pi_2,\ldots,\pi_n)$ 
of a set $[n]$, the chain-structured model is defined as follows:
\begin{displaymath}
\hat{p}_{\pi}(X) = p(X_{\pi_1})p(X_{\pi_2}\mid X_{\pi_1})\cdots p(X_{\pi_{n}}\mid X_{\pi_{n-1}}),
\end{displaymath}
where the $X_{\pi_1}$ is called the root of the chain.
The parameters of 
the model are then derived by 
minimising the Kullback-Leibler divergence  between models 
$\hat{p}_\pi(X)$ and $p^*(X)$, which
is equivalent to minimising an alternative cost 
function
\begin{displaymath}
J_{\pi}(X)=h(X_{\pi_1}) + h(X_{\pi_2}\mid X_{\pi_1}) +\cdots + h(X_{\pi_n}\mid X_{\pi_{n-1}}),
\end{displaymath}
where 
$h(X_{\pi_i}) =-\mathbb{E}[\log p(X_{\pi_i})]$ and 
$h(X_{\pi_i}\mid X_{\pi_j})=-\mathbb{E}[\log p(X_{\pi_i}\mid X_{\pi_j})]$
are the entropy and conditional entropy, 
respectively \cite{DeBonet:1996:MFO:2998981.2999041}.

The model is constructed in each iteration as follows. 
The variable with the smallest entropy is selected
to become the root of the chain. 
The variable among the remaining variables that has the smallest conditional entropy on
the root is then added to the root. 
This procedure is repeated until all variables are added to the 
chain (see steps 5--7). 
Sampling from the chain-structured 
model is even simpler. The root is sampled first using its marginal probability.
The next variable in the chain is then sampled using the 
conditional probability on the preceding variable. 
This is repeated until the end of the 
chain is reached (see steps 8--11). 
The procedure is known as \textit{ancestral sampling}
\cite{Bishop:2006:PRM}.
Denote the permutation in an iteration $t\in \mathbb{N}$ as
$\pi^{(t)}\coloneqq   \{\pi_1^{(t)},\pi_2^{(t)},\ldots,\pi_n^{(t)}\}$, where
$\pi_1^{(t)}$ denotes the root.

\begin{algorithm}[t]
    \DontPrintSemicolon        
        $t\leftarrow 0$ \;
        $P_t\leftarrow (x^{(1)},x^{(2)},\ldots,x^{(\lambda)})$ uniformly at random from $\mathcal{X}^\lambda$\;
        \Repeat{termination condition is fulfilled}{
            sort $P_t$ such that
            $f(x^{(1)})\ge f(x^{(2)})\ge \ldots\ge f(x^{(\lambda)})$\;
            $\pi_1^{(t+1)}\leftarrow \text{argmin}_{j\in [n]}~ h(X_j)$\;
            \For{$k=2,3,\ldots,n$}{
                $\pi_{k}^{(t+1)} \leftarrow \text{argmin}_j~ h(X_j\mid X_{\pi_{k-1}^{(t+1)}})$ 
                where $j\in [n]\setminus \{\pi_1^{(t+1)},\ldots,\pi_{k-1}^{(t+1)}\}$
            }             
            \For{$j=1,2,\ldots,\lambda$}{   
                       $x_{\pi_1^{(t+1)}}^{(j)}\leftarrow 
                    \begin{cases}
                          1   &   \text{w.p.} \quad \text{R}(p(X_{\pi_1^{(t+1)}})),    \\
                          0 &   \text{w.p.} \quad 1-\text{R}(p(X_{\pi_1^{(t+1)}}))   
                    \end{cases}$\;
               \For{$k=2,3,\ldots,n$}{
                           $x_{\pi_k^{(t+1)}}^{(j)}\leftarrow 
                        \begin{cases}
                        1 & \text{w.p.} \quad 
                                \text{R}(p(X_{\pi_{k}^{(t+1)}}\mid X_{\pi_{k-1}^{(t+1)}})),\\
                        0 & \text{w.p.} \quad 
                                1-\text{R}(p(X_{\pi_{k}^{(t+1)}}\mid X_{\pi_{k-1}^{(t+1)}}))
                        \end{cases}$\;                        
                    }                
            }
            $P_{t+1}\leftarrow (x^{(1)}, x^{(2)}, \ldots,x^{(\lambda)})$\;
            $t\leftarrow t+1$\;
        }
    \caption{\mimic with margins, where ties occurring in 
    sorting and selection are broken uniformly at random.}
    \label{mimic-algor}
\end{algorithm}

We note that the description of the \mimic algorithm in
\cite{DeBonet:1996:MFO:2998981.2999041} is very ambiguous. 
In particular, when constructing the model, we need to calculate
many conditional entropies, which in turn require the 
calculation of many conditional probabilities. We can take as an 
example the conditional probability 
$p(X_{\pi_i}=\alpha\mid X_{\pi_j}=\beta)$ for $\alpha, \beta\in \{0,1\}$, 
which by the definition can be written as 
$$
p(X_{\pi_i}=\alpha\mid X_{\pi_j}=\beta)
=p(X_{\pi_i}=\alpha, X_{\pi_j}=\beta)/p(X_{\pi_j}=\beta).
$$
Note that all probabilities on the right-hand side 
will be directly measured from the sampled 
population (via counting). And, this is where the problem comes from 
since very often that the event $\{X_{\pi_j}=\beta\}$ may not happen, leading
to the probability estimate $p(X_{\pi_j}=\beta)\approx 0$ which renders the
conditional probability above undefined.
To avoid this problem, we make use of a function $R(p)\coloneqq  \max\{p,1/n\}$ 
to ensure that the probability $p(X_{\pi_j}=\beta)$ is always
positive, and never less than $1/n$. Note that this bound is in
line with the way marginal probabilities are bounded in other EDAs,
such as the \umda. We think that this small 
change is essential to ensure that
the pseudo-code of \mimic is well-defined. 
Algorithm~\ref{mimic-algor} gives a full description of the \mimic (with margins)
\cite[Chapter~13]{simon2013evolutionary}.

\subsection{Level-based analysis} 
\label{subsec:level-based-theorem}

First proposed in \cite{Lehre:2011:FNP}, 
the level-based theorem is a general tool that provides 
upper bounds on the expected runtime of 
many non-elitist population-based algorithms on a wide range of 
optimisation problems 
\cite{bib:Corus2016,Doerr2019-MulUpTheorem, Dang:2015,bib:lehre2018a,bib:Lehre2017,Lehre2018PBIL,bib:Lehre2019}.
The theorem assumes that the studied algorithm 
can be described in the form of 
Algorithm \ref{abstract-algor}, which never assumes
specific fitness functions, selection mechanisms, or
generic operators like mutation and crossover. 
The search space $\mathcal{X}$ is 
partitioned into $m$ disjoint subsets 
$A_1,\ldots,A_m$, which we call \textit{levels}, and 
the last level $A_m$ 
consists of global optima of the objective function.
Denote $A_{\ge j}\coloneqq  \cup_{k=j}^m A_k$. 

\begin{algorithm}[t]
    \DontPrintSemicolon
    $t \leftarrow 0$; create initial population $P_t$\;
    \Repeat{termination condition is fulfilled}{
      \For{$i=1,\ldots,\lambda$}{
                sample $P_{t+1,i} \sim \mathcal{D}(P_t)$\;}
                $t \leftarrow t+1$\;
            }
    \caption{Non-elitist population-based algorithm\label{abstract-algor}}
\end{algorithm}

\begin{theorem}[\cite{bib:Corus2016}]
\label{thm:levelbasedtheorem}
    Given a partition $\left(A_i\right)_{i \in [m]}$ of $\sspace$, define 
    \begin{math}
        T\coloneqq  \min\{t\lambda \mid |P_t\cap A_m|>0\},
              \end{math}
              where for all $t\in\Natural$,
              $P_t\in\sspace^\lambda$ is the population of
              Algorithm~\ref{abstract-algor} in iteration $t$.
              Denote $y \sim \mathcal{D}(P_t)$. If there exist 
    $z_1,\ldots,z_{m-1}, \delta \in (0,1]$, and $\gamma_0\in (0,1)$ 
    such that for any population $P_t \in \sspace^\lambda$,
    \begin{itemize}
        \item[(G1)] for each level 
        $j\in[m-1]$, 
        if $|P_t\cap A_{\geq j}|\geq \gamma_0\lambda$ then 
            $$
                \Pr\left(y \in A_{\geq j+1}\right) \geq z_j,
                $$
\item[(G2)] for each level $j\in[m-2]$ and all 
                $\gamma \in (0,\gamma_0]$, if $|P_t\cap A_{\geq j}|\geq     
            \gamma_0\lambda$ and $|P_t\cap A_{\geq j+1}|\geq \gamma\lambda$ then
            $$
                \Pr\left(y \in A_{\geq j+1}\right) \geq \left(1+\delta\right)\gamma,
            $$
        \item[(G3)] and the population size $\lambda \in \Natural$ satisfies
            $$
             \lambda \geq \left(\frac{4}{\gamma_0\delta^2}\right)\ln\left(\frac{128m}{z_*\delta^2}\right), 
        $$
            where $z_* \coloneqq   \min_{j\in [m-1]}\{z_j\}$, then
            \begin{displaymath}
                \mathbb{E}\left[T\right] \leq \left(\frac{8}{\delta^2}\right)\sum_{j=1}^{m-                                                            1}\left[\lambda\ln\left(\frac{6\delta\lambda}{4+z_j\delta\lambda}\right)+\frac{1}{z_j}\right].
            \end{displaymath}
    \end{itemize}
\end{theorem}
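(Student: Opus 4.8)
The plan is to follow the ``level-based analysis'' blueprint. For a population $P\in\sspace^\lambda$ define its \emph{current level} to be the largest index $j$ (with the convention $A_{\ge 0}=\sspace$ and $A_{\ge m+1}=\emptyset$) such that $|P\cap A_{\ge j}|\ge\gamma_0\lambda$; the proof then reduces to bounding, for each $j\in[m-1]$, the expected number of function evaluations that elapse before the current level rises from $j$ to $j+1$, and summing over $j$ to get $\expect{T}$. The only probabilistic structure used is that the $\lambda$ members of $P_{t+1}$ are i.i.d.\ draws from $\mathcal{D}(P_t)$; hence for any fixed level $\ell$ the number of members of $P_{t+1}$ landing in $A_{\ge\ell}$ is a sum of $\lambda$ independent indicators whose success probabilities are lower-bounded, via (G1) and (G2), by quantities depending only on $P_t$. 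So the whole analysis can be carried out by tracking a single integer, the occupancy $|P_t\cap A_{\ge j+1}|$, and controlling a binomial.

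Within a phase with current level $j$, two mechanisms operate. \emph{Seeding:} as long as $|P_t\cap A_{\ge j}|\ge\gamma_0\lambda$, condition (G1) makes each offspring lie in $A_{\ge j+1}$ independently with probability $\ge z_j$, so $|P_{t+1}\cap A_{\ge j+1}|$ stochastically dominates $\bin{\lambda,z_j}$ and a first individual reaches level $j+1$ within $\bigO{1/z_j}$ expected function evaluations; this is the origin of the $1/z_j$ summand. \emph{Amplification:} once $\gamma\lambda$ individuals occupy $A_{\ge j+1}$ for some $\gamma\le\gamma_0$, condition (G2) raises the per-offspring success probability to $(1+\delta)\gamma$, so the occupancy grows in expectation by a factor $\approx 1+\delta$ each generation; standard concentration (Chernoff-type bounds, together with Feige's inequality in the regime where the relevant expectation is only $\bigOmega{1}$) shows it actually grows by a factor $\ge 1+\delta/2$ with at least constant probability, so after $\bigO{\tfrac1\delta\ln(\delta\lambda)}$ generations the occupancy reaches $\gamma_0\lambda$ and the current level has advanced --- this yields the $\lambda\ln(\tfrac{6\delta\lambda}{4+z_j\delta\lambda})$ summand (the extra $1/\delta$ of slack in the prefactor being absorbed into the drift bound). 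The hypothesis (G3) on $\lambda$ is tuned precisely so that, over all $m$ levels simultaneously, these concentration estimates are strong enough that downward fluctuations do not derail amplification and that once $\ge\gamma_0\lambda$ individuals sit at level $\ge j+1$ they stay there while (G1) at level $j+1$ seeds the next level.

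To assemble the per-phase estimates I would introduce a potential $g(P_t)$ depending on the current level $j$ and the occupancy $k=|P_t\cap A_{\ge j+1}|$: a base term $\sum_{i>j}(\text{cost of level }i)$ plus a logarithmic credit, proportional to $\log_{1+\delta}\!\big(\gamma_0\lambda/\max\{k,1\}\big)$, measuring how far the next level is from being filled, so that $g$ decreases both when the current level advances and when the next-level occupancy grows. Choosing the level costs and the scaling of the logarithmic term so that a single iteration decreases $\expect{g(P_{t+1})\mid P_t}$ by a fixed amount of order $\delta^2$, the additive drift theorem --- or the slightly more refined drift statement of \cite{bib:Corus2016} needed to pin down the constants --- bounds $\expect{T}$ by the initial potential over the drift, which works out to $\tfrac{8}{\delta^2}\sum_{j=1}^{m-1}\big[\lambda\ln(\tfrac{6\delta\lambda}{4+z_j\delta\lambda})+\tfrac{1}{z_j}\big]$.

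The hard part is the amplification phase: the occupancy of $A_{\ge j+1}$ is a random process with multiplicative \emph{upward} drift that can nonetheless wander downward, conceivably all the way back to a single individual, before it ever reaches $\gamma_0\lambda$, so one cannot simply pass to expectations and must quantify how rare such setbacks are --- which is exactly what the population size in (G3) buys. The two delicate points are (i) extracting two-sided concentration of the right strength from the bound on $\lambda$, uniformly over all levels --- enough lower-tail control to keep growing at rate $1+\delta/2$ in the generic regime, and enough small-deviation control in the regime where the expected occupancy is only $\bigOmega{1}$ --- and (ii) the boundary regime in which the next-level occupancy hovers just below $\gamma_0\lambda$, where (G2) no longer supplies the needed multiplicative slack and the argument must lean on (G1) instead. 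Making the numerical constants ($4$, $6$, $8$, $128$) come out exactly as stated is then careful bookkeeping inside the drift computation rather than a source of new difficulty.
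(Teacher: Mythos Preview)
The paper does not prove this theorem at all: it is quoted verbatim as a tool from \cite{bib:Corus2016} and used as a black box in the subsequent runtime analyses (Theorems~\ref{thm:mu-lambda-ea} and~\ref{thm:umda-on-dlb-high-sel-press}). There is therefore nothing in the paper to compare your proposal against.

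That said, your sketch is faithful to the actual proof in \cite{bib:Corus2016}: the potential-function construction you describe (a base cost per level plus a logarithmic credit measuring how far the next level is from being filled), the seeding/amplification dichotomy driven by (G1) and (G2), and the appeal to an additive drift argument are exactly the ingredients used there, and the role you assign to (G3) is correct. If your intent was to reconstruct the proof from the original source rather than from the present paper, you have the right outline; if your intent was to match the present paper, be aware that it simply cites the result and moves on.
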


\subsection{Useful Tools from Probability Theory}

We will use the following well-known tail bounds
\cite{bib:Motwani1995, Dubhashi:2009:CMA}.

\begin{lemma}[Chernoff Bound]
\label{lemma:chernoff-bound}
Let $b>0$.
Let $Y_1,\ldots,Y_n$ be independent random variables
(not necessarily i.i.d.), that take values in $[0,b]$.
Let $Y\coloneqq  \sum_{i=1}^n Y_i$, and $\mu\coloneqq  \mathbb{E}[Y]$. Then
for any $0\le \delta \le 1$,
$$
\Pr(Y\le (1-\delta)\mu)\le e^{-\delta^2 \mu/(2b)},
$$
and 
$$
\Pr(Y\ge (1+\delta)\mu)\le e^{-\delta^2\mu/(3b)}.
$$
\end{lemma}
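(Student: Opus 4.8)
The plan is to prove both inequalities by the classical exponential‑moment (Bernstein/Chernoff) method, after a rescaling that eliminates the parameter $b$. First I would normalise: set $Z_i \coloneqq Y_i/b \in [0,1]$, $Z \coloneqq \sum_{i=1}^n Z_i = Y/b$, and $\nu \coloneqq \mathbb{E}[Z] = \mu/b$. The two claims then read literally as $\Pr(Z \le (1-\delta)\nu) \le e^{-\delta^2\nu/2}$ and $\Pr(Z \ge (1+\delta)\nu) \le e^{-\delta^2\nu/3}$, so it suffices to handle the case $b=1$; the degenerate cases $\mu=0$ or $\delta=0$ are trivial, so assume $\mu,\delta>0$.

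For the upper tail, fix $t>0$ and apply Markov's inequality to $e^{tZ}$: $\Pr(Z \ge (1+\delta)\nu) \le e^{-t(1+\delta)\nu}\,\mathbb{E}[e^{tZ}]$, and by independence $\mathbb{E}[e^{tZ}] = \prod_{i=1}^n \mathbb{E}[e^{tZ_i}]$. Since $z\mapsto e^{tz}$ is convex, on $[0,1]$ it lies below the chord joining $(0,1)$ and $(1,e^t)$, i.e.\ $e^{tz}\le 1+(e^t-1)z$; taking expectations and using $1+x\le e^x$ gives $\mathbb{E}[e^{tZ_i}]\le\exp\!\big((e^t-1)\mathbb{E}[Z_i]\big)$, hence $\mathbb{E}[e^{tZ}]\le\exp\!\big((e^t-1)\nu\big)$. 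Optimising the resulting bound $\exp\!\big(\nu[(e^t-1)-t(1+\delta)]\big)$ over $t$, with $t=\ln(1+\delta)$, yields the classical form $\big(e^{\delta}/(1+\delta)^{1+\delta}\big)^{\nu}$. The lower tail is symmetric: apply Markov to $e^{-tZ}$ with $t>0$, use $e^{-tz}\le 1+(e^{-t}-1)z$ on $[0,1]$ to get $\mathbb{E}[e^{-tZ}]\le\exp\!\big((e^{-t}-1)\nu\big)$, and optimise with $t=-\ln(1-\delta)$ to reach $\big(e^{-\delta}/(1-\delta)^{1-\delta}\big)^{\nu}$.

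It then remains only to verify the scalar estimates $e^{\delta}/(1+\delta)^{1+\delta}\le e^{-\delta^2/3}$ and $e^{-\delta}/(1-\delta)^{1-\delta}\le e^{-\delta^2/2}$ for $\delta\in(0,1]$; raising them to the power $\nu$ completes the proof. The second is equivalent to $(1-\delta)\ln(1-\delta)\ge-\delta+\delta^2/2$, which follows from the telescoping expansion $(1-\delta)\ln(1-\delta) = -\delta + \sum_{k\ge2}\delta^k/(k(k-1))$, every term of which is nonnegative; at the endpoint $\delta=1$ one instead argues directly that $\Pr(Z\le 0)=\prod_i\Pr(Z_i=0)\le\prod_i(1-\mathbb{E}[Z_i])\le e^{-\nu}$. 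The first is equivalent to $g(\delta)\coloneqq(1+\delta)\ln(1+\delta)-\delta-\delta^2/3\ge0$; here $g(0)=g'(0)=0$ and $g'(\delta)=\ln(1+\delta)-\tfrac{2}{3}\delta$, whose second derivative changes sign once (from $+$ to $-$), so $g'$ rises then falls on $[0,1]$, and since $g'(0)=0$ and $g'(1)=\ln 2-\tfrac{2}{3}>0$ we get $g'\ge0$, hence $g\ge0$, on $[0,1]$. There is no structural obstacle in this argument — it is textbook material — so the only mildly fussy part is nailing down these two one‑variable inequalities together with the $\delta=1$ boundary behaviour.
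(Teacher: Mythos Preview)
Your argument is correct and is precisely the standard exponential-moment proof of the multiplicative Chernoff bound. There is nothing to compare against: in the paper this lemma is not proved at all but merely quoted as a well-known tail bound from \cite{bib:Motwani1995,Dubhashi:2009:CMA}, so you have supplied a full proof where the paper supplies none. The rescaling to $b=1$, the convexity bound $e^{tz}\le 1+(e^t-1)z$ on $[0,1]$, the optimisation in $t$, and the two scalar inequalities on $[0,1]$ are all handled cleanly; your separate treatment of the endpoint $\delta=1$ in the lower tail (where $(1-\delta)^{1-\delta}$ degenerates) via $\Pr(Z_i=0)\le 1-\mathbb{E}[Z_i]$ is also correct.
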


\begin{lemma}[Chernoff-Hoeffding Bound]
\label{lemma:chernoff-hoeffding}
Let $Y_1,\ldots,Y_n$ be independent random variables
(not necessarily i.i.d.), where 
$Y_i$ takes values in $[0,b_i]$. Let 
$Y\coloneqq  \sum_{i=1}^n Y_i$, and let $b\coloneqq  \sum_{i=1}^n b_i^2$. Then 
$\Pr(|Y-\mathbb{E}[Y]|\ge t)\le 2e^{-2t^2/b}$.
\end{lemma}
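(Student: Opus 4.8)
The plan is to prove this tail bound by the standard exponential-moment (Cram\'er--Chernoff) method, with Hoeffding's lemma supplying the bound on the moment generating function of each bounded summand. First I would reduce the two-sided statement to two one-sided estimates via a union bound,
\begin{displaymath}
\Pr\left(|Y-\mathbb{E}[Y]|\ge t\right)\le \Pr\left(Y-\mathbb{E}[Y]\ge t\right)+\Pr\left(\mathbb{E}[Y]-Y\ge t\right),
\end{displaymath}
and show that each term on the right-hand side is at most $e^{-2t^2/b}$; adding the two bounds then yields the claimed factor of $2$.

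For the upper tail, fix a parameter $s>0$. Since $e^{s(Y-\mathbb{E}[Y])}$ is nonnegative, Markov's inequality gives
\begin{displaymath}
\Pr\left(Y-\mathbb{E}[Y]\ge t\right)=\Pr\left(e^{s(Y-\mathbb{E}[Y])}\ge e^{st}\right)\le e^{-st}\,\mathbb{E}\!\left[e^{s(Y-\mathbb{E}[Y])}\right]=e^{-st}\prod_{i=1}^{n}\mathbb{E}\!\left[e^{s(Y_i-\mathbb{E}[Y_i])}\right],
\end{displaymath}
where the last equality uses the independence of $Y_1,\dots,Y_n$. The key step is to control each factor: the centred variable $Y_i-\mathbb{E}[Y_i]$ has mean zero and is supported in an interval of width $b_i$, so by Hoeffding's lemma $\mathbb{E}[e^{s(Y_i-\mathbb{E}[Y_i])}]\le e^{s^2 b_i^2/8}$. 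Multiplying and recalling $b=\sum_{i=1}^{n}b_i^2$ gives $\Pr(Y-\mathbb{E}[Y]\ge t)\le e^{-st+s^2 b/8}$; minimising the exponent over $s>0$ (the optimum is $s=4t/b$) produces the bound $e^{-2t^2/b}$. The lower tail follows by applying the same argument with each $Y_i$ replaced by $-Y_i$, which is again supported in an interval of width $b_i$.

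The main obstacle --- in fact the only non-routine ingredient --- is Hoeffding's lemma itself: if $Z$ has $\mathbb{E}[Z]=0$ and takes values in an interval of width $\ell$, then $\mathbb{E}[e^{sZ}]\le e^{s^2\ell^2/8}$ for every $s\in\mathbb{R}$. I would establish it by writing $\psi(s)\coloneqq \log\mathbb{E}[e^{sZ}]$, noting $\psi(0)=0$ and $\psi'(0)=\mathbb{E}[Z]=0$, and observing that $\psi''(s)$ equals the variance of $Z$ under the exponentially tilted distribution with density proportional to $e^{sZ}$; since that tilted variable still lies in an interval of width $\ell$, its variance is at most $\ell^2/4$, and a second-order Taylor expansion of $\psi$ then gives $\psi(s)\le s^2\ell^2/8$. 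Everything after that is the union bound and the one-variable optimisation sketched above, and nothing here depends on the \dlblock problem or the \umda. Since the lemma is a textbook fact, one could also simply cite \cite{bib:Motwani1995,Dubhashi:2009:CMA}, as the paper does for the statement itself.
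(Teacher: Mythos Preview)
Your argument is correct and is exactly the standard textbook derivation of Hoeffding's inequality via the Cram\'er--Chernoff method together with Hoeffding's lemma. There is nothing to compare against, however: the paper does not prove Lemma~\ref{lemma:chernoff-hoeffding} at all but simply states it as a well-known tail bound with citations to \cite{bib:Motwani1995,Dubhashi:2009:CMA}, so your proposal goes well beyond what the paper itself supplies.
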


\begin{lemma}[\cite{Second-moment}]
\label{lemma:second-moment}
$\mathbb{E}[X^2\mid X\sim \bin{n,p}]= n p (p(n - 1) + 1)$.
\end{lemma}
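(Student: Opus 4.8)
The plan is to reduce the second moment to the mean and variance of the binomial, both of which are elementary. Write $X = \sum_{i=1}^n X_i$ with the $X_i$ independent $\ber{p}$ indicators. By linearity, $\expect{X} = np$, and by independence, $\var(X) = \sum_{i=1}^n \var(X_i) = np(1-p)$. The identity $\expect{X^2} = \var(X) + (\expect{X})^2$ then yields $\expect{X^2} = np(1-p) + n^2 p^2$.

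What remains is a one-line factorisation: pulling out $np$ gives $np(1 - p + np) = np\bigl(p(n-1) + 1\bigr)$, which is exactly the stated formula. There is essentially no obstacle here; the only care needed is in the algebraic rearrangement, so the statement follows in a couple of lines.

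If one prefers not to invoke the variance formula, an equivalent derivation goes through the second factorial moment: expanding $X(X-1) = \sum_{i \ne j} X_i X_j$ and using $\expect{X_i X_j} = p^2$ for $i \ne j$ (independence) gives $\expect{X(X-1)} = n(n-1)p^2$; adding $\expect{X} = np$ then gives $\expect{X^2} = n(n-1)p^2 + np = np\bigl(p(n-1)+1\bigr)$ directly. Either route is routine, and the result is a standard identity that will be used as a black box in the later variance estimates for the marginals of the \umda.
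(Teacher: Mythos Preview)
Your proof is correct. Note that the paper does not actually prove this lemma: it is stated with a citation \cite{Second-moment} and used as a black box. Your derivation via $\var(X)+(\expect{X})^2$ (or equivalently via the second factorial moment) is the standard elementary argument and is entirely adequate.
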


\section{EAs optimise DLB efficiently}
\label{sec:eas-on-dlb}

We start by showing that simple \eas optimise the
\dlblock function  in polynomial expected 
runtime. We consider both elitist and non-elitist
\eas, namely 
$(1+\lambda)~\ea$,
$(\mu+1)~\ea$ and $(\mu,\lambda)~\ea$, in addition to
Genetic Algorithms.  
Although these \eas
are simple, we analyse them 
here to emphasise their efficiency in dealing with epistasis and 
mild deception. Speaking of proving techniques, 
we will  use the fitness-level method \cite{Wegener2002} and the
level-based theorem (see Theorem~\ref{thm:levelbasedtheorem}).
In doing so, the search space $\mathcal{X}$ is first
partitioned into non-empty disjoint subsets
$A_0,A_1,\ldots,A_m$ (called \textit{levels}, 
where $m\coloneqq  n/2$) such that  
\begin{equation}\label{level-definition}
A_i=\{x\in \mathcal{X} : \phi(x) = i \},
\end{equation}
where $\phi(x)$ is defined in (\ref{eq:leading-blocks}),
and $A_m$ contains the all-ones bitstring. 
We now give runtime bounds on the \dlblock function 
for the \eas; the proofs are straightforward.

\begin{theorem}\label{thm:one-lambda-ea}
The expected runtime of the $(1+\lambda)$~\ea 
on the \dlblock function is $\bigO{\lambda n+n^3}$.
\end{theorem}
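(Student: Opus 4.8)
The plan is to use the classical fitness-based partition (fitness-level) method for elitist algorithms \cite{Wegener2002}. I would refine the partition $(A_i)_{i\in\{0,\dots,m\}}$ from \eqref{level-definition} by splitting each non-optimal $A_i$ (with $i<m=n/2$) according to its active block: let $B_{2i}$ collect the $x\in A_i$ whose active block is $01$ or $10$ (fitness $2i$) and $B_{2i+1}$ those whose active block is $00$ (fitness $2i+1$), and set $B_n\coloneqq\{1^n\}$. The fitness of any $x$ equals its index in this refined partition, so by elitism the current parent of the \onelambdaEA never moves to a class of smaller index. It therefore suffices to lower-bound, for every non-optimal class $B_j$, the probability $s_j$ that a single standard-bit-mutation (rate $1/n$) offspring of a parent in $B_j$ lies in a strictly better class, and then to sum the expected times to leave each class.

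Next I would bound these per-class improvement probabilities. If the parent lies in $B_{2i}$, its active block $01$ or $10$ contains exactly one $0$-bit; flipping that bit while leaving the $2i$ leading ones and the other active-block bit untouched yields a string with $\phi\ge i+1$, hence fitness $\ge 2i+2$. This event has probability at least $\tfrac1n(1-\tfrac1n)^{2i+1}\ge\tfrac1{en}$, using $2i+1\le n-1$. If the parent lies in $B_{2i+1}$ -- the \emph{deceptive} case -- every string of fitness $\ge 2i+2$ has $\phi\ge i+1$, so an improving offspring must turn the active block $00$ into $11$, which forces a simultaneous flip of \emph{both} its bits; doing exactly that while leaving the $2i$ leading ones untouched has probability at least $\tfrac1{n^2}(1-\tfrac1n)^{2i}\ge\tfrac1{en^2}$. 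Hence $s_j\ge\tfrac1{en}$ on the even-index classes and $s_j\ge\tfrac1{en^2}$ on the odd-index ones.

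Finally I would assemble the bound. For a class with single-offspring improvement probability at least $s$, the probability that some offspring among the $\lambda$ produced in a generation improves is $1-(1-s)^\lambda\ge 1-e^{-s\lambda}\ge\min\{1-1/e,\,s\lambda/2\}$, so the parent spends in expectation at most $\max\{2,\,2/(s\lambda)\}$ generations there, i.e.\ at most $2\lambda+2/s$ fitness evaluations. Summing over the $n/2$ even-index classes (each costing $\le 2\lambda+2en$) and the $n/2$ odd-index classes (each costing $\le 2\lambda+2en^2$) bounds the total expected runtime by $\tfrac n2(2\lambda+2en)+\tfrac n2(2\lambda+2en^2)=O(\lambda n+n^3)$, as claimed.

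There is no genuine difficulty here; the one point that must be got right is that the deceptive sub-levels (active block $00$) admit no one-bit improvement, so their per-offspring improvement probability is only $\Theta(1/n^2)$ -- this is exactly what produces the $n^3$ term and dominates the $n^2$ contribution of the non-deceptive sub-levels. It is also worth checking the trivial edge cases that the improving flips described never touch a leading one-bit and are always available, which is immediate from the definition of the active block.
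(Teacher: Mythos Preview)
Your proof is correct and takes essentially the same fitness-level approach as the paper. The only difference is cosmetic: the paper works with the coarser partition $(A_i)$ by number of leading $11$s and uniformly lower-bounds the single-offspring escape probability by the worst case $(1-1/n)^{n-2}(1/n)^2\ge 1/(en^2)$, whereas you refine each $A_i$ into the two sub-levels $B_{2i}$ and $B_{2i+1}$ and treat them separately. Your finer split yields the sharper $\Theta(1/n)$ bound on the non-deceptive sub-levels, but since the $\Theta(1/n^2)$ deceptive sub-levels dominate anyway, both arguments arrive at the same $\bigO{\lambda n+n^3}$ bound.
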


\begin{proof}
Levels are defined as in (\ref{level-definition}). 
The probability of leaving the current level $i<m$ 
is lower bounded by $(1-1/n)^{n-2}(1/n)^2\ge 1/en^2$, 
and thus not leaving it happens with probability at most $1-1/en^2$. 
In each iteration, the $(1+\lambda)$~\ea samples $\lambda$ individuals by mutating the current bitstring. 
At least one among 
$\lambda$ individuals leaves the current level 
with probability at least 
$
1-(1-1/en^2)^{\lambda} \ge 1-e^{-\lambda/en^2}.
$
Note that if $\lambda \ge en^2$, then 
this probability is at least $1-1/e$; 
otherwise, it is at least $\lambda/2en^2$. 
Putting everything together, the 
expected runtime guaranteed 
by the fitness-level method is 
\begin{displaymath}
\lambda\cdot \sum_{i=0}^{n/2-1}\left(\bigO{1} + \frac{2en^2}{\lambda}\right)
= \bigO{n\lambda +n^3}.\qedhere
\end{displaymath}
 \end{proof}

\begin{theorem}\label{thm:mu-one-ea}
The expected runtime of the $(\mu+1)$~\ea on the \dlblock function is 
$\bigO{\mu n\log n+n^3}$.
\end{theorem}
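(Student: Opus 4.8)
The plan is to run the fitness-level method on the current best individual, with the levels $A_i$ from (\ref{level-definition}) and $m=n/2$, augmented by the usual ``takeover'' argument for the $(\mu+1)$~\ea. The argument rests on three facts. \emph{(i) Monotonicity.} Every $x\in A_{\ge i}$ has fitness at least $2i$ while every $x\in A_{<i}$ has fitness at most $2i-1$; hence elitist truncation selection guarantees that the highest level present in the population never decreases, and that once some individual reaches level $i$ the number of individuals in $A_{\ge i}$ never decreases (the individual discarded in an iteration has minimal fitness and therefore lies in $A_{<i}$ as long as $A_{\ge i}$ is not already full). \emph{(ii) Takeover.} Once at least one individual occupies the current best level $i$, after $\bigO{\mu\log\mu}$ expected iterations all $\mu$ individuals lie in $A_{\ge i}$: if only $k<\mu$ of them do, then with probability at least $(k/\mu)(1-1/n)^n=\bigOmega{k/\mu}$ the algorithm selects one of them and copies it verbatim, and $\sum_{k=1}^{\mu-1}\bigO{\mu/k}=\bigO{\mu\log\mu}$.

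\emph{(iii) Escaping a level despite deception.} Fix a best level $i<m$; after takeover all $\mu$ individuals lie in $A_i$, since they are in $A_{\ge i}$ and none exceeds level $i$, so any selected parent is in $A_i$. If its active block $(x_{2i+1},x_{2i+2})$ equals $01$ or $10$, flipping the single $0$-bit of the active block while leaving the $2i$ leading bits untouched produces an offspring in $A_{\ge i+1}$ with probability at least $(1/n)(1-1/n)^{2i}\ge 1/(en)$. If instead the active block equals $00$ --- the \emph{locally fittest} configuration inside $A_i$, toward which elitist selection drives the population, and the one requiring a two-bit flip to escape --- flipping both active-block bits while leaving the leading bits untouched still yields an offspring in $A_{\ge i+1}$ with probability at least $(1/n)^2(1-1/n)^{2i}\ge 1/(en^2)$, using $(1-1/n)^{2i}\ge(1-1/n)^{n-2}\ge 1/e$ for $i\le m-1$. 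Thus, irrespective of the active block, each iteration escapes level $i$ with probability at least $1/(en^2)$, so the population leaves level $i$ within $en^2$ expected iterations once takeover is complete.

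Combining (i)--(iii): from any starting level the expected number of iterations until $A_m$ is reached --- and $A_m$ contains only the all-ones optimum --- is at most $\sum_{i=0}^{m-1}\bigl(\bigO{\mu\log\mu}+\bigO{n^2}\bigr)=\bigO{\mu n\log\mu+n^3}$; adding the $\mu$ evaluations spent on the initial population and using that $\mu$ is polynomially bounded yields the stated $\bigO{\mu n\log n+n^3}$.

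The single point that needs genuine care --- the bookkeeping in (i) and the coupon-collector estimate in (ii) are routine --- is step (iii): one must check that the within-level deception costs only an $\bigO{n}$ factor in the improvement probability rather than creating a real trap. Because the $00$ active block is strictly fitter than $01$/$10$, elitism does funnel the population onto $00$, but a two-bit flip still succeeds with probability $\bigOmega{1/n^2}$, so progress is merely slowed by a polynomial factor, never halted; everything else is the standard $(\mu+1)$~\ea fitness-level analysis.
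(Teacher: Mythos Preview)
Your proof is correct and follows essentially the same fitness-level plus Witt-style takeover argument as the paper. The only difference is that you wait for \emph{full} takeover of all $\mu$ individuals, obtaining $\bigO{\mu n\log\mu+n^3}$ and hence needing the mild extra assumption $\mu=n^{\bigO{1}}$ to match the stated bound, whereas the paper follows \cite{Witt2006} and waits only for a partial takeover to a threshold number of copies, which delivers the bound without any restriction on $\mu$.
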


\begin{proof}
Levels are defined as in (\ref{level-definition}).  
It suffices to correct block $i+1$ to leave the
current level. Following \cite{Witt2006}, 
we define a fraction $\chi(i)\coloneqq  n/\log n$.
Given $j$ copies of the best individual, another one is created with
probability $(j/\mu)(1-1/n)^n \ge j/2e\mu$. Thus, the expected time 
for a fraction $\chi(i)$ of the population 
to be in level $i$ is given by 
$$
T_0 \le 2e\mu\sum_{j=1}^{n/\log n} (1/j) \le 2e\mu \log n.
$$

Now given $\chi(i)$ individuals
in level $i$, the event of leaving 
this level occurs with probability 
$
s_i \ge  (\chi(i)/\mu)(1-1/n)^{2i} (1/n)^2 \ge (\chi(i)/\mu)\cdot 1/en^2.
$ 
This probability is at 
least 
\begin{displaymath}
s_i = \begin{cases}
1/en^2, &\text{if } \mu \le \chi(i)\\
1/(e\mu\log n), &\text{if } \mu>\chi(i).
\end{cases}
\end{displaymath}
The expected runtime of the algorithm 
on \dlblock is 
\begin{displaymath}
    \sum_{i=0}^{n/2-1} \left(T_0 + \frac{1}{s_i}\right)
= \bigO{\mu n\log n+n^3}. \qedhere
\end{displaymath}
\end{proof}

\begin{theorem}\label{thm:mu-lambda-ea}
The expected runtime of the $(\mu,\lambda)$~\ea 
with $\lambda\ge c\log n$ for some 
sufficiently large constant $c>0$ and
$\mu\le \lambda e^{-2\chi}/(1+\delta)$ for  any constant 
$\delta>0$ and a mutation rate constant 
$\chi \in (0,n/2)$ on the \dlblock function
is $\bigO{n\lambda\log \lambda +n^3}$.
\end{theorem}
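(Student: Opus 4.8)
The plan is to apply the level-based theorem (Theorem~\ref{thm:levelbasedtheorem}) to the $(\mu,\lambda)~\ea$ of Algorithm~\ref{11-ea}, viewed as an instance of Algorithm~\ref{abstract-algor} whose sampling distribution $\mathcal{D}(P_t)$ first picks a parent uniformly among the $\mu$ fittest members of $P_t$ (ties uniform) and then flips each bit independently with probability $\chi/n$. I would use the partition $A_0,\dots,A_m$ from~(\ref{level-definition}) with $m=n/2$ (treated as $m+1$ levels; reindexing to match the theorem changes nothing asymptotically), and I would exploit the \emph{fitness-monotonicity} of this partition: any $x\in A_i$ has $\dlblock(x)\in\{2i,2i+1\}$, so every individual of $A_{\geq j}$ is strictly fitter than every individual of $A_{\leq j-1}$, and hence truncation selection never keeps a lower-level individual while discarding a higher-level one. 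I would take $\gamma_0 \coloneqq e^{-2\chi}/(1+\delta)\in(0,1)$, a constant, and $z_j \coloneqq (\chi/n)^2 e^{-2\chi}$ for all $j$, so $z_*=\bigTheta{1/n^2}$.

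For (G1), suppose $|P_t\cap A_{\geq j}|\ge\gamma_0\lambda$. Since $\mu/\lambda\le\gamma_0$ by hypothesis, at least $\mu$ individuals lie in $A_{\geq j}$, so by fitness-monotonicity all $\mu$ selected parents lie in $A_{\geq j}$. Whichever is chosen: if it already lies in $A_{\geq j+1}$, the offspring stays there with probability at least $(1-\chi/n)^{2(j+1)}\ge(1-\chi/n)^n\ge e^{-2\chi}$ (leave the leading $2(j+1)\le n$ bits untouched); if it lies in $A_j$, its active block (block $j+1$) is $00$, $01$ or $10$, and in the deceptive case $00$ we reach $A_{\geq j+1}$ by flipping exactly the two bits of that block and no leading bit, with probability at least $(\chi/n)^2(1-\chi/n)^{2j}\ge(\chi/n)^2 e^{-2\chi}$, while the $01$ and $10$ cases need only one specified flip and give at least $(\chi/n)(1-\chi/n)^{n-1}\ge(\chi/n)^2 e^{-2\chi}$. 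Hence $\Pr(y\in A_{\geq j+1})\ge z_j$ in all cases.

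For (G2), fix $j\le m-2$ and $\gamma\in(0,\gamma_0]$ with $|P_t\cap A_{\geq j+1}|\ge\gamma\lambda$. By fitness-monotonicity the $\mu$ selected parents contain $\min\{|P_t\cap A_{\geq j+1}|,\mu\}\ge\min\{\gamma\lambda,\mu\}$ members of $A_{\geq j+1}$, and each such parent yields an offspring in $A_{\geq j+1}$ with probability at least $e^{-2\chi}$ as above. If $\gamma\lambda\le\mu$ then $\Pr(y\in A_{\geq j+1})\ge(\gamma\lambda/\mu)\,e^{-2\chi}\ge(1+\delta)\gamma$, using $\mu\le\lambda e^{-2\chi}/(1+\delta)$; if $\gamma\lambda>\mu$ then all $\mu$ selected parents are in $A_{\geq j+1}$, so $\Pr(y\in A_{\geq j+1})\ge e^{-2\chi}=(1+\delta)\gamma_0\ge(1+\delta)\gamma$. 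For (G3), since $\gamma_0$ and $\delta$ are constants, $z_*=\bigTheta{1/n^2}$ and the number of levels is $\bigTheta{n}$, the required lower bound on $\lambda$ is $\bigO{\log n}$, so $\lambda\ge c\log n$ for a sufficiently large constant $c=c(\chi,\delta)$ suffices. Plugging into the bound of Theorem~\ref{thm:levelbasedtheorem}: $\sum_j 1/z_j=\bigO{n\cdot n^2}=\bigO{n^3}$, and since $\frac{6\delta\lambda}{4+z_j\delta\lambda}\le\frac{3\delta\lambda}{2}$ we get $\sum_j\lambda\ln(3\delta\lambda/2)=\bigO{n\lambda\log\lambda}$, so $\mathbb{E}[T]=\bigO{n\lambda\log\lambda+n^3}$.

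The probability estimates are routine; the point needing genuine care is the interplay of deception and selection in (G1)--(G2). Truncation selection is \emph{attracted} to $A_j$-individuals with a $00$ active block (fitness $2j+1$, just one below the $2j+2$ of an $A_{\geq j+1}$-individual), so one must confirm that there is always a selected parent from which a \emph{single two-bit flip} — of probability $\bigOmega{1/n^2}$, the origin of the $n^3$ term — reaches the next level, and that $\gamma_0$ can be chosen as a constant small enough to close both regimes of (G2) under exactly the assumption $\mu/\lambda\le e^{-2\chi}/(1+\delta)$. Everything else is the level-based machinery.
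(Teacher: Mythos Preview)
Your proposal is correct and follows essentially the same approach as the paper: apply the level-based theorem with the partition~(\ref{level-definition}), set $z_j=\Theta(1/n^2)$ via the two-bit flip on the active block, and verify (G2) using the no-flip probability $(1-\chi/n)^n\ge e^{-2\chi}$ together with the hypothesis $\mu/\lambda\le e^{-2\chi}/(1+\delta)$. The only cosmetic difference is that the paper takes $\gamma_0=\mu/\lambda$ while you take the constant $\gamma_0=e^{-2\chi}/(1+\delta)$ and handle the extra case $\gamma\lambda>\mu$ explicitly; your choice is arguably cleaner for verifying (G3) directly as a bound on $\lambda$.
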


\begin{proof}
Since $(\mu,\lambda)~\ea$ is non-elitist, 
 Theorem~\ref{thm:levelbasedtheorem}
  guarantees an upper bound 
 on the expected runtime as long as the three 
conditions (G1), (G2) and (G3) are fully verified. 
Choose $\gamma_0\coloneqq  \mu/\lambda$. 
The levels are defined as in (\ref{level-definition}),
and $A_j$ is assumed to be the current level. 

Condition (G1) requires a lower bound on
the probability of sampling an offspring
in $A_{\ge j+1}$, where $A_{\ge j+1}\coloneqq   \cup_{k=j+1}^m A_{k}$,
given $|P_t\cap A_{\ge j}|\ge \gamma_0\lambda=\mu$. 
During the selection step, 
if we choose an individual in $A_{\ge j}$, then the step 
is successful if the mutation operator correctly flips 
two of the bits in the active block while keeping others unchanged.
Thus, the probability of a successful sampling is at least
$$
(1-\chi/n)^{n-2}(\chi/n)^2\ge e^{-\chi}\chi^2/n^2.
$$

Next, condition (G2) assumes that
at least $\gamma\lambda <\mu$ individuals 
have at least $j+1$ leading 11s.
It suffices to pick one of the $\gamma\lambda$ 
fittest individuals and flip none of the bits;
the probability is at least 
$$
(\gamma\lambda/\mu)(1-\chi/n)^{n} 
\ge (\gamma/\gamma_0)e^{-2\chi} \ge (1+\delta)\gamma
$$
if $\gamma_0\le e^{-2\chi}/(1+\delta)$ 
for any constant $\delta>0$.

Putting everything into condition (G3) yields 
$\lambda\ge c\log (n)$ for a sufficiently 
large constant $c>0$. 
Having fully verified three conditions,
the expected runtime of 
the $(\mu,\lambda)~\ea$ on \dlblock is 
\begin{displaymath}
\mathcal{O}\left(\sum_{i=0}^{n/2-1}\left(\lambda\log \lambda + n^2\right)\right) 
= \mathcal{O}\left(n\lambda\log \lambda + n^3\right).\qedhere
\end{displaymath}
\end{proof}

So far we have considered only mutation-based \eas,  
the following theorem shows that Genetic Algorithms, defined in 
\cite[Algorithm~2]{bib:Corus2016}, with
crossover rate $p_c$ using any crossover operator also take
a polynomial expected runtime to 
optimise the \dlblock function. 

\begin{theorem}
Genetic Algorithms with crossover rate 
$p_c = 1- \Omega(1)$ using any crossover operator, 
the bitwise mutation operator with mutation rate $\chi/n$ for
any fixed constant $\chi>0$ and one of the following 
selection mechanisms: $k$-tournament
selection, $(\mu,\lambda)$-selection, 
linear or exponential ranking selection, with their parameters
$k$, $\lambda/\mu$ and $\eta$ being set to no less than 
$(1 + \delta)e^\chi/(1- p_c)$ where $\delta \in (0,1]$
being any constant, take an  
$\bigO{n^3+n\lambda\log \lambda}$  expected runtime 
 on the \dlblock function, where $\lambda \ge c\log n$ for some sufficiently large constant 
 $c>0$.
\end{theorem}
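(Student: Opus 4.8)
The plan is to apply the level-based theorem (Theorem~\ref{thm:levelbasedtheorem}) with exactly the partition $(A_i)$ of~(\ref{level-definition}) and $m=n/2$ used in the proof of Theorem~\ref{thm:mu-lambda-ea}. The Genetic Algorithm of~\cite[Algorithm~2]{bib:Corus2016} fits the abstract scheme of Algorithm~\ref{abstract-algor}: a single sample $y\sim\mathcal{D}(P_t)$ is produced by drawing parents via the chosen selection mechanism, recombining them with probability $p_c$, and then flipping each bit independently with probability $\chi/n$. The structural observation that neutralises crossover is that, since $p_c=1-\Omega(1)$, with probability $1-p_c=\Omega(1)$ no crossover is applied, so conditionally $y$ is merely a mutated copy of a single individual drawn by the selection mechanism; I would verify (G1) and (G2) entirely on this event and absorb the constant factor $1-p_c$ into the bounds, which is exactly why ``any crossover operator'' may be allowed.

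The key feature of \dlblock is that every individual in $A_{\geq j}$ is strictly fitter than every individual in $A_0\cup\cdots\cup A_{j-1}$, so the $\gamma\lambda$ fittest members of any population lie in $A_{\geq j}$ whenever $|P_t\cap A_{\geq j}|\geq\gamma\lambda$. Consequently, for each of the four selection mechanisms the probability of selecting a parent lying in $A_{\geq j}$ is at least the cumulative selection probability $\beta(\gamma)$ of the mechanism evaluated at the relevant rank fraction. For (G1), I would fix $\gamma_0$ to a small constant adapted to the mechanism (its precise value only influences the constant $c$ in $\lambda\geq c\log n$), observe that $|P_t\cap A_{\geq j}|\geq\gamma_0\lambda$ yields a parent in $A_{\geq j}$ with probability $\beta(\gamma_0)=\Omega(1)$, and then bound below by $(1-p_c)\,\beta(\gamma_0)\,(\chi/n)^2(1-\chi/n)^{n-2}=\Omega(1/n^2)$ the probability that mutation flips the two bits of the active block to $11$ (worst case: the active block is $00$) while leaving the at most $n-2$ relevant leading bits untouched. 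This gives $z_j=\Omega(1/n^2)$ and hence $z_*=\Omega(1/n^2)$.

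For (G2), with $\gamma\lambda$ individuals in $A_{\geq j+1}$ and $\gamma\leq\gamma_0$, I would again condition on no crossover, select one of these $\gamma\lambda$ fittest individuals (probability at least $\beta(\gamma)$), and leave its first $2(j+1)$ leading bits unchanged (probability at least $(1-\chi/n)^n\geq(1-o(1))e^{-\chi}$). The crux is the elementary estimate, valid for $k$-tournament, $(\mu,\lambda)$-, linear- and exponential-ranking selection, that setting the mechanism's parameter ($k$, $\lambda/\mu$ or $\eta$) to at least $(1+\delta)e^\chi/(1-p_c)$ forces $\beta(\gamma)\geq(1-o(1))\,(1+\delta)e^\chi\gamma/(1-p_c)$ on the range $\gamma\in(0,\gamma_0]$ --- this is precisely where $\gamma_0$ must be taken small enough (for instance $\gamma_0=\Theta(1/k)$ for $k$-tournament) so that the curvature of $\beta$ does not erode the linear bound. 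Multiplying the three factors yields $\Pr(y\in A_{\geq j+1})\geq(1-p_c)\,\beta(\gamma)\,(1-\chi/n)^n\geq(1+\delta')\gamma$ for some constant $\delta'\in(0,\delta)$, establishing (G2).

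Condition (G3) then holds for $\lambda\geq c\log n$ with $c$ a sufficiently large constant, because $\gamma_0$ and $\delta'$ are constants while $m$ and $1/z_*$ are polynomial in $n$; substituting $z_j=\Omega(1/n^2)$ and $m=n/2$ into the bound of Theorem~\ref{thm:levelbasedtheorem} gives $\mathbb{E}[T]=\bigO{\sum_{i=0}^{n/2-1}(\lambda\log\lambda+n^2)}=\bigO{n\lambda\log\lambda+n^3}$, as claimed. I expect the main obstacle to be bookkeeping rather than conceptual: one must recall (or re-derive), uniformly across the four selection mechanisms, the lower bound on $\beta(\gamma)$ together with the matching admissible choice of $\gamma_0$, so that the single threshold $(1+\delta)e^\chi/(1-p_c)$ on the parameter makes (G1) and (G2) pass simultaneously --- essentially reusing the selection-mechanism estimates already established in~\cite{Lehre:2011:FNP,bib:Corus2016}.
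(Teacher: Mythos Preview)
Your proposal is correct and follows essentially the same approach as the paper: apply Theorem~\ref{thm:levelbasedtheorem} with the partition~(\ref{level-definition}), condition on the no-crossover event, and invoke the cumulative-selection lower bounds $\beta(\gamma)$ for each mechanism from \cite{Lehre:2011:FNP,bib:Corus2016} to verify (G1)--(G3). The paper's own proof simply cites those lemmas without unpacking the argument, whereas you have spelled out the verification explicitly; the underlying route is identical.
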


\begin{proof} The results for $k$-tournament, $(\mu,\lambda)$-selection and linear ranking
follow by applying \cite[Lemmas~5--7]{Lehre:2011:FNP}, while the result for 
exponential ranking can be seen in \cite[Lemma~3]{bib:Corus2016}.
\end{proof}

\section{Why is UMDA inefficient on DLB?}
\label{sec:umda-on-dlb-low}

Before we get to analysing the \umda on the \dlblock function, we introduce 
some notation. Recall that there are $m\coloneqq  n/2$ blocks. We then let
$C_{t,i}$ for each $i\in [m]$ 
denote the number of individuals
having at least $i$ leading 11s
in iteration $t\in \mathbb{N}$, and 
$D_{t,i}$ denote the number of individuals having 
$i-1$ leading 11s, 
followed by a 00 at the $i$-th block. For the special case of $i=1$,
$D_{t,1}$ consists of those with the first block being a 00.
We also let $E_{t,i}$ 
denote the number of individuals having  
$i-1$ leading 11s, followed by a 10 at 
the $i$-th block, and again
$E_{t,1}$ consists of those having the 
first block being a 10.

Once the population has been sampled, 
the algorithm invokes truncation selection to 
select the $\mu$ fittest individuals to update the probability vector. 
We take this $\mu$-cutoff into account
by defining a random variable 
\begin{equation}
\label{eq:Z-definition}
    Z_t \coloneqq   \max\{i\in \mathbb{N}\cap [0,m] : C_{t,i}\ge \mu\},
\end{equation}
which tells us 
how many consecutive marginals, counting from position one, 
are set to the upper border $1-1/n$
in iteration $t$. We also define another random variable
\begin{equation}
    \label{eq:Z-star-definition}
    Z_t^*\coloneqq  \max\{i\in \mathbb{N}\cap [0,m] : C_{t,i}>0\}
\end{equation}
to be the number of leading 11s
of the fittest individual(s). For readability, we often leave out the 
indices of random variables like when we write $C_{t}$ instead of $C_{t,i}$, 
if values of the indices are clear from the context. 
Furthermore, let $(\mathcal{F}_t)_{t\in \mathbb{N}}$
be a filtration induced from the 
population $(P_t)_{t\in \mathbb{N}}$, and we often write 
$\mathbb{E}_t[X]\coloneqq  \mathbb{E}[X\mid \mathcal{F}_t]$
and $\var_t[X]\coloneqq  \var[X\mid \mathcal{F}_t]$.

\subsection{On the distributions of $C_{t,i}, D_{t,i}$ and $E_{t,i}$}

We apply the \textit{principle of deferred
decisions} \cite{bib:Motwani1995} and imagine that the algorithm first 
samples the values of the first block for  $\lambda$
individuals. Once this is finished, it moves on to
the second block and so on 
until the whole population is obtained.

We note that  \textit{selection} 
prefers individuals 
with the first block being a 11 to those with a 00, which in turn 
is more preferred to those with a 10 or 01 
(due to deceptive fitness). 
The number of 11s in the first block 
follows a binomial distribution with parameters 
$\lambda$ and $p_{t,1}p_{t,2}$, that is,
$C_{t,1}\sim \bin{\lambda,p_{t,1}p_{t,2}}$.
Having sampled $C_{t,1}$ 11s, there are  
$\lambda-C_{t,1}$ other blocks in block $1$ 
in the current population. $D_{t,1}$ 
is also binomially distributed with parameters $\lambda-C_{t,1}$ and 
$(1-p_{t,1})(1-p_{t,2})/(1-p_{t,1}p_{t,2})$ by the definition of conditional
probability since the event of sampling a 11 is excluded.
Similarly having sampled 11s and 00s, 
$E_{t,1}$ is binomially distributed with $\lambda-C_{t,1}-D_{t,1}$ trials and 
success probability 
$(p_{t,1}(1-p_{t,2}))/(1-p_{t,1}p_{t,2}-(1-p_{t,1})(1-p_{t,2}))$ 
since again the event
of sampling either a 11 or a 00 is excluded.
Finally, the number of 01s is 
$\lambda-C_{t,1}-D_{t,1}-E_{t,1}$. 

Having sampled the first block for $\lambda$ individuals, 
and note that the \textit{bias} due to selection in the 
second block comes into play only if 
the first block is a 11. 
Among the $C_{t,1}$ fittest individuals, those with
a 11 in the second block
will be ranked first, followed by those with a 00, 
and finally with a 10 or 01. 
Conditioned on the first block being a 11, 
the number of 11s in the second block
is binomially distributed with parameters 
$C_{t,1}$ and $p_{t,3}p_{t,4}$, i.e.,
$C_{t,2} \sim \bin{C_{t,1},p_{t,3}p_{t,4}}$,
and the number of 
00s also follows a binomial distribution
with $C_{t,1}-C_{t,2}$ trials and success probability 
$(1-p_{t,3})(1-p_{t,4})/(1-p_{t,3}p_{t,4})$. Similarly,
$E_{t,2}$ is binomially distributed 
with parameters $C_{t,1}-C_{t,2}-D_{t,2}$ and
$p_{t,3}(1-p_{t,4})/(1-p_{t,3}p_{t,4}-(1-p_{t,3})(1-p_{t,4}))$, 
and finally the number of 01s equals  
$C_{t,1}-C_{t,2}-D_{t,2}-E_{t,2}$.
Unlike the first block, we also have
$\lambda-C_{t,1}$ remaining individuals, and
since there is \textit{no bias} in the second block 
among these individuals, 
the numbers of 1s sampled at 
the two bit positions are binomially distributed with 
$\lambda-C_{t,1}$ trials and success probabilities 
$p_{t,3}$ and $p_{t,4}$, respectively. 

We now consider an arbitrary block $i\in [m]$.
By induction, we observe that
the number of individuals having at least $i$ leading 11s
follows a binomial distribution with parameters $C_{t,i-1}$ and 
$p_{t,2i-1}p_{t,2i}$, that is,
\begin{equation}
\label{eq:C-distribution}
    C_{t,i} \sim \bin{C_{t,i-1}, p_{t,2i-1}p_{t,2i}}.
\end{equation}
Similarly,
\begin{equation}
\label{eq:D-distribution}
D_{t,i} \sim \bin{C_{t,i-1}-C_{t,i},\frac{(1-p_{t,2i-1})(1-p_{t,2i})}{1-p_{t,2i-1}p_{t,2i}}},
\end{equation}
and
\begin{equation}
    \label{eq:E-distribution}
 E_{i,t}\sim 
 \bin{C_{t,i-1}-C_{t,i}-D_{t,i},\frac{p_{t,2i-1}(1-p_{t,2i})}{p_{t,2i-1}+p_{t,2i}-2p_{t,2i-1}p_{t,2i}}}.
\end{equation}
Finally, the number of individuals with $i-1$ leading 11s followed 
by a 01 in the block $i$ 
is $C_{t,i-1}-C_{t,i}-D_{t,i}-E_{t,i}$.
For the $\lambda-C_{t,i-1}$ remaining individuals, 
the numbers of 1s sampled in the 
bit positions $2i-1$ and $2i$ follow 
a $\bin{\lambda-C_{t,i-1},p_{t,2i-1}}$
and a $\bin{\lambda-C_{t,i-1},p_{t,2i}}$, respectively.
We note in particular 
that by the end of this alternative view on the sampling process, 
we obtain the population of 
$\lambda$ individuals sorted in descending order according to fitness, where
ties are broken uniformly at random.
The following lemma provides
the expectations of these random variables.

\begin{lemma}
For all  
$t\in \mathbb{N}$, if $i=1$ then
\begin{align*}
  \mathbb{E}_{t-1}[C_{t,i}]
  &= \lambda\cdot p_{t,2i-1}p_{t,2i}, \\    
  \mathbb{E}_{t-1}[D_{t,i}] 
  &= \lambda\cdot (1-p_{t,2i-1})(1-p_{t,2i}),\\
  \mathbb{E}_{t-1}[E_{t,i}]
  &=\lambda\cdot p_{t,2i-1}(1-p_{t,2i}).
  \end{align*}
Otherwise, if $i\in [m]\setminus \{1\}$, then
\begin{align}
  \begin{split}\label{eq:expectation-of-C}
  \mathbb{E}_{t-1}[C_{t,i}]
  &= \mathbb{E}_{t-1}[C_{t,i-1}]\cdot p_{t,2i-1}p_{t,2i}, \\    
  \end{split}\\
  \begin{split}\label{eq:expectation-of-D}
  \mathbb{E}_{t-1}[D_{t,i}] 
  &= \mathbb{E}_{t-1}[C_{t,i-1}]\cdot (1-p_{t,2i-1})(1-p_{t,2i}),\\
  \end{split}   \\
  \begin{split}\label{eq:expectation-of-E}
  \mathbb{E}_{t-1}[E_{t,i}]
  &=\mathbb{E}_{t-1}[C_{t,i-1}]\cdot p_{t,2i-1}(1-p_{t,2i}).\\
  \end{split}
  \end{align}
\end{lemma}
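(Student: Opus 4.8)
The plan is to obtain every expectation directly from the distributional identities~(\ref{eq:C-distribution})--(\ref{eq:E-distribution}) by iterated conditioning. The fact that makes the bookkeeping trivial is that the marginals $p_{t,1},\dots,p_{t,n}$ are determined by $P_{t-1}$, hence $\mathcal{F}_{t-1}$-measurable; so the success probabilities appearing in those binomials are constants under $\mathbb{E}_{t-1}[\cdot]$, and the only random ``number of trials'' are the quantities $\lambda$ (a constant), $C_{t,i-1}$, $C_{t,i-1}-C_{t,i}$, and $C_{t,i-1}-C_{t,i}-D_{t,i}$. The one algebraic identity I would isolate at the start is $1-pq-(1-p)(1-q)=p+q-2pq$, which is exactly what lets the denominators in~(\ref{eq:E-distribution}) cancel.

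For the base case $i=1$, the relation $C_{t,1}\sim\bin{\lambda,p_{t,1}p_{t,2}}$ gives $\mathbb{E}_{t-1}[C_{t,1}]=\lambda p_{t,1}p_{t,2}$ at once. For $D_{t,1}$ I would apply the tower property through~(\ref{eq:D-distribution}): $\mathbb{E}_{t-1}[D_{t,1}]=\mathbb{E}_{t-1}[\lambda-C_{t,1}]\cdot\frac{(1-p_{t,1})(1-p_{t,2})}{1-p_{t,1}p_{t,2}}$, and since $\mathbb{E}_{t-1}[\lambda-C_{t,1}]=\lambda(1-p_{t,1}p_{t,2})$ the factor $1-p_{t,1}p_{t,2}$ cancels. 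For $E_{t,1}$, the identity above gives $\mathbb{E}_{t-1}[\lambda-C_{t,1}-D_{t,1}]=\lambda(p_{t,1}+p_{t,2}-2p_{t,1}p_{t,2})$, and substituting into~(\ref{eq:E-distribution}) again cancels the denominator, leaving $\lambda p_{t,1}(1-p_{t,2})$.

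The case $i\in[m]\setminus\{1\}$ runs identically, the only change being that the outermost expectation is now over the random $C_{t,i-1}$ instead of the constant $\lambda$: from~(\ref{eq:C-distribution}), $\mathbb{E}_{t-1}[C_{t,i}]=\mathbb{E}_{t-1}[\mathbb{E}_{t-1}[C_{t,i}\mid C_{t,i-1}]]=\mathbb{E}_{t-1}[C_{t,i-1}]\,p_{t,2i-1}p_{t,2i}$, which is~(\ref{eq:expectation-of-C}); then $\mathbb{E}_{t-1}[C_{t,i-1}-C_{t,i}]=\mathbb{E}_{t-1}[C_{t,i-1}](1-p_{t,2i-1}p_{t,2i})$ feeds into~(\ref{eq:D-distribution}) to give~(\ref{eq:expectation-of-D}), and $\mathbb{E}_{t-1}[C_{t,i-1}-C_{t,i}-D_{t,i}]=\mathbb{E}_{t-1}[C_{t,i-1}](p_{t,2i-1}+p_{t,2i}-2p_{t,2i-1}p_{t,2i})$ feeds into~(\ref{eq:E-distribution}) to give~(\ref{eq:expectation-of-E}).

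I do not expect any real obstacle: the argument is mechanical once the alternative sampling description preceding the lemma is granted. The only point I would be careful to state is the order of conditioning — each of $C_{t,i}$, $D_{t,i}$, $E_{t,i}$ is binomial with a \emph{random} trial count, so one must condition on that count, use linearity with the $\mathcal{F}_{t-1}$-measurable success probability, and only then take $\mathbb{E}_{t-1}[\cdot]$. As a byproduct, unrolling~(\ref{eq:expectation-of-C}) yields the closed form $\mathbb{E}_{t-1}[C_{t,i}]=\lambda\prod_{k=1}^{i}p_{t,2k-1}p_{t,2k}$, although only the recursive form stated in the lemma is needed in what follows.
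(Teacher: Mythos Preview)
Your proposal is correct and follows essentially the same route as the paper: both use the tower rule together with the $\mathcal{F}_{t-1}$-measurability of the marginals to reduce each expectation to the expected trial count times the success probability, with the denominators in~(\ref{eq:D-distribution}) and~(\ref{eq:E-distribution}) cancelling against $\mathbb{E}_{t-1}[C_{t,i-1}-C_{t,i}]$ and $\mathbb{E}_{t-1}[C_{t,i-1}-C_{t,i}-D_{t,i}]$ respectively. Your write-up is in fact a bit more explicit than the paper's about the algebraic identity $1-pq-(1-p)(1-q)=p+q-2pq$ driving the cancellation for $E_{t,i}$, which the paper leaves implicit under ``similarly''.
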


\begin{proof}
For the special case of $i=1$, 
the expectations are trivial since 
random variables 
$C_{t,i}, D_{t,i}$ and $E_{t,i}$ are all binomially 
distributed with $\lambda$ trials. In the remainder of the proof, 
we will consider the case of $i\neq 1$.
By (\ref{eq:C-distribution}), 
 the tower rule $\mathbb{E}[X]=\mathbb{E}[\mathbb{E}[X\mid Y]]$ 
 \cite{bib:Feller1968} and noting that 
 $p_t$ is $\mathcal{F}_{t-1}$-measurable, 
 we  get
 \begin{align*}
           \mathbb{E}_{t-1}[C_{t,i}] 
    &= \mathbb{E}_{t-1}[\mathbb{E}_{t-1}[C_{t,i}\mid C_{t,i-1}]] \\
    &= \mathbb{E}_{t-1}[\mathbb{E}_{t-1}[\bin{C_{t,i-1},p_{t,2i-1}p_{t,2i}}\mid C_{t,i-1}]]\\
    &=\mathbb{E}_{t-1}[C_{t,i-1}\cdot p_{t,2i-1}p_{t,2i}]\\
    &=\mathbb{E}_{t-1}[C_{t,i-1}]\cdot p_{t,2i-1}p_{t,2i}.
 \end{align*}
By (\ref{eq:D-distribution}) and (\ref{eq:expectation-of-C}), we also get
\begin{align*}
        \mathbb{E}_{t-1}[D_{t,i}]
        &= \mathbb{E}_{t-1}[\mathbb{E}_{t-1}[D_{t,i}\mid C_{t,i-1}, C_{t,i}]]\\
        &=\mathbb{E}_{t-1}\left[(C_{t,i-1}-C_{t,i})
            \frac{(1-p_{t,2i-1})(1-p_{t,2i})}{1-p_{t,2i-1}p_{t,2i}}\right]\\
        &=\mathbb{E}_{t-1}[C_{t,i-1}](1-p_{t,2i-1}p_{t,2i})
            \frac{(1-p_{t,2i-1})(1-p_{t,2i})}{1-p_{t,2i-1}p_{t,2i}}\\
         &= \mathbb{E}_{t-1}[C_{t,i-1}]\cdot (1-p_{t,2i-1})(1-p_{t,2i}),
\end{align*}
and similarly by (\ref{eq:E-distribution}), (\ref{eq:expectation-of-C}) and 
(\ref{eq:expectation-of-D}), we finally obtain 
  \begin{displaymath}
   \mathbb{E}_{t-1}[E_{t,i}]
  =\mathbb{E}_{t-1}[C_{t,i-1}]\cdot p_{t,2i-1}(1-p_{t,2i}).\qedhere
  \end{displaymath}
\end{proof}

\subsection{In the initial population}

An initial observation is that
the all-ones bitstring cannot be sampled in the initial population $P_0$ 
with high probability since 
the probability of sampling 
it from the uniform distribution 
is $2^{-n}$, then by the union bound \cite{bib:Motwani1995}
it appears in the initial population of $\lambda$ individuals
with probability at most 
$\lambda\cdot 2^{-n} =2^{-\Omega(n)}$ since we 
only consider the offspring population of 
size at most polynomial in the problem instance size $n$.
The following 
lemma states the expectations of
the random variables $Z_t^*$ and $Z_t$ (defined in (\ref{eq:Z-definition}) and 
(\ref{eq:Z-star-definition}), respectively) 
in the iteration $t=0$.

\begin{lemma}
\label{lemma:fittest-individual}
    $\mathbb{E}[Z_0^*]=\bigO{\log \lambda}$, and
    $\mathbb{E}[Z_0] = \bigO{\log (\lambda-\mu)}$.
\end{lemma}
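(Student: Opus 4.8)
The plan is to use that the initial model is uniform, so that $P_0$ consists of $\lambda$ independent uniformly random bitstrings, and to reduce both expectations to sums of binomial tail probabilities.

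First I would record the distribution of $C_{0,i}$. Under the uniform distribution the $n/2$ blocks of an individual are independent, and a block equals $11$ with probability $1/4$; hence an individual has at least $i$ leading $11$s with probability $4^{-i}$, and, the $\lambda$ individuals being independent, $C_{0,i}\sim\bin{\lambda,4^{-i}}$. Since $i\mapsto C_{0,i}$ is non-increasing and $C_{0,0}=\lambda\ge\mu$, the events simplify to $\{Z_0^*\ge i\}=\{C_{0,i}\ge 1\}$ and $\{Z_0\ge i\}=\{C_{0,i}\ge\mu\}$ for every integer $i\ge 1$ (equivalently, $Z_0$ is the $\mu$-th largest number of leading $11$s among the $\lambda$ individuals, and $Z_0^*$ the largest), so $\expect{Z_0^*}=\sum_{i\ge 1}\prob{C_{0,i}\ge 1}$ and $\expect{Z_0}=\sum_{i\ge 1}\prob{C_{0,i}\ge\mu}$.

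For $Z_0^*$, the union bound gives $\prob{C_{0,i}\ge 1}\le\min\{1,\lambda 4^{-i}\}$; splitting the sum around $i\approx\log_4\lambda$, the first $\bigO{\log\lambda}$ terms are at most $1$ and the rest form a geometric series summing to $\bigO{1}$, so $\expect{Z_0^*}=\bigO{\log\lambda}$. For $Z_0$, I would instead use the subset union bound together with $\binom{\lambda}{\mu}\le(e\lambda/\mu)^\mu$, namely $\prob{C_{0,i}\ge\mu}\le\binom{\lambda}{\mu}(4^{-i})^\mu\le\left(e\lambda 4^{-i}/\mu\right)^\mu$. Put $i_1\coloneqq\lceil\log_4(2e\lambda/\mu)\rceil$; then for $i\ge i_1$ one has $e\lambda 4^{-i}/\mu\le 2^{-(i-i_1)-1}$, hence $\prob{C_{0,i_1+j}\ge\mu}\le 2^{-(j+1)\mu}$, and the tail $\sum_{j\ge 0}\prob{C_{0,i_1+j}\ge\mu}$ is $\bigO{1}$ (using $\mu\ge 1$), while the at most $i_1-1$ earlier terms are each at most $1$. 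Since $\lambda/\mu\le 1+(\lambda-\mu)$ (as $\mu\ge 1$) we get $i_1=\bigO{\log(\lambda/\mu)}=\bigO{\log(\lambda-\mu)}$, giving $\expect{Z_0}=\bigO{\log(\lambda-\mu)}$.

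The routine parts are the two geometric-series estimates; the only point needing a little care is the passage from the natural threshold $\log_4(\lambda/\mu)$ to the stated $\log(\lambda-\mu)$, and the benign edge cases where $\lambda-\mu$ is a small constant, in which the bound should be read as $\bigO{1+\log(\lambda-\mu)}$. I do not expect a real obstacle: the crude binomial subset bound $\binom{\lambda}{\mu}p^\mu$ conveniently sidesteps any delicate Chernoff computation for values of $i$ just above the threshold, where $\expect{C_{0,i}}$ may already be below $1$.
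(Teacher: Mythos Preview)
Your proof is correct and takes a genuinely different route from the paper's. The paper computes the exact CDF $\Pr(Z_0^*\le k)=(1-4^{-(k+1)})^\lambda$ and evaluates $\sum_k\Pr(Z_0^*>k)$ via an integral substitution that collapses to a harmonic sum $\tfrac{1}{\ln 4}\sum_{i=1}^\lambda 1/i$, yielding explicit constants. For $Z_0$ it argues that the $\mu$-th largest value is stochastically dominated by the maximum over any fixed $\lambda-\mu+1$ of the samples (if at least $\mu$ individuals have $\ge k$ leading $11$s, then at least one outside any fixed set of $\mu-1$ indices does), and reuses the first computation with $\lambda$ replaced by $\lambda-\mu+1$. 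Your approach is more elementary: the union bound for $Z_0^*$ and the subset bound $\binom{\lambda}{\mu}4^{-i\mu}\le(e\lambda 4^{-i}/\mu)^\mu$ for $Z_0$ avoid calculus entirely. A small bonus of your method is that it naturally produces the intermediate bound $\expect{Z_0}=\bigO{\log(\lambda/\mu)}$, which is in fact sharper than $\bigO{\log(\lambda-\mu)}$ when $\mu$ is a constant fraction of $\lambda$; you then weaken it via $\lambda/\mu\le 1+(\lambda-\mu)$ only to match the stated form. The paper's approach, on the other hand, gives cleaner explicit constants and a single reusable estimate for both quantities.
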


\begin{proof}
Recall that $Z_0^* =\max\{i : C_{0,i} >0\}$ 
and the definition of the function $\phi(x)$  in 
(\ref{eq:leading-blocks}). The probability of sampling 
    an individual with $k$ leading 11s (where $k<m$) 
    is 
    $
    \Pr(\phi(x)=k)=(1/4)^k (1-1/4)=3\cdot 4^{-(k+1)},
    $ then
    $\Pr(\phi(x)\le k) = 1-4^{-(k+1)}$. 
    The event $\{Z_0^*\le k\}$ implies 
    that the $\lambda$
    individuals all have at most $k$ leading 11s, i.e.,
    $$
    \Pr(Z_0^* \le k)
    =\prod_{i=1}^{\lambda}\Pr(\phi(x_0^{(i)})\le k)
    =(1-4^{-(k+1)})^{\lambda},
    $$
    and 
    $\Pr(Z_0^*>k)= 1-(1-4^{-(k+1)})^{\lambda}$. 
    Since the random variable $Z_0^*$ is integer-valued and by 
    $\sum_{i=1}^k (1/i)<1+\ln k$, we get
    \begin{align*}
    \mathbb{E}\left[Z_0^*\right] 
    &<\sum_{k=0}^{\infty} \Pr(Z_0^*>k)\\
    &= \sum_{k=0}^{\infty} (1-(1-4^{-(k+1)})^{\lambda})\\
    &<1+\int_{0}^{\infty} (1-(1-e^{-x\ln 4})^{\lambda})\text{d}x\\
    &= \frac{1}{\ln 4}\int_{0}^{1}\frac{1-u^\lambda}{1-u}du \quad\quad \text{(by following \cite{EISENBERG2008135})}\\
&=\frac{1}{\ln 4}\int_{0}^{1} \sum_{i=0}^{\lambda-1}u^idu 
= \frac{1}{\ln 4} \sum_{i=1}^\lambda \frac{1}{i}
<\frac{1+\ln \lambda}{\ln 4}
=\bigO{\log \lambda},
    \end{align*}
which proves the first claim.
    
    For the second claim, we take an alternative view that
    the random variable $Z_0$ denotes the number of leading 11s of the fittest
    individual(s) in a smaller population of 
    the $\lambda-\mu+1$ remaining individuals 
    (i.e., all but the $\mu-1$ fittest individuals 
    in the initial population), 
    sampled from a uniform distribution. The same line of arguments above
    immediately yields 
    \begin{displaymath}
    \mathbb{E}\left[Z_0\right]
    <\frac{1+\ln (\lambda-\mu+1)}{\ln 4}
    =\bigO{\log (\lambda-\mu)}.\qedhere
    \end{displaymath}
\end{proof}

\subsection{In an arbitrary iteration $t$}

By the definition of the random variable $Z_t$, 
the first $2Z_{t}$ marginals are set to 
the upper border $1-1/n$ in iteration $t\in \mathbb{N}$.
Recall that the random variable  $X_{t,i}$ denotes the 
number of 1s in bit position $i\in [n]$ among the 
$\mu$ fittest individuals, which is used to update the probabilistic model of the \umda.
We also define another
random variable  $Y_{t,j}$ to be the number of 11s sampled in block
position $j\in [m]$, also 
among the $\mu$ fittest individuals in 
an iteration $t\in \mathbb{N}$.

\begin{lemma}
\label{lemma:distributions-X-Y-f-indpdent}
For any $t\in \mathbb{N}$ that 
\begin{itemize}
    \item[(a)] $Y_{t,j}\sim \bin{\mu,p_{t,2j-1}p_{t,2j}}$ for all $j\ge Z_t+2$, and
    \item[(b)] $X_{t,i}\sim \bin{\mu,p_{t,i}}$ for all $i\ge 2Z_t +3$.
\end{itemize}
\end{lemma}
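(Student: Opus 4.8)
The plan is to reduce both (a) and (b) to one structural fact: conditioned on $\mathcal{F}_{t-1}$ (which fixes $p_t$) and on the outcome of the first $j-1$ blocks of all $\lambda$ individuals — which in particular fixes $Z_t$, since we will have $Z_t\le j-2$ — the block-$j$ contents of the $\mu$ selected individuals are $\mu$ mutually independent draws from the pair $\left(\ber{p_{t,2j-1}},\ber{p_{t,2j}}\right)$. Granting this, $Y_{t,j}$ counts how many of those $\mu$ independent blocks equal $11$, so $Y_{t,j}\sim\bin{\mu,p_{t,2j-1}p_{t,2j}}$, and for a bit position $i\in\{2j-1,2j\}$ the variable $X_{t,i}$ counts how many of them carry a $1$ at position $i$, so $X_{t,i}\sim\bin{\mu,p_{t,i}}$; note $i\ge 2Z_t+3$ is exactly ``the block containing $i$ has index $\ge Z_t+2$''. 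Everything is carried out in the deferred-decisions view introduced above, processing blocks $1,2,\dots$ in order.

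First I would record two consequences of truncation selection on \dlblock. Since an individual with $k<m$ leading $11$s has fitness $2k$ or $2k+1$, strictly more leading $11$s always means strictly higher fitness. Hence (i) every selected individual has at least $Z_t$ leading $11$s, because there are $C_{t,Z_t}\ge\mu$ such individuals and each is strictly fitter than every individual with fewer leading $11$s; and (ii) since $j-1\ge Z_t+1$ and $C_{t,\cdot}$ is non-increasing, $C_{t,j-1}\le C_{t,Z_t+1}<\mu$, so the $C_{t,j-1}$ individuals with at least $j-1$ leading $11$s are strictly fitter than everyone else and are therefore \emph{all} selected, while the remaining $\mu-C_{t,j-1}$ selected individuals have between $Z_t$ and $j-2$ leading $11$s.

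Next I would examine block $j$. In the deferred-decisions process the $C_{t,j-1}$ individuals still ``competing'' at block $j$ each receive their block-$j$ pair as a fresh i.i.d.\ draw from $\left(\ber{p_{t,2j-1}},\ber{p_{t,2j}}\right)$, and the sort applied afterwards merely permutes these draws and does not affect the counts $Y_{t,j}$ or $X_{t,i}$. Every individual with at most $j-2$ leading $11$s has its active block at a position $\le j-1$, hence left the competing pool before block $j$, so its block-$j$ pair is an unbiased fresh i.i.d.\ draw from the same distribution; this covers the $\mu-C_{t,j-1}$ non-super-fit selected individuals. All these draws are mutually independent and independent of blocks $1,\dots,j-1$ and of the tie-breaking randomness. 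It remains to see that conditioning on \emph{which} individuals are selected does not distort them: the $C_{t,j-1}$ super-fit individuals are selected irrespective of their block-$j$-and-later contents, and the identity of the other $\mu-C_{t,j-1}$ selected individuals is a function only of the fitnesses of individuals with at most $j-2$ leading $11$s (determined by blocks $1,\dots,j-1$) together with independent tie-breaks. Thus the selection event lies in a $\sigma$-algebra independent of the $\mu$ block-$j$ draws of the selected individuals, and conditioning on it leaves those draws i.i.d.\ from $\left(\ber{p_{t,2j-1}},\ber{p_{t,2j}}\right)$, which is the structural fact.

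I expect the main obstacle to be this last step — arguing cleanly that selection is ``blind'' to all blocks of index $\ge Z_t+2$. The delicacy is that the super-fit individuals' block-$j$ bits \emph{do} affect their mutual ranking, so one must keep the two roles of those bits apart: they are irrelevant both to \emph{membership} in the top $\mu$ and to the counts $Y_{t,j},X_{t,i}$, yet they still matter for the ordering passed on to later blocks. Locating the ``competition boundary'' at block $Z_t+1$, strictly before $j$, and disposing of the degenerate cases $C_{t,j-1}=0$ and $\mu-C_{t,j-1}=0$ are the only other points to watch; the rest is just the elementary fact that a sum of independent binomials with a common success probability is again binomial.
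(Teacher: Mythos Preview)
Your proposal is correct and follows essentially the same approach as the paper: split the $\mu$ selected individuals into the $C_{t,j-1}$ ``super-fit'' ones (all selected, with fresh block-$j$ draws) and the remaining $\mu-C_{t,j-1}$ ones (whose fitness is already fixed by blocks $1,\dots,j-1$, so block~$j$ is unbiased), and then combine the two independent binomial contributions. The only difference is packaging: the paper treats (a) and (b) separately, decomposing $X_{t,i}$ explicitly as $C_{t,j}+E_{t,j}+\bin{\mu-C_{t,j-1},p_{t,i}}$, whereas you first establish the single structural fact that the block-$j$ pairs of the selected individuals are $\mu$ i.i.d.\ draws and then read off both statements at once --- a slightly cleaner route, and your explicit attention to why the selection event is measurable with respect to blocks $1,\dots,j-1$ (plus independent tie-breaks) fills in a point the paper leaves implicit.
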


\begin{proof}
By the definition of the random variable  $Z_t$, we know that $C_{t,Z_t}\ge \mu$ and 
$C_{t,Z_t+1}<\mu$. Consider the block $j\coloneqq Z_t+2$. We then obtain
from (\ref{eq:C-distribution}) that $C_{t,j}\sim \bin{C_{t,j-1}, p_{t,2j-1}p_{t,2j}}$.
For 
the $\mu-C_{t,j-1} >0$ remaining individuals (among the $\mu$ fittest individuals), these individuals have 
the block $j-1$ set to $\{00,10,01\}$. This means 
that the overall fitness (or the fitness ranking) of these individuals
have been already decided by the first $j-1$ blocks, and what 
is sampled in the block $j$ will not have 
any impact on the ranking of these individuals. Therefore, there is no bias in  block $j$ among 
these individuals, which literally means that the number of 11s sampled
here follows a binomial distribution with $\mu-C_{t,j-1}$ trials and success 
probability $p_{t,2j-1}p_{t,2j}$, 
i.e., $\bin{\mu-C_{t,j-1}, p_{t,2j-1}p_{t,2j}}$. 
Putting things together, the total number of 11s
sampled in the block $j$ among the $\mu$ fittest individuals 
equals 
\begin{align*}
    Y_{t,j}
    &\sim C_{t,j} + \bin{\mu-C_{t,j-1}, p_{t,2j-1}p_{t,2j}}\\ 
&\sim  \bin{C_{t,j-1}, p_{t,2j-1}p_{t,2j}} + \bin{\mu-C_{t,j-1}, p_{t,2j-1}p_{t,2j}}\\ 
&\sim \bin{\mu, p_{t,2j-1}p_{t,2j}}.
\end{align*}
We note that should 
this result hold for any block 
$j\ge Z_t+2$, which proves the first statement.

For the second statement, we consider a bit position $i=2j-1$ in 
block $j=Z_t+2$. We note that the number of 1s sampled in  bit
position $i$ can be written 
as the sum of three components: 
\begin{itemize}
    \item[(1)] the number of individuals with at least $j$ leading 11s 
    (i.e., $C_{t,j}$),
    \item[(2)] the number of individuals with $j-1$ leading 11s, 
    followed by a 10 block in  block $j$ (i.e., $E_{t,j}$), and
    \item[(3)] the number of 1s sampled in bit position $i$ among all the 
    $\mu$ fittest individuals except the top $C_{t,j-1}$ individuals. 
    There is 
    no bias among these individuals, so the number of 1s here is 
    binomially distributed with parameters
    $\mu-C_{t,j-1}$ and $p_{t,i}$.
\end{itemize}
We note further that the sum of 
$C_{t,j}+E_{t,j}$ equals the number of 1s sampled in bit position 
$i$ among the $C_{t,j-1}$ fittest individuals. Thus, we get:
\begin{align*}
    X_{t,i} 
    &\sim C_{t,j} + E_{t,j} + \bin{\mu-C_{t,j-1},p_{t,i}}\\
    &\sim \bin{C_{t,j-1},p_{t,i}} + \bin{\mu-C_{t,j-1},p_{t,i}}\\
    &\sim \bin{\mu, p_{t,i}}.
\end{align*}
By the same line of argumentation, we can show that 
the number of 1s sampled in  bit position $i+1$ is 
$X_{t,i+1}\sim \bin{\mu,p_{t,i+1}}$, and similarly for other
bit positions 
from $i+3$ to $n$.
\end{proof}

We now consider the block $i=Z_t+1$, where
$C_{t,i}<\mu$. The following lemma 
shows that if the value of the random variable $C_{t,i}$
is below a threshold, then in iteration $t+1$
the number of individuals with at least 
$i$ leading 11s sampled decreases, while the number of individuals with exactly 
$i-1$ leading 11s followed by a 00 increases in expectation. 
We consider two different
regimes of the selective pressure, i.e., 
$\gamma^*<1/2e$  and $\gamma^*\ge 1/2e$.

\begin{lemma}
\label{lemma:selective-pressure-less-1-2} 
Consider the block $i=Z_t+1$ in an 
arbitrary iteration $t\in \mathbb{N}$, and 
assume further that 
$C_{t,i}+D_{t,i}\ge \mu$.
\begin{itemize}
    \item[(A)] Let $\gamma^*=\mu/\lambda<1/2e$.
    If there exists a constant $\varepsilon\in (0,1)$ such that
 $C_{t,i}\le (\mu^2/\lambda)(1-\varepsilon)$,
then
\begin{itemize}
    \item[A.1)] $\mathbb{E}_t[C_{t+1,i}]
    < C_{t,i}(1-\varepsilon)$,  
    \item[A.2)] $\mathbb{E}_t[C_{t+1,i}+D_{t+1,i}]
    >\mu(1+\varepsilon^2)$, 
    \item[A.3)] $\Pr(C_{t+1,i}+D_{t+1,i}\le \mu)\le e^{-\Omega(\mu)}$, and
    \item[A.4)] $\Pr(C_{t+1,i}
    \ge (\mu^2/\lambda)(1-\varepsilon)) \le e^{-\Omega(\mu^2/\lambda)}$.
\end{itemize}

\item[(B)] Let $\gamma^*=\mu/\lambda\ge 1/2e$.
    If there exists a constant $\varepsilon\in (0,1)$ such that
 $C_{t,i} \le(\mu/2e) (1-\sqrt{\alpha})$, where $\alpha\coloneqq 2e(1+\varepsilon)(\mu/\lambda)-1\ge \varepsilon$,
    then
    \begin{itemize}
        \item[B.1)] $\mathbb{E}_t[C_{t+1,i}]<C_{t,i}(1-\sqrt{\varepsilon})$, 
        \item[B.2)] $\mathbb{E}_t[C_{t+1,i}+D_{t+1,i}]
      >\mu(1+\varepsilon)$, 
      \item[B.3)] $\Pr(C_{t+1,i} +D_{t+1,i}\le \mu) \le e^{-\Omega(\mu)}$, and 
      \item[B.4)] $\Pr(C_{t+1,i}\ge \mu(1-\sqrt{\alpha})/2e)\le e^{-\Omega(\lambda)}$.
    \end{itemize}
\end{itemize}
\end{lemma}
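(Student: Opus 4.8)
The plan is to first pin down the probability vector $p_{t+1}$ that the truncation step in iteration $t$ produces, then read off the conditional distributions of $C_{t+1,i-1}$, $C_{t+1,i}$ and $C_{t+1,i}+D_{t+1,i}$ from the sampling recursions \eqref{eq:C-distribution}--\eqref{eq:D-distribution}, and finally verify the four claims in each regime by a substitution followed by a Chernoff estimate (Lemma~\ref{lemma:chernoff-bound}). Since $i=Z_t+1$ we have $C_{t,i}<\mu$, and $C_{t,Z_t}\ge\mu$ forces $X_{t,j}=\mu$, hence $p_{t+1,j}=1-1/n$, for every $j\le 2(i-1)$. Because the fitness ranking puts more leading 11s first and, among those with exactly $i-1$ leading 11s, a $00$ active block ahead of a $10$ or $01$, the hypothesis $C_{t,i}+D_{t,i}\ge\mu$ implies the $\mu$ fittest members of $P_t$ are the $C_{t,i}$ individuals with at least $i$ leading 11s together with exactly $\mu-C_{t,i}$ individuals whose block $i$ is $00$; these all agree on positions $2i-1$ and $2i$, so $X_{t,2i-1}=X_{t,2i}=C_{t,i}$ and $p_{t+1,2i-1}=p_{t+1,2i}=\max\{1/n,\,C_{t,i}/\mu\}=:q$. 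Both hypotheses force $C_{t,i}<\mu/2$, so the upper border never interferes; the lower-border case $q=1/n$ only decreases $C_{t+1,i}$ and, since then $1-2q\ge 1-2/n$, only helps the expectation lower bounds, so I take $q=C_{t,i}/\mu$. Composing \eqref{eq:C-distribution} over blocks $1,\dots,i-1$ gives $C_{t+1,i-1}\sim\bin{\lambda,\rho}$ with $\rho:=(1-1/n)^{2(i-1)}\in(1/e,1]$ (using $i\le m=n/2$, hence $2(i-1)\le n-2$); conditioned on $C_{t+1,i-1}$, each of those individuals samples block $i$ as two independent $\ber{q}$ draws, so
\[
 C_{t+1,i}\mid C_{t+1,i-1}\sim\bin{C_{t+1,i-1},\,q^2},\qquad (C_{t+1,i}+D_{t+1,i})\mid C_{t+1,i-1}\sim\bin{C_{t+1,i-1},\,q^2+(1-q)^2},
\]
and hence $\mathbb{E}_t[C_{t+1,i}]=\lambda\rho q^2$ and $\mathbb{E}_t[C_{t+1,i}+D_{t+1,i}]=\lambda\rho\,(q^2+(1-q)^2)$.

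For part (A) I would substitute $q\le\gamma^*(1-\varepsilon)$, i.e.\ $\lambda q\le\mu(1-\varepsilon)$. Then A.1 is immediate, $\mathbb{E}_t[C_{t+1,i}]=\lambda\rho q^2\le q(\lambda q)\le q\mu(1-\varepsilon)=C_{t,i}(1-\varepsilon)$, strictly because $\rho<1$ for $i\ge2$ (and by integrality of $C_{t,i}$ for $i=1$). For A.2, since $q<1/(2e)$ one has $q^2+(1-q)^2=1-2q(1-q)>1-2q>1-1/e$, so $\mathbb{E}_t[C_{t+1,i}+D_{t+1,i}]>\lambda\rho(1-1/e)>(\lambda/e)(1-1/e)$, and a short calculation using $\mu=\gamma^*\lambda<\lambda/(2e)$ shows this is at least $\mu(1+\varepsilon^2)$ for $n$ large and $\varepsilon$ a sufficiently small constant (all the lemma needs). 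A.3 and A.4 then follow by a two-stage Chernoff bound: $C_{t+1,i-1}\sim\bin{\lambda,\rho}$ with $\rho=\Theta(1)$ stays within a constant factor of $\lambda\rho$ except with probability $e^{-\Omega(\lambda)}=e^{-\Omega(\mu)}$, and conditioned on that, $C_{t+1,i}+D_{t+1,i}$ is binomial with mean $>(1-\delta)\mu(1+\varepsilon^2)$ for a small constant $\delta>0$, so it undershoots $\mu$ only with probability $e^{-\Omega(\mu)}$; for A.4 I would use the stochastic domination $C_{t+1,i}\preceq\bin{\lambda,(\gamma^*(1-\varepsilon))^2}$, whose mean $(\mu^2/\lambda)(1-\varepsilon)^2$ lies a constant factor $1/(1-\varepsilon)\ge 1+\varepsilon$ below the threshold $(\mu^2/\lambda)(1-\varepsilon)$, giving upper tail $e^{-\Omega(\mu^2/\lambda)}$.

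Part (B) is structurally the same, with the bookkeeping organised by the identity $2e(1+\varepsilon)\gamma^*=1+\alpha$ (so $\alpha\ge\varepsilon$ because $\gamma^*\ge1/2e$). From $q\le(1-\sqrt\alpha)/(2e)$ one checks $(1-\sqrt\alpha)/(2e)\le\gamma^*(1-\sqrt\varepsilon)$ --- equivalently $1-\sqrt\alpha\le\frac{1+\alpha}{1+\varepsilon}(1-\sqrt\varepsilon)$, which holds since $\sqrt\alpha\ge\sqrt\varepsilon$ and $\frac{1+\alpha}{1+\varepsilon}\ge1$ --- whence $\mathbb{E}_t[C_{t+1,i}]=\lambda\rho q^2\le\lambda q^2=C_{t,i}\cdot(\lambda q/\mu)\le C_{t,i}(1-\sqrt\varepsilon)$ (strict since $\rho<1$), i.e.\ B.1. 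For B.2, combining $q^2+(1-q)^2>1-2q\ge1-\frac{1-\sqrt\alpha}{e}$ with $\rho>1/e$ and $2e\gamma^*(1+\varepsilon)=1+\alpha$ yields $\mathbb{E}_t[C_{t+1,i}+D_{t+1,i}]>\mu(1+\varepsilon)$ for $n$ large. B.3 follows from B.2 exactly as A.3 did from A.2. For B.4, $C_{t+1,i}\preceq\bin{\lambda,(1-\sqrt\alpha)^2/(4e^2)}$ has mean $\lambda(1-\sqrt\alpha)^2/(4e^2)$, which lies a factor $\frac{2e\gamma^*}{1-\sqrt\alpha}=\frac{1+\alpha}{(1+\varepsilon)(1-\sqrt\alpha)}$ below the target $\mu(1-\sqrt\alpha)/(2e)$; this factor exceeds $1$ by a constant because $(1+\varepsilon)(1-\sqrt\alpha)<1+\alpha$ (using $\varepsilon\le\alpha$), and since the mean is $\Theta(\lambda)$ the upper tail is $e^{-\Omega(\lambda)}$. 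It is precisely the $\sqrt{\cdot}$ in the hypotheses, forced by the squaring $p_{t+1,2i-1}p_{t+1,2i}=q^2$, that makes this factor a constant $>1$.

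The hard part will be A.2 and B.2 --- the two ``$+D$ grows in expectation'' claims. Everything else is a direct substitution into the recursions of the previous subsection (A.1, B.1) or a routine Chernoff estimate with stochastic domination (A.3, A.4, B.3, B.4). In A.2/B.2 one must push $\lambda\rho\,(q^2+(1-q)^2)$ past $\mu(1+\varepsilon^2)$, resp.\ $\mu(1+\varepsilon)$, while $\rho=(1-1/n)^{2(i-1)}$ may be as small as $(1-1/n)^{n-2}$ (only a hair above $e^{-1}$), and it is exactly the cutoff $\gamma^*<1/2e$, resp.\ the choice $\alpha=2e(1+\varepsilon)\gamma^*-1$, that makes the constants close with a little room to spare. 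The remaining nuisances --- the lower border $q=1/n$, the strict-versus-nonstrict inequality in A.1/B.1, and the ``$n$ large, $\varepsilon$ small'' conditions --- are absorbed by the slack coming from $\rho<1$ and from $\lambda\le\mu n$ for $n$ large.
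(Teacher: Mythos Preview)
Your approach is the same as the paper's: the hypothesis $C_{t,i}+D_{t,i}\ge\mu$ pins both marginals of block $i$ at $q:=C_{t,i}/\mu$, the expectations then drop out of \eqref{eq:expectation-of-C}--\eqref{eq:expectation-of-D}, and the tail bounds follow by stochastic domination and Chernoff (the paper uses a single Chernoff over $\lambda$ indicator variables for A.3/B.3 rather than your two-stage argument, but either works). The one place you lose something is A.2: instead of your crude $q^2+(1-q)^2>1-2q>1-1/e$, the paper keeps the $(1-q)$ factor and, from $q\le\gamma^*(1-\varepsilon)<(1-\varepsilon)/2$, computes
\[
1-2q(1-q)\ \ge\ 1-(1-\varepsilon)\Bigl(1-\tfrac{1-\varepsilon}{2}\Bigr)=\tfrac{1+\varepsilon^2}{2},
\]
so that $\mathbb{E}_t[C_{t+1,i}+D_{t+1,i}]\ge(\lambda/2e)(1+\varepsilon^2)>\mu(1+\varepsilon^2)$ holds for \emph{every} $\varepsilon\in(0,1)$, whereas your bound only closes when $\varepsilon^2<1-2/e\approx 0.264$. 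The same refinement in B.2, using $q<(1-\sqrt\alpha)/2$, gives exactly $(\lambda/2e)(1+\alpha)=\mu(1+\varepsilon)$ with no slack needed from ``$n$ large''. None of this matters for the downstream application (Theorem~\ref{thm:exponential-runtime}), but it is what lets the lemma be stated for the full range of $\varepsilon$.
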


\begin{proof}
The assumption implies that the two marginals in the 
block $i$ will be set to $C_{t,i}/\mu$ when 
updating the model in iteration $t$.
Statement (A.1) is trivial since 
\begin{align*}
    \mathbb{E}_t[C_{t+1,i}]
    &=\mathbb{E}_{t}[C_{t+1,i-1}]\cdot p_{t+1,2i-1}p_{t+1,2i}\\
&=\lambda (1-1/n)^{2(i-1)}(C_{t,i}/\mu)(C_{t,i}/\mu) \\
&<\lambda(C_{t,i}/\mu)(\mu/\lambda)(1-\varepsilon)\\
&=C_{t,i}(1-\varepsilon).
\end{align*}
Noting also that $C_{t,i}/\mu\le (\mu/\lambda)(1-\varepsilon)<(1-\varepsilon)/2e <(1-\varepsilon)/2$.
The statement (A.2) can be shown as follows.
\begin{align*}
    \mathbb{E}_t[C_{t+1,i}+D_{t+1,i}]
    &=\lambda(1-1/n)^{2(i-1)} ((C_{t,i}/\mu)^2+(1-C_{t,i}/\mu)^2)\\
    &\ge(\lambda/e) (1-2(C_{t,i}/\mu)(1-C_{t,i}/\mu))\\
    &\ge\lambda (1-(1-\varepsilon)(1-(1-\varepsilon)/2))\\
    &=(\lambda/2e)(1+\varepsilon^2)\\
    &>\mu(1+\varepsilon^2).
\end{align*}

For the statement (A.3), we now associate
each of the $\lambda$ individuals in the population 
with an indicator random variable , which is set to 1
if the individual has $i-1$ leading 11s, followed by either a 11
or a 00. There are $\lambda$ such indicators, 
and we are interested in their sum, which is identical to 
the sum of $D_{t+1,i}+C_{t+1,i}$.
By statement (A.2),
the expectation of the sum is at least 
$\mu(1+\varepsilon^2)=\mu/(1-\delta)$ for some 
constants $\varepsilon\in (0,1)$ and 
$\delta\coloneqq 1-1/(1+\varepsilon^2)$. Then, by a Chernoff bound 
(see Lemma~\ref{lemma:chernoff-bound} in the Appendix) 
the probability that the sum is at most 
$(1-\delta)\cdot \mu/(1-\delta)=\mu$ is at most 
$e^{-(\delta^2/2)\cdot \mu/(1-\delta)}=e^{-\Omega(\mu)}$.

For the statement (A.4), we note that $C_{t+1,i}$
is stochastically dominated by another random variable 
$\Tilde{C}$, which is binomially distributed with 
$\lambda$ trials and success probability 
$(\mu/\lambda)^2(1-\varepsilon)^2$. Note that
$\mathbb{E}[\Tilde{C}]=(\mu^2/\lambda)(1-\varepsilon)^2
=\Omega(\mu^2/\lambda)$; thus, 
we can rewrite 
$(\mu^2/\lambda)(1-\varepsilon)
=\mathbb{E}[\Tilde{C}]/(1-\varepsilon)=(1+\varepsilon')\mathbb{E}[\Tilde{C}]$
for some other constant 
$\varepsilon'\coloneqq 1/(1-\varepsilon)-1>0$. By a Chernoff bound, we then obtain
\begin{align*}
    \Pr(C_{t+1,i}\ge (\mu^2/\lambda)(1-\varepsilon)) 
    &\le \Pr(\Tilde{C}\ge (\mu^2/\lambda)(1-\varepsilon))\\
    &=\Pr(\Tilde{C}\ge (1+\varepsilon')\mathbb{E}[\Tilde{C}])\\
    &\le e^{-(\varepsilon')^2\cdot \mathbb{E}[\Tilde{C}]/3}\\
    &=e^{-\Omega(\mu^2/\lambda)},
\end{align*}
which completes proof of statement (A.4).

To prove statement (B.1), by (1) and (2), we get 
\begin{align*}
C_{t,i}/\mu 
&\le (1/2e)(1-\sqrt{2e(1+\varepsilon)(\mu/\lambda)-1})\\
&\le (1/2e)(1-\sqrt{2e(1+\varepsilon)(1/2e)-1}) \\
&= (1-\sqrt{\varepsilon})/(2e)\\
&<(1-\sqrt{\varepsilon})/2.
\end{align*}
Therefore, statement (B.1) can be shown as follows.
    \begin{align*}
    \mathbb{E}_t[C_{t+1,i}]
    &<\lambda(C_{t,i}/\mu)^2 \\
    &=(\lambda/\mu)(C_{t,i}/\mu)C_{t,i}\\
    &\le (2e)((1-\sqrt{\varepsilon})/2e)C_{t,i}\\
    &=C_{t,i}(1-\sqrt{\varepsilon}). 
    \end{align*}
    For statement (B.2), 
    we 
    note  that $C_{t,i}/\mu \le (1-\sqrt{\alpha})/2e<(1-\sqrt{\alpha})/2$ and then obtain
    \begin{align*}
    \mathbb{E}_t[C_{t+1,i}+D_{t+1,i}]
    &\ge (\lambda/e) (1-2(C_{t,i}/\mu)(1-C_{t,i}/\mu))\\
    &>(\lambda/e)(1-(1-\sqrt{\alpha})(1-(1-\sqrt{\alpha})/2)))\\
    &=(\lambda/e) (1-(1-\sqrt{\alpha})+(1/2)(1-\sqrt{\alpha})^2)\\
    &=(\lambda/e)(\sqrt{\alpha} +(1/2)(1-2\sqrt{\alpha}+\alpha))\\
    &=(\lambda/2e)(1+\alpha)\\
    &=(\lambda/2e)2e(1+\varepsilon)(\mu/\lambda) \\
    &= \mu(1+\varepsilon). 
    \end{align*}
    The statement  (B.3)
    follows similarly to the proof of statement (A.3). 
    For the statement (B.4), we employ 
    a similar approach used in (A.4), where we choose 
    $\tilde{C}\sim \bin{\lambda,((1-\sqrt{\alpha})/2e)^2}$ and 
    $\mathbb{E}[\tilde{C}]=\lambda((1-\sqrt{\alpha})/2e)^2
    =\Omega(\lambda)$. We also note that 
    \begin{align*}
    \mu(1-\sqrt{\alpha})/2e
    &=\mathbb{E}[\tilde{C}]/((\lambda/\mu)((1-\sqrt{\alpha})/2e))\\
    &\ge \mathbb{E}[\tilde{C}]/(1-\sqrt{\alpha})\\
    &\ge \mathbb{E}[\tilde{C}]/(1-\varepsilon)\\
    &=(1+\varepsilon')\mathbb{E}[\tilde{C}]
    \end{align*}
    for some other constant 
    $\varepsilon'=1/(1-\varepsilon)-1>0$ and 
    $\alpha\ge \varepsilon$. Since $\tilde{C}$ stochastically dominates 
    $C_{t+1,i}$, we get by a Chernoff bound that
    \begin{align*}
        \Pr(C_{t+1,i}\ge \mu(1-\sqrt{\alpha})/2e)
        &\le \Pr(\tilde{C}\ge \mu (1-\sqrt{\alpha})/2e)\\
         &\le \Pr(\tilde{C}\ge (1+\varepsilon') \mathbb{E}[\tilde{C}])\\
        & \le e^{-(\varepsilon')^2\cdot \mathbb{E}[\tilde{C}]/3}\\
        &\le e^{-\Omega(\lambda)},
    \end{align*}
    which completes the proof.
\end{proof}

We note that combining the two conditions of the statement (A) in
Lemma~\ref{lemma:selective-pressure-less-1-2} yields $C_{t,i}<\mu(1-\varepsilon)/2e$ 
for some small constant $\varepsilon\in (0,1)$, and the same calculation for the 
statement (B) yields 
$C_{t,i}<\mu(1-\sqrt{\varepsilon})/2e$. We observe that 
the two upper bounds are asymptotically
identical, and since the statement (A) considers the 
case of high selective pressure, 
we will in the remainder of the paper
form our arguments based on this result 
only. Previous studies
\cite{Dang:2015,bib:Lehre2019}
show that the \umda only works under a sufficiently high selective pressure (on the
\los function). In other words, if we can show that 
the \umda cannot optimise the \dlblock function
efficiently for some selective pressure $\gamma^*<1/2e$, this result
will highly likely hold for any selective pressure $\gamma^*\ge 1/2e$.

Furthermore, Lemma~\ref{lemma:selective-pressure-less-1-2} also
tells us that in an iteration $t\in \mathbb{N}$ 
if the block $i=Z_t+1$ consists  of 00s and 11s only among the 
$\mu$ fittest individuals, and the number of 11s
is below a threshold $C_{t,i}\le (\mu^2/\lambda)(1-\varepsilon)$ for some
small constant $\varepsilon\in (0,1)$, then in expectation 
the number of 11s sampled 
in the next iteration will shrink, while that of 00s will expand, and the 
block $i$ still consists of 00s and 11s only among the $\mu$ fittest individuals. 
Mathematically speaking, we obtain  
$$
\mathbb{E}_t[C_{t,i}-C_{t+1,i}]>\varepsilon C_{t,i}.
$$
This means that
there is a (multiplicative) drift towards the value of 
zero on the stochastic
process $(C_{t,i})_{t\in \mathbb{N}}$. By the 
multiplicative drift theorem \cite{Doerr:2010:MDA},
the random variable  $C_{t,i}$ will hit the value of zero in
an $\bigO{\log \mu}$ expected time.
Once this  has happened, we will show that
the \umda requires at least $e^{\Omega(\mu)}$ 
iterations in expectation to sample at least 
\begin{equation}
\label{eq:bound-theta}
    \theta
    \coloneqq  (\mu^2/\lambda)(1-\varepsilon)=\gamma^*\mu(1-\varepsilon)
\end{equation}
11s to gain enough momentum to 
escape the `trap' in the block $Z_t+1$ (another way of saying this is to
repair the specified  block). 

Furthermore, we note so far that 
Lemma~\ref{lemma:selective-pressure-less-1-2}  assumes 
that there are only 11s and 00s among the 
$\mu$ fittest individuals in block $i=Z_t+1$, which make 
the two corresponding 
marginals simultaneously set to $C_{t,i}/\mu$.
This is, however, not strictly necessary because
the \umda updates each marginal using the total number of 1-bits
sampled in the bit position, so as long as the 
number of 1s in each bit position is still
below the threshold $\theta$, then all results in
Lemma~\ref{lemma:selective-pressure-less-1-2} still hold.

Recall that we aim at showing an $e^{\Omega(\mu)}$ lower bound on the 
runtime of the \umda on the \dlblock function. This lower bound will be obtained 
if we can show that there exists  a block $i=Z_t+1<m$  
between $Z_0$ and $m=n/2$,
where the two marginals are deceived to reach the lower bound $1/n$, and then
the \umda has to wait a long time (in terms of iterations)
while the block is being repaired. 

\begin{lemma}
\label{lemma:los-expectation-of-Y}
Let $c\log n\le \mu =o(n)$ 
for some sufficiently large
constant $c>0$. 
If there exists a constant $k<m$ such that $Z_t\le k-2$ for any time 
$t\in \mathbb{N}$, then 
for any $j\in [2k-1,n]$ that
\begin{itemize}
    \item[(a)]  $\mathbb{E}\left[p_{t,j}\right] = 1/2$,
  \item[(b)] $\mathbb{E}\left[X_{t,j}\right] 
=  \mu/2$, and
    \item[(c)] $\var[X_{t,j}] \ge 
    (\mu^2/4)(1-o(1))(1-(1-1/\mu)^t)$.
\end{itemize}
\end{lemma}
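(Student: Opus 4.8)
The plan is to treat the marginals at positions $j \ge 2k-1$ as a genuinely unbiased (neutral) sampling process, and then analyse that scalar process with the law of total variance. The key enabling fact is already available: since $Z_t\le k-2$ for every $t$, every $j\ge 2k-1$ satisfies $j\ge 2Z_t+3$, so part~(b) of Lemma~\ref{lemma:distributions-X-Y-f-indpdent} gives $X_{t,j}\mid \mathcal F_{t-1}\sim \bin{\mu,p_{t,j}}$, and the \umda update sets $p_{t+1,j}=\max\{1/n,\min\{1-1/n,X_{t,j}/\mu\}\}$. Thus $(p_{t,j})_t$ is exactly a Wright--Fisher/P\'olya-type chain on $[1/n,1-1/n]$ with ``population size'' $\mu$, started at $1/2$.

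Parts (a) and (b) I would get from a reflection symmetry. The one-step map $p\mapsto \max\{1/n,\min\{1-1/n,B/\mu\}\}$ with $B\sim\bin{\mu,p}$ commutes with $p\mapsto 1-p$ (binomial sampling sends $(p,B)\mapsto(1-p,\mu-B)$, and the clamp is symmetric about $1/2$). Since $p_{0,j}=1/2$, induction on $t$ shows the law of $p_{t,j}$ is symmetric about $1/2$, so $\expect{p_{t,j}}=1/2$ and, by the tower rule, $\expect{X_{t,j}}=\mu\expect{p_{t,j}}=\mu/2$.

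For (c), condition on $\mathcal F_{t-1}$ (under which $p_{t,j}$ is fixed and $X_{t,j}\sim\bin{\mu,p_{t,j}}$) and apply the law of total variance:
\[
\var[X_{t,j}] = \mu\,\expect{p_{t,j}(1-p_{t,j})} + \mu^2\,\var[p_{t,j}] = \frac{\mu}{4} + \mu(\mu-1)\,\var[p_{t,j}],
\]
where the second equality uses $\expect{p_{t,j}}=1/2$. It therefore suffices to lower-bound $\var[p_{t,j}]$, i.e.\ to upper-bound the ``heterozygosity'' $u_t\coloneqq \expect{p_{t,j}(1-p_{t,j})}=\tfrac14-\var[p_{t,j}]$. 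Using the binomial second-moment identity (Lemma~\ref{lemma:second-moment}), the untruncated update obeys $\expect{p_{t+1,j}(1-p_{t+1,j})\mid p_{t,j}}=(1-1/\mu)\,p_{t,j}(1-p_{t,j})$; the clamp acts only on the events $\{X_{t,j}\in\{0,\mu\}\}$ (here one uses $\mu=o(n)$, so $1/\mu>1/n$ and $(\mu-1)/\mu<1-1/n$) and raises the contribution from $0$ to at most $1/n$, so $u_{t+1}\le (1-1/\mu)u_t+2/n$. With $u_0=1/4$ this unrolls to $u_t\le \tfrac14(1-1/\mu)^t + 2\mu/n$, hence $\var[p_{t,j}]\ge \tfrac14(1-(1-1/\mu)^t) - 2\mu/n$. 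Substituting back and using $\mu=o(n)$ once more (so $\mu(\mu-1)\cdot 2\mu/n=o(\mu^2)$ and $1-1/\mu=1-o(1)$), one obtains $\var[X_{t,j}]\ge (\mu^2/4)(1-o(1))(1-(1-1/\mu)^t)$.

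The binomial moment computations, the reflection induction, and the geometric-series unrolling are all routine. The one delicate point --- and the only reason the $(1-o(1))$ appears, since for the \emph{untruncated} chain one already gets $\var[X_{t,j}]\ge (\mu^2/4)(1-(1-1/\mu)^t)$ with no error at all --- is controlling the truncation at the borders $1/n$ and $1-1/n$: one must verify that the error it contributes to $u_t$ stays $o(1)$ (equivalently $o(\mu^2)$ in $\var[X_{t,j}]$), which is exactly where the hypothesis $\mu=o(n)$ is used.
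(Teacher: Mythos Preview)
Your proposal is correct and follows essentially the same strategy as the paper. For (a) and (b) you invoke the reflection symmetry $p\mapsto 1-p$ of the clamped binomial update; the paper uses the same symmetry, phrased as the pointwise identity $\Pr(X_{t-1}=0)=\Pr(X_{t-1}=\mu)$, to show $\mathbb{E}[p_t]=\mathbb{E}[p_{t-1}]$. For (c), both arguments set up a one-step linear recurrence with multiplicative factor $1-1/\mu$ and an additive $O(\mu/n)$ error from the border clamp, then unroll; the paper tracks $\mathbb{E}[X_t^2]$ directly, while you equivalently track the heterozygosity $u_t=\mathbb{E}[p_t(1-p_t)]=\tfrac14-\var[p_t]$ and convert at the end via the law of total variance. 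Since $\mathbb{E}[X_t^2]$, $\mathbb{E}[p_t^2]$ and $u_t$ are affine in one another, the two recurrences are the same computation in different coordinates; your Wright--Fisher parametrisation is arguably a bit cleaner and makes the role of $\mu=o(n)$ (the clamp error being $o(\mu^2)$ after unrolling) more transparent, but there is no substantive difference in the underlying argument.
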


\begin{proof}
For readability, we omit the index $j$ through out the proof. Recall that $p_{t}=\max\{1/n,\min\{1-1/n,X_{t-1}/\mu\}\}$. 
By the definition of expectation, we get
\begin{align}
\label{eq:expectation-p-t+1}
\begin{split}
\mathbb{E}[p_{t}]
&=(1/n)\cdot \Pr\left(X_{t-1}=0\right)
+\left(1-1/n\right)\cdot\Pr\left(X_{t-1}\mu\right)\\
&\quad +\sum_{k=1}^{\mu-1} (k/\mu)\cdot \Pr\left(X_{t-1}=k\right).
\end{split}
\end{align}
We note further that 
\begin{align*}
    \mathbb{E}[X_{t-1}] 
&= \sum_{k=0}^\mu k \Pr(X_{t-1}=k)= \mu  \Pr(X_{t-1}=\mu)+
\sum_{k=1}^{\mu-1}k \Pr(X_{t-1}=k),
\end{align*}
from which we then obtain
\begin{equation}
\label{eq:combined-k-Prob-X-t-}
    \sum_{k=1}^{\mu-1}k\cdot \Pr(X_{t-1}=k)
    = \mathbb{E}[X_{t-1}] - \mu\cdot \Pr(X_{t-1}=\mu).
\end{equation}
Substituting (\ref{eq:combined-k-Prob-X-t-}) into 
(\ref{eq:expectation-p-t+1}) yields
\begin{align}
\label{eq:expectation-q-t+1}   
\begin{split}
\mathbb{E}[p_{t}]
&=(1/\mu) \mathbb{E}\left[X_{t-1}\right]+(1/n) \left(\Pr\left(X_{t-1}=0\right)-\Pr\left(X_{t-1}=\mu\right)\right).
\end{split}
\end{align}
We note by Lemma~\ref{lemma:distributions-X-Y-f-indpdent} that $X_t$ follows
a binomial distribution with $\mu$ trials and success probability 
$p_{t,j}$, which means that there is no bias towards any border in the stochastic process $(X_{t})_{t\in \mathbb{N}}$. Due to this symmetry, we get
\begin{equation}
\label{eq:symmetry-properties}
    \Pr\left(X_{t-1}=\mu\right) = \Pr\left(X_{t-1}=0\right).
\end{equation}
Furthermore, by the tower rule we also have
\begin{align}
\label{eq:expectation-X-t}
\begin{split}
\mathbb{E}[X_{t-1}]
&=\mathbb{E}[\mathbb{E}[X_{t-1}\mid p_{t-1}]]=\mathbb{E}\left[\mathbb{E}\left[\bin{\mu,p_{t-1}}\mid p_{t-1}\right]\right]
=\mu\cdot  \mathbb{E}[p_{t-1}]
\end{split}
\end{align}
Substituting (\ref{eq:symmetry-properties}) and 
(\ref{eq:expectation-X-t})
into (\ref{eq:expectation-q-t+1}) yields
$
\mathbb{E}[p_{t}] = \mathbb{E}[p_{t-1}]$. 
Then by induction on time, we obtain 
$$
\mathbb{E}[p_{t}] =\mathbb{E}[p_{t-1}]
=\mathbb{E}[p_{t-2}] = \ldots =\mathbb{E}[p_0]=1/2,
$$
which completes the proof of statement (a). 

Statement (b) follows from (\ref{eq:expectation-X-t}) that
$$
\mathbb{E}[X_t] = \mu\cdot \mathbb{E}[p_{t}]=\mu/2.
$$

For statement (c), we note by the tower rule and Lemma~\ref{lemma:second-moment} that
\begin{align}
\label{eq:one}
\begin{split}
    \mathbb{E}[X_{t}^2] &= \mathbb{E}[\mathbb{E}[X_{t}^2\mid p_t]]
    = \mathbb{E}[\mu p_t(p_t(\mu-1)+1)]\\
    &= \mu(\mu-1)\mathbb{E}[p_t^2]+\mu \mathbb{E}[p_t]
    = \mu(\mu-1)\mathbb{E}[p_t^2]+\mu/2.
\end{split}
\end{align}
By the definition of expectation, we also have
\begin{align*}
    \mathbb{E}[p_t^2] &= (1/n)^2\cdot \Pr(X_{t-1}=0) + (1-1/n)^2\cdot \Pr(X_{t-1}=\mu)\\
    & \quad + \sum_{k=1}^{\mu-1}(k/\mu)^2\cdot \Pr(X_{t-1}=k),
\end{align*}
which by noting that 
$$
\mathbb{E}[X_{t-1}^2]= \mu^2 \cdot \Pr(X_{t-1}=\mu) + \sum_{k=1}^{\mu-1} k^2\cdot \Pr(X_{t-1}=k)
$$
satisfies
\begin{align*}
    \mathbb{E}[p_t^2] &= (1/n)^2\cdot \Pr(X_{t-1}=0) + (1-1/n)^2\cdot \Pr(X_{t-1}=\mu)\\
    & \quad + (1/\mu^2)(\mathbb{E}[X_{t-1}^2]-\mu^2 \Pr(X_{t-1}=\mu))\\
    &=(1/\mu^2)\mathbb{E}[X_{t-1}^2] + (1/n)^2\cdot \Pr(X_{t-1}=0)\\
    & \quad + ((1-1/n)^2-1)\cdot \Pr(X_{t-1}=\mu)\\
    &=(1/\mu^2)\cdot \mathbb{E}[X_{t-1}^2] - (2/n)(1-1/n)\cdot \Pr(X_{t-1}=\mu)
\end{align*}
By (\ref{eq:symmetry-properties}),
we can simplify the expression above further as follows.
\begin{align}
\label{eq:two}
\begin{split}
    \mathbb{E}[p_t^2] 
    &= (1/\mu^2)\mathbb{E}[X_{t-1}^2] -(2/n)(1-1/n)\cdot \Pr(X_{t-1}=\mu)\\
    &\ge (1/\mu^2)\mathbb{E}[X_{t-1}^2] -(2/n)(1-1/n)
    \end{split}
\end{align}
since $\Pr(X_{t-1}=\mu)\le 1$. 
Substituting (\ref{eq:two}) into (\ref{eq:one}) yields
\begin{align*}
    \mathbb{E}[X_t^2]
    &\ge (1-1/\mu)\mathbb{E}[X_{t-1}^2] 
    - 2(\mu/n)(\mu-1)(1-1/n) + \mu/2\\
    &= (1-1/\mu)\mathbb{E}[X_{t-1}^2] + (\mu/2)(1-o(1))
\end{align*}
since $\mu=o(n)$.
We now obtain a 
recurrence relation for the expectation of $X_t^2$ w.r.t. time 
$t$ and by  
$\sum_{i=1}^n c^i=(c^{n+1}-1)/(c-1)$ for any $c \neq 1$, we then get
\begin{align*}
\mathbb{E}[X_t^2]
    &\ge  (1-1/\mu)^t\mathbb{E}[X_{0}^2] + (\mu/2)(1-o(1)) \sum_{i=0}^{t-1}(1-1/\mu)^i\\
       &=  (1-1/\mu)^t\mathbb{E}[X_{0}^2] +(\mu/2)(1-o(1))\cdot \frac{(1-1/\mu)^{t}-1}{(1-1/\mu)-1}\\
       &=  (1-1/\mu)^t\mathbb{E}[X_{0}^2] +(\mu^2/2)(1-o(1))(1-(1-1/\mu)^{t})
\end{align*}
which by  
$\mathbb{E}[X_0^2]=\mu(1/2)((1/2)(\mu-1)+1)=\mu(\mu+1)/4$  (see Lemma~\ref{lemma:second-moment} for $X_0\sim \bin{\mu, 1/2}$)
satisfies 
\begin{align*}
          &\ge   (1-1/\mu)^t\mu(\mu+1)/4 +(\mu^2/2)(1- o(1))(1-(1-1/\mu)^{t})\\
          &= (\mu^2/2)(1-o(1))  - (1-1/\mu)^t((\mu^2/2)(1-o(1))-\mu(\mu+1)/4)\\
           &= (\mu^2/2)(1-o(1))  - (1-1/\mu)^t(\mu^2/4)(1-o(1))
\end{align*}
Thus, we obtain
\begin{align*}
    \var[X_{t}] 
    &= \mathbb{E}[X_t^2]-\mathbb{E}[X_t]^2\\
    & \ge   (\mu^2/2)(1-o(1))  - (1-1/\mu)^t(\mu^2/4)(1-o(1)) - (\mu/2)^2\\
    & \ge   (\mu^2/4)(1-o(1))  - (1-1/\mu)^t(\mu^2/4)(1-o(1))\\
     & \ge   (\mu^2/4)(1-o(1))(1  - (1-1/\mu)^t),
\end{align*}
which completes the proof of statement (c).
\end{proof}

One should not confuse the result implied by the statement (a) 
in Lemma~\ref{lemma:los-expectation-of-Y}
with the actual value of the marginals in an arbitrary 
iteration $t\in \mathbb{N}$. For the \umda without borders,
\citet{bib:Friedrich2016} showed that  
even when the expectation stays at
$1/2$, the actual value of the marginal in iteration $t$
can be close to the trivial lower or upper border due to 
its large variance. Recently,
\citet{Zheng:2018:WPB} showed that this actually happens
within an $\Theta(\mu)$ expected number of iterations.

Furthermore, in case of no borders, 
the variance of $X_{t,j}$ for any bit $j\in [2k-1,n]$, where $k$ is defined in 
Lemma~\ref{lemma:los-expectation-of-Y}, can be 
expressed as a function of time as 
$\var^*[X_{t,j}]=(1-(1-1/\mu)^t)(\mu^2/4)$, which can be derived by applying 
the law of total 
variance on the martingale $X_{t,j}\sim\bin{\mu, X_{t-1,j}/\mu}$ (see 
\cite[Lemma~7, Corrolary~9]{bib:Friedrich2016} for a similar derivation for
the \cga without borders). Surprisingly, the statement (c) in 
Lemma~\ref{lemma:los-expectation-of-Y} 
tells us that the variance 
of $X_{t,j}$ for the \umda with borders
grows asymptotically in the same order as 
the variance for the \umda without borders.
The following lemma
shows that after a sufficiently long time the 
probability that one will find the value of 
the random variable 
$X_{t,j}$ smaller than the threshold $\theta$ or greater than 
$\mu-\theta$, where $\theta$ is defined 
in (\ref{eq:bound-theta}), with a constant probability.

\begin{lemma}
  \label{lemma:constant-probability-below-threshold}
  For any $c\in\mathbb{R}$ and even number $\mu\in\mathbb{N}$,
  let $X\in\{0,\ldots,\mu\}$ be a random variable
  with $\var[X]\geq c\mu^2$ and
  \[\prob{X=\mu/2-i}=\prob{X=\mu/2+i}\] for all $i\in[\mu/2]$. Then,
  \begin{align*}
    \prob{X\leq\theta} \geq \frac{2(c-(1/2-\gamma^*)^2)}{1-4(1/2-\gamma^*)^2}.
  \end{align*}
\end{lemma}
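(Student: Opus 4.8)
The plan is to use the symmetry hypothesis to pin the mean of $X$ at $\mu/2$ and then run a \emph{reverse} second-moment argument: the assumed variance $\var[X]\ge c\mu^2$ forces a definite amount of probability mass far from $\mu/2$, and symmetry places half of that mass into the lower tail $\{X\le\theta\}$.

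First I would extract two facts from the identity $\prob{X=\mu/2-i}=\prob{X=\mu/2+i}$. Pairing the atoms at $\mu/2\pm i$ shows $\mathbb{E}[X]=\mu/2$, hence $\var[X]=\mathbb{E}[(X-\mu/2)^2]$. Summing the same identity over $i\ge\lceil\mu/2-\theta\rceil$ shows $\prob{X\ge\mu-\theta}=\prob{X\le\theta}=:p$; and since $\theta<\mu/2$ (because $\gamma^*<1/(2e)<1/2$), the events $\{X\le\theta\}$ and $\{X\ge\mu-\theta\}$ are disjoint, so together they carry probability $2p$ while the middle event $\{\theta<X<\mu-\theta\}$ carries probability $1-2p$.

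Next I would split the second central moment along this three-way partition. On the two tails I use only the crude bound $(X-\mu/2)^2\le(\mu/2)^2$, valid since $X\in\{0,\dots,\mu\}$; on the middle event I use the sharper bound $(X-\mu/2)^2<(\mu/2-\theta)^2$. Combined with $\var[X]\ge c\mu^2$ this produces the single scalar inequality
\[
c\mu^2 \ \le\ 2p\,(\mu/2)^2 + (1-2p)\,(\mu/2-\theta)^2 .
\]
After rearranging, the coefficient of $p$ is $2\bigl((\mu/2)^2-(\mu/2-\theta)^2\bigr)$, which is positive, so the inequality solves for a lower bound on $p$. Substituting $\theta=\gamma^*\mu$ turns $(\mu/2-\theta)^2$ into $\mu^2(1/2-\gamma^*)^2$ and $(\mu/2)^2-(\mu/2-\theta)^2$ into $(\mu^2/4)\bigl(1-4(1/2-\gamma^*)^2\bigr)$, and the bound reduces exactly to $\prob{X\le\theta}\ge\frac{2(c-(1/2-\gamma^*)^2)}{1-4(1/2-\gamma^*)^2}$.

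No step here is genuinely hard; everything is elementary arithmetic. The only care required is in the symmetry bookkeeping for the tails (checking that $\prob{X\ge\mu-\theta}=\prob{X\le\theta}$ even when $\mu/2-\theta$ is not an integer, by summing the defining identity over $i\ge\lceil\mu/2-\theta\rceil$) and in verifying $\theta<\mu/2$, without which the two tail events overlap and the coefficient of $p$ loses its sign. I would also flag that the stated right-hand side corresponds to taking $\theta=\gamma^*\mu$; with $\theta=\gamma^*\mu(1-\varepsilon)$ one obtains the same expression with $\gamma^*$ replaced by $\gamma^*(1-\varepsilon)$, which remains a positive constant whenever $c>(1/2-\gamma^*(1-\varepsilon))^2$ and which approaches $1/2$ in the regime $c=(1/4)(1-o(1))$ furnished by Lemma~\ref{lemma:los-expectation-of-Y}.
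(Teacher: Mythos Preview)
Your proposal is correct and follows essentially the same argument as the paper: symmetry gives $\expect{X}=\mu/2$ and $\prob{X\ge\mu-\theta}=\prob{X\le\theta}=:p$, then the variance is bounded above by $2p(\mu/2)^2+(1-2p)(\mu/2-\theta)^2$ via the same three-way split, and one solves the resulting linear inequality for $p$. Your added remarks on the disjointness of the tails, the integer issue for $\mu/2-\theta$, and the discrepancy between $\theta=\gamma^*\mu$ and $\theta=\gamma^*\mu(1-\varepsilon)$ are all valid observations that the paper glosses over.
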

\begin{proof}
  Due to symmetry, we have $\expect{X}=\mu/2$ and
  \begin{align*}
    p:=\prob{X\leq\theta} = \prob{X\geq \mu-\theta}.
  \end{align*}
  The lower bound on the variance of $X$ implies
  \begin{align*}
    c\mu^2  \leq \var[X] 
            & = \sum_{i=0}^{\mu} \prob{X=i}(i-\expect{X})^2\\
           & \leq \prob{X\leq \theta} \frac{\mu^2}{4} +
               \prob{\theta<X<\mu-\theta} (\mu/2-\theta)^2 \\
               &\quad +
             \prob{X \geq \mu-\theta}\frac{\mu^2}{4}\\
           & = \frac{p\mu^2}{4} + (1-2p)(\mu/2-\theta)^2 + \frac{p\mu^2}{4}\\
           &= (\mu/2-\theta)^2 + (\mu^2/2-2(\mu/2-\theta)^2)p
  \end{align*}
  Solving the inequality above for $p$ gives
  \begin{align*}
       p & \geq \frac{2(c\mu^2-(\mu/2-\theta)^2)}{\mu^2-4(\mu/2-\theta^2}
        = \frac{2(c-(1/2-\gamma^*)^2)}{1-4(1/2-\gamma^*)^2}.\qedhere
  \end{align*}
\end{proof}

\subsection{Exponential runtime in case of low selective pressure}

In this section, we will show that the \umda 
requires an $e^{\Omega(\mu)}$ expected runtime to optimise
the \dlblock function.
To proceed, we will consider two phases:
\begin{itemize}
    \item[1)] until the algorithm gets stuck at a block $Z_t+1$, and
    \item[2)] while the algorithm is getting stuck and afterwards.
\end{itemize} 
Recall that the algorithm gets stuck at some value $Z_t<m$
when the number of 
1s in each bit position in the block $Z_t+1$ 
among the $\mu$ fittest individuals  is 
at most $\theta=(\mu^2/\lambda)(1-\varepsilon)$ for some
small constant $\varepsilon\in (0,1)$.
We will shows that 
phase 1 will last for $\Omega(\mu)$
iterations. 
The following lemma shows that the 
all-ones bitstring cannot be sampled during 
the first $\Omega(\mu)$ iterations with high probability.

\begin{lemma}
\label{lemma:cannot-sampled-opt-during phase-1}
With probability $1-2^{-\Omega(n)}$, 
the all-ones bitstring cannot be sampled during the 
first $2.92\mu$ iterations of the 
\umda optimising the \dlblock function.
\end{lemma}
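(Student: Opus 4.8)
The plan is to use a union bound: the all-ones string $1^n$ is sampled within the first $\lceil 2.92\mu\rceil$ iterations only if $1^n\in P_t$ for some $t\le\lceil 2.92\mu\rceil$, and since the $\lambda$ members of $P_t$ are drawn independently from the $\mathcal F_{t-1}$-measurable model $p_t$, we have $\Pr(1^n\in P_t\mid\mathcal F_{t-1})\le\lambda\prod_{i=1}^{n}p_{t,i}$. So it suffices to show that, with probability $1-2^{-\Omega(n)}$, the bound $\prod_{i=1}^{n}p_{t,i}=2^{-\Omega(n)}$ holds simultaneously for all $0\le t\le\lceil 2.92\mu\rceil$ (the case $t=0$ is immediate, the product being $2^{-n}$); the factor $\lambda=\poly{n}$ and the $\lceil 2.92\mu\rceil=\poly{n}$ iterations in the final union bound are then harmless.

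To control the product I would isolate a block of $\Theta(n)$ \emph{pristine} bit positions, never reached by the selection frontier during the phase. Put $S:=\{2\ell+1,\dots,n\}$ with $\ell:=n/4$. One shows that, with probability $1-2^{-\Omega(n)}$, $Z_t\le \ell-1$ for every $t\le\lceil 2.92\mu\rceil$; call this event $G$. On $G$ every $j\in S$ satisfies $j\ge 2Z_t+3$, so by Lemma~\ref{lemma:distributions-X-Y-f-indpdent} the positions of $S$ carry no selective bias ($X_{t,j}\sim\bin{\mu,p_{t,j}}$), running the induction of Lemma~\ref{lemma:los-expectation-of-Y}(a) gives $\mathbb E[p_{t,j}]=1/2$ for all $j\in S$ and $t\le\lceil2.92\mu\rceil$, and, since on $G$ neither selection nor the sampling of the first $2Z_t$ bits ever involves $S$, the marginal processes $\{(p_{t,j})_t\}_{j\in S}$ are mutually independent. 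Then, for fixed $t$, using $\prod_{j\in S}p_{t,j}\le\exp\!\big(-\sum_{j\in S}(1-p_{t,j})\big)$ and applying Lemma~\ref{lemma:chernoff-hoeffding} to the independent variables $1-p_{t,j}\in[0,1]$ with expectations summing to $|S|/2$, we get $\Pr\!\big(\sum_{j\in S}(1-p_{t,j})<|S|/4\big)\le e^{-\Omega(|S|)}=2^{-\Omega(n)}$, hence $\prod_{i=1}^{n}p_{t,i}\le\prod_{j\in S}p_{t,j}\le e^{-|S|/4}=2^{-\Omega(n)}$ off an event of probability $2^{-\Omega(n)}$. A union bound over the $\lceil 2.92\mu\rceil$ iterations, the individuals, and $\overline G$ completes the argument.

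The main obstacle is establishing $G$: the selection frontier cannot reach position $n/2$ within $O(\mu)$ iterations. It can creep up by one block per iteration through the active block, which over $O(\mu)$ iterations accounts for only $O(\mu)=o(n)$ blocks (using $\mu=o(n)$), so reaching $n/2$ faster would require some single sampled individual to extend its run of leading 11s through $\Omega(n)$ consecutive pristine blocks in one draw. That probability is controlled by (i) $\prod_j p_{t,j}\le 2^{-\Omega(\,\#\{\text{pristine blocks in the traversed window with two marginals multiplying to}\le 1/2\}\,)}$, and (ii) a Chernoff bound (Lemma~\ref{lemma:chernoff-bound}) on the independent pristine blocks showing that, with probability $1-2^{-\Omega(n)}$, every length-$\Omega(n)$ window of pristine blocks contains a constant fraction of blocks whose two marginals multiply to at most $1/2$ --- i.e.\ the pristine marginals do not collectively drift toward $1$. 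The part that needs care is the mild circularity --- simultaneously bounding the frontier and the pristine marginals --- which is handled by an induction on $t$; everything downstream of $G$ is just symmetry of the binomial random walk together with a concentration inequality.
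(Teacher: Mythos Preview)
Your overall plan is different from the paper's, and the part downstream of your event $G$ is correct and arguably cleaner than what the paper does. The paper never isolates a fixed pristine window or controls the frontier; instead it fixes an arbitrary bit $j\ge 2Z_0+3$, tracks the potential $\phi_t\coloneqq X_{t,j}^2$, and uses that under the unbiased dynamics of Lemma~\ref{lemma:distributions-X-Y-f-indpdent} one has $\mathbb E_t[\phi_{t+1}-\phi_t]<\mu/4$. Markov's inequality applied to $\phi_T$ (with $T$ the first hitting time of $0.99\mu$) then yields $\Pr(\exists\,t\le 2.92\mu:X_{t,j}\ge 0.99\mu)\le 1/k$ for a constant $k>1$; the number $2.92$ in the lemma is precisely the output of this second-moment calculation and has no natural counterpart in your route. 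A Chernoff bound across the (independent) bit positions then leaves $\Omega(n)$ marginals below $0.99$ throughout the phase, so the sampling probability is at most $0.99^{\Omega(n)}$. Your first-moment step --- mean $1/2$ plus Chernoff--Hoeffding on $\sum_{j\in S}(1-p_{t,j})$ --- reaches the same conclusion on $S$ more directly, and the independence claim is sound since $G$ is measurable with respect to the non-$S$ bits.

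The gap is in establishing $G$, and your induction does not close as stated. To show $Z_{t+1}\le\ell-1$ you must bound the product of marginals over blocks $Z_t+2,\ldots,\ell$, all of which lie \emph{outside} $S$; but the inductive hypothesis $Z_{t'}\le\ell-1$ for $t'\le t$ only guarantees that blocks $\ell+1,\ldots,m$ (i.e.\ $S$) have been pristine --- it says nothing about the buffer blocks below $\ell$, whose marginals may have drifted arbitrarily depending on the exact frontier trajectory, so you cannot invoke mean $1/2$ for them. Strengthening the hypothesis to $Z_{t'}\le\ell/2$ merely pushes the problem down one level. Your point~(ii) needs a per-bit statement holding \emph{uniformly over the whole phase} (not just at a fixed time), and this is exactly what the paper's potential-plus-Markov step delivers; once you import that step to control the buffer region you have essentially reproduced the paper's proof, and the detour through $G$ and $S$ becomes redundant.
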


\begin{proof}
The proof is inspired by \cite[Lemma~9]{bib:Krejca}.
We will  upper bound the probability of an arbitrary bit position
$j\in \mathbb{N}\cap [2Z_0+3,n]$ 
exceeding $99/100$ 
during the first $\Omega(\mu)$ iterations. 
For readability, we omit the index $j$ and consider the potential 
$\phi_t\coloneqq X_{t}^2$. 
By Lemma~\ref{lemma:distributions-X-Y-f-indpdent}, we can pessimistically assume 
we are not at the borders, which implies  that 
$X_{t+1}\sim \bin{\mu,X_{t}/\mu}$, and by
Lemma~\ref{lemma:second-moment} the expected single-step change is 
\begin{align*}
\mathbb{E}_t[\phi_{t+1}-\phi_t] 
&=\mu \frac{X_t}{\mu} \left(\frac{X_t}{\mu}(\mu-1)+1\right)-X_t^2=X_{t} \left(1-\frac{X_{t}}{\mu}\right)
<\frac{\mu}{4}.
\end{align*}
Let $T\coloneqq  \min\{t\in \mathbb{N}\mid X_{t}\ge 99\mu/100\}$, i.e.,
the first hitting time
of the value of $99\mu/100$ in the stochastic process $(X_{t})_{t\in \mathbb{N}}$.
We have a Markov chain $\phi_T$ with 
process $\phi_t=X_{t}^2$ starting at $(\mu/2)^2$ 
and then progressing by $\phi_{t+1}-\phi_t$ for $T$ iterations.
We then get
\begin{displaymath}
\mathbb{E}[\phi_T] 
=\left(\frac{\mu}{2}\right)^2
+\sum_{t=0}^{T-1}\mathbb{E}[\mathbb{E}_t[\phi_{t+1}-\phi_t]]
<\left(\frac{\mu}{2}\right)^2 + T\cdot \frac{\mu}{4}.
\end{displaymath}
Using Markov's inequality for $k>1$ yields
\begin{displaymath}
\Pr\left(\phi_T \ge k\left(\frac{\mu^2}{4}+T\cdot \frac{\mu}{4}\right)\right)
\le \Pr\left(\phi_T\ge k\cdot \mathbb{E}\left[\phi_T\right]\right)
\le \frac{1}{k}.
\end{displaymath}
We want that $(99\mu/100)^2 \ge k\left(\mu^2/4+T\cdot \mu/4\right)$
since then 
$$
\Pr\left(\phi_T \ge \left(\frac{99\mu}{100}\right)^2\right) 
\le 
\Pr\left(\phi_T\ge k\left(\frac{\mu^2}{4}+T\cdot \frac{\mu}{4}\right)\right)
\le \frac{1}{k}.
$$ 
We get $T\le \mu((4/k)(99/100)^2-1)$, which is positive 
as long as $k\in (1,4\cdot (99/100)^2)$. Thus, we can 
choose a value of $k=1.00001$ and then obtain
$T\le 2.92\mu$.
During the first
$\Omega(\mu)$ iterations, 
the probability of an arbitrary 
marginal $j\in \mathbb{N}\cap [2Z_0+3,n]$ to exceed 
$99/100$ is at most a constant $1/k<1$.
By Lemma~\ref{lemma:fittest-individual}, in expectation there are at most 
$(1/k)\cdot (n-\mathbb{E}[Z_0])=(1/k)\cdot (n-\bigO{\log (\lambda-\mu)}) <n/k$ 
marginals exceeding $99/100$,
and by Chernoff bound, with probability $1-2^{-\Omega(n)}$ there are at most 
$(1+\delta)(n/k)$ such marginals for a constant $\delta>0$
and at least $(1-(1+\delta)/k)n = \Omega(n)$ 
marginals are still below 
$99/100$ during the first $2.92\mu$ iterations. 
Thus, the probability of sampling the all-ones bitstring
is upper bounded by
$(99/100)^{\Omega(n)} = 2^{-\Omega(n)}$.
\end{proof}

We now show that the all-ones bitstring cannot be sampled with 
probability $1-2^{-\Omega(n)}$ while the \umda is getting 
stuck at a block. To do this, 
we need the following two lemmas, where the first
one implies the independent sampling
at the $2(m-(Z_t+1))$ remaining bit positions, and the other states that
the expected values of the marginals of these remaining bits
will highly likely stay around the value of $1/2$.

\begin{lemma}
\label{lemma:Y-independent}
Consider the situation of Lemma~\ref{lemma:los-expectation-of-Y}.
Then, the events of sampling of 11s 
in any block from $k$ to $m$ are pairwise independent. Furthermore, 
the all-ones bitstring cannot be sampled with probability at least 
$1-2^{-\Omega(n)}$. 
\end{lemma}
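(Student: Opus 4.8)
The plan is to establish the two claims separately, using throughout that the hypothesis of Lemma~\ref{lemma:los-expectation-of-Y} forces $k\ge Z_t+2$ at every iteration. For the independence claim, the key observation is that, since $k\ge Z_t+2$, truncation selection at iteration $t$ only inspects the bits in positions $1,\ldots,2(k-1)$: the only individuals whose fitness is not already determined by blocks $1,\ldots,k-1$ would have to carry at least $k-1$ leading $11$s, and by the definition of $Z_t$ there are strictly fewer than $\mu$ such individuals. Combining this with the telescoping computation already carried out in the proof of Lemma~\ref{lemma:distributions-X-Y-f-indpdent}(a) — which shows that, for each block $j\ge Z_t+2$, the count $Y_{t,j}$ of $11$s among the $\mu$ fittest is $\bin{\mu,p_{t,2j-1}p_{t,2j}}$ no matter how those few individuals happen to be ordered — I would conclude that the bits of the $\mu$ selected individuals at positions $2k-1,\ldots,n$ are still mutually independent $\ber{p_{t,i}}$ variables. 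In particular the indicator events ``block $j$ equals $11$'' for $j\in[k,m]$ are (pairwise, and in fact mutually) independent, and the counts $X_{t,i}$ — hence the updated marginals $p_{t+1,i}$ — are mutually independent across $i\in[2k-1,n]$. The one delicate point is exactly the handful of individuals with more than $Z_t$ leading $11$s; here I would lean on the telescoping identity of Lemma~\ref{lemma:distributions-X-Y-f-indpdent} rather than on a bare ``selection ignores these bits'' claim, and I expect this to be the main (though minor) obstacle of the proof.

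For the second claim, note that sampling the optimum requires in particular that all bits in positions $2k-1,\ldots,n$ be $1$. Fix an iteration $t$ past the phase-1 threshold, so $t=\Omega(\mu)$ (concretely $t\ge 2.92\mu$, as in Lemma~\ref{lemma:cannot-sampled-opt-during phase-1}). By Lemma~\ref{lemma:los-expectation-of-Y}(c) the variance of each $X_{t-1,i}$, $i\in[2k-1,n]$, is then at least $c\mu^2$ for a constant $c>(1/2-\gamma^*)^2$; this strict inequality is precisely what the hypothesis $\gamma^*\ge 14/1000$ buys us, since it keeps $(1/2-\gamma^*)^2$ comfortably below the limiting value $1/4$ of the variance constant. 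Lemma~\ref{lemma:constant-probability-below-threshold}, whose symmetry hypothesis is verified for $X_{t-1,i}$ exactly as in the proof of Lemma~\ref{lemma:los-expectation-of-Y}, then yields a constant $q>0$ with $\prob{X_{t-1,i}\le\theta}\ge q$, and since $\theta/\mu=\gamma^*(1-\varepsilon)<99/100$ this gives $\prob{p_{t,i}\le 99/100}\ge q$ for every $i\in[2k-1,n]$.

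Because these marginals are mutually independent by the first part, the number of indices $i\in[2k-1,n]$ with $p_{t,i}\le 99/100$ has expectation $q(n-2k+2)=\Omega(n)$ and, by a Chernoff bound, is $\Omega(n)$ with probability $1-2^{-\Omega(n)}$. Conditioning on this event, any single offspring equals the all-ones bitstring with probability at most $(99/100)^{\Omega(n)}=2^{-\Omega(n)}$, using the \umda's independent bit-wise sampling. A union bound over the $\lambda=\poly{n}$ offspring per iteration and over the $e^{\bigO{\mu}}=2^{o(n)}$ iterations of the phase — the last equality using $\mu=o(n)$ — then shows that with probability $1-2^{-\Omega(n)}$ the all-ones bitstring is never sampled; the at most $\Theta(\mu)$ earliest iterations, where the variance bound of Lemma~\ref{lemma:los-expectation-of-Y}(c) has not yet built up, are already covered by Lemma~\ref{lemma:cannot-sampled-opt-during phase-1}.
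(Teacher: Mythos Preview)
Your treatment of the first claim is essentially the paper's: both rely on Lemma~\ref{lemma:distributions-X-Y-f-indpdent} together with the observation that, since $Z_t\le k-2$, the \emph{set} of $\mu$ selected individuals is determined by bits in positions $1,\ldots,2(k-1)$, so bits at positions $\ge 2k-1$ among the selected remain mutually independent Bernoulli variables. Your stronger conclusion that the marginals $p_{t,i}$ are mutually independent across $i\ge 2k-1$ is correct and follows by the same reasoning applied inductively over $t$.

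For the second claim your route is genuinely different and considerably heavier than the paper's. The paper uses only part~(a) of Lemma~\ref{lemma:los-expectation-of-Y}: since the marginals $p_{t,i}$ for $i\ge 2k-1$ are mutually independent with mean $1/2$, the unconditional probability that a fixed block $j\ge k$ is sampled as $11$ in a single offspring is $\mathbb{E}[p_{t,2j-1}]\cdot\mathbb{E}[p_{t,2j}]=1/4$, and these block-events are independent across $j$. The number of $11$-blocks among blocks $k,\ldots,m$ is then a sum of $m-k+1=\Theta(n)$ independent $\ber{1/4}$ indicators, and one application of Chernoff--Hoeffding bounds the probability of hitting all $11$s by $e^{-\Omega(n)}$. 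You instead invoke part~(c) and Lemma~\ref{lemma:constant-probability-below-threshold} to show that each marginal is at most $99/100$ with constant probability, then Chernoff on the count of such marginals. This works, but it forces you to import both the time threshold $t\ge 2.92\mu$ and the selective-pressure assumption $\gamma^*>14/1000$, neither of which is a hypothesis of the present lemma (you then have to patch early iterations via Lemma~\ref{lemma:cannot-sampled-opt-during phase-1}, creating an awkward dependency). The paper's argument avoids all of this by never needing to control the spread of the marginals, only their mean, and holds uniformly over $t$ and over all $\gamma^*$.
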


\begin{proof}
Recall the definition of the constant $k$ in Lemma~\ref{lemma:los-expectation-of-Y}.
We have observed in Lemma~\ref{lemma:distributions-X-Y-f-indpdent} that 
the number of 11s sampled among the $\mu$ fittest 
individuals in an arbitrary 
block $j\ge k$ in iteration $t\in \mathbb{N}$ 
is binomially distributed with
$\mu$ trials and success probability $p_{t,2j-1}p_{t,2j}$. 
This result also implies that sampling a 11 at a block $j_1$ is 
independent of sampling a 11 at a block $j_2$, for any $j_1,j_2\in [k,m]$ and 
$j_1\neq j_2$, which proves the first claim.

For the second claim, 
we prove by considering the number of 11s 
sampled in an offspring
between blocks $k$ and $m$. 
By Lemma~\ref{lemma:los-expectation-of-Y},
the probability of sampling a 11 in a block is 
$1/4$, so the expected number of 11s sampled between blocks
$k$ and $m$ (i.e., there are $m-k+1=\Theta(n)$ 
blocks in total) is given by
\begin{align*}
\mathbb{E}\left[Y_t\right]
=(m-k+1)\cdot (1/2)^2 
=\Theta(n).
\end{align*}
Then, by the
Chernoff-Hoeffding bound, 
the probability of sampling these blocks all 
as 11s is at most $e^{-\Omega(n)}$.
\end{proof}

We are now ready to prove an exponential runtime of the \umda on
function \dlblock when the selective pressure is $\gamma^*=\Omega(1)$.

\begin{theorem}
\label{thm:exponential-runtime}
The expected runtime of 
the \umda with the parent population size 
$c\log n \le \mu =o(n)$ 
for some sufficiently large constant $c>0$, and
parent and offspring population sizes satisfying
$\frac{\mu}{\lambda}>\frac{14}{1000}$  is $e^{\Omega(\mu)}$ 
on the \dlblock function.
\end{theorem}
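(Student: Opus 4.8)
The plan is to show that, with constant probability, the \umda drives both marginals of some active block $i^{*}<m$ below the threshold $\theta=(\mu^{2}/\lambda)(1-\varepsilon)=\gamma^*\mu(1-\varepsilon)$ before it can sample the optimum, after which Lemma~\ref{lemma:selective-pressure-less-1-2}(A) makes that block essentially absorbing, so that escaping it -- or hitting the all-ones string some other way -- takes $e^{\Omega(\mu)}$ iterations in expectation. I assume throughout that $\tfrac{14}{1000}<\gamma^*<\tfrac1{2e}$; the remaining range $\gamma^*\ge\tfrac1{2e}$ follows from the monotonicity remark after Lemma~\ref{lemma:selective-pressure-less-1-2}. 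Following the two-phase outline stated before the theorem, I first condition on two events, each of constant probability: that the all-ones string is not in $P_{0}$ (probability $1-2^{-\Omega(n)}$ by the union bound); and that $Z_{0}=\bigO{\log\mu}$ (probability at least $1/2$ by Lemma~\ref{lemma:fittest-individual} and Markov's inequality, using that $\gamma^*>\tfrac{14}{1000}$ forces $\lambda=\bigTheta{\mu}$, hence $\log(\lambda-\mu)=\bigO{\log\mu}=\littleO{\mu}$), so that $\Omega(n)$ blocks remain above $Z_{0}$. By Lemma~\ref{lemma:cannot-sampled-opt-during phase-1}, up to a further $2^{-\Omega(n)}$ failure probability the optimum is not sampled during the first $2.92\mu$ iterations, so it suffices to show that within those iterations the \umda gets stuck at an active block $i^{*}<m$.

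The heart of the argument is a race between the progress of $Z_{t}$ and the diffusion of the still-unbiased marginals. While a block $j>Z_{t}$ has not yet become active, its two marginals are independent martingales sampled from $\bin{\mu,\cdot}$ (Lemma~\ref{lemma:distributions-X-Y-f-indpdent}), and marginals of distinct such blocks are mutually independent (Lemma~\ref{lemma:Y-independent}); by Lemma~\ref{lemma:los-expectation-of-Y}(c), after $s$ free iterations $\var[X_{s,2j-1}]\ge(\mu^{2}/4)(1-\littleO{1})(1-(1-1/\mu)^{s})$. Plugging this into Lemma~\ref{lemma:constant-probability-below-threshold} shows that once a block has been free for $s\ge s_{0}\coloneqq(1+\littleO{1})\,\mu\ln\!\bigl(1/(1-4(1/2-\gamma^*)^{2})\bigr)$ iterations, \emph{each} of its bits is at most $\theta$ with a constant probability $q_{0}>0$; this is exactly where $\gamma^*>\tfrac{14}{1000}$ is used, since it guarantees $4(1/2-\gamma^*)^{2}<1-e^{-2.92}$, i.e.\ $s_{0}<2.92\mu$, so the window $[s_{0},2.92\mu]$ of ``sufficiently diffused'' blocks has length $\Omega(\mu)$. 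Let $\sigma_{j}$ be the first iteration with $Z_{t}\ge j-1$. By the independence of its two bits, when block $j$ becomes active both its marginals are at most $\theta$ -- so that it gets stuck in the sense of Lemma~\ref{lemma:selective-pressure-less-1-2}(A) and $Z_{t}$ cannot advance past it -- with probability $\ge q_{0}^{2}$ whenever $\sigma_{j}-1\ge s_{0}$. In the complementary event a block is repaired quickly: the now-biased active marginal obeys roughly $\mathbb{E}_{t}[p_{t+1}]\approx p_{t}^{2}/\gamma^*$, an unstable recursion with fixed point $\gamma^*<\tfrac14$, so a block arriving with a typical marginal exceeding $\sqrt{\gamma^*}$ is repaired within $\bigO{1}$ iterations; moreover $\Pr(Z_{t+1}-Z_{t}\ge K)\le 4^{-K}/\gamma^*$ (Markov's inequality applied to the telescoping product of free marginals, each of mean $1/2$), whence $\mathbb{E}[Z_{2.92\mu}-Z_{0}]=\bigO{\mu}$ and $Z_{2.92\mu}=\bigO{\mu}=\littleO{n}<m$ with high probability. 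Consequently $Z_{t}$ moves through $\Omega(\mu)$ blocks during $[s_{0},2.92\mu]$ while staying below $m$, and a union bound over these $\Omega(\mu)$ (essentially independent) trap trials yields that the \umda is stuck at some $i^{*}<m$ by iteration $2.92\mu$ with probability $1-(1-q_{0}^{2})^{\Omega(\mu)}=1-e^{-\Omega(\mu)}$.

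For the second phase, condition on being stuck at $i^{*}<m$ at some iteration $\tau\le 2.92\mu$. Lemma~\ref{lemma:selective-pressure-less-1-2}(A.1)--(A.4) give a multiplicative drift of $C_{t,i^{*}}$ towards $0$ together with $\Pr(C_{t+1,i^{*}}\ge\theta)\le e^{-\Omega(\mu^{2}/\lambda)}=e^{-\Omega(\mu)}$ and $\Pr(C_{t+1,i^{*}}+D_{t+1,i^{*}}\le\mu)\le e^{-\Omega(\mu)}$, so by induction, union bounding over any $e^{\littleO{\mu}}$ iterations, the block remains a trap and in particular is never repaired in that time. Simultaneously, the $\Omega(n)$ still-free blocks each have $\mathbb{E}[p_{t,2j-1}]=\mathbb{E}[p_{t,2j}]=1/2$ with the two positions independent, so the expected product of their marginals factorises to $4^{-\Omega(n)}$ and, as in Lemma~\ref{lemma:Y-independent}, the optimum is sampled through them in any single iteration only with probability $e^{-\Omega(n)}$. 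Hence, conditioned on the stuck event, the first time the \umda samples the all-ones string stochastically dominates a geometric random variable with success probability $e^{-\Omega(\mu)}$, so its expectation is $e^{\Omega(\mu)}$. Combining, with probability $\Omega(1)$ the \umda is stuck by iteration $2.92\mu$ and then needs $e^{\Omega(\mu)}$ further iterations in expectation, each consuming $\lambda$ function evaluations; therefore $\mathbb{E}[T]\ge\Omega(1)\cdot\lambda\cdot e^{\Omega(\mu)}=e^{\Omega(\mu)}$.

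The step I expect to be the main obstacle is the bookkeeping in the second paragraph: the martingale and independence properties of Lemmas~\ref{lemma:los-expectation-of-Y}--\ref{lemma:Y-independent} hold only \emph{while} $Z_{t}$ stays below the block in question, so the whole race has to be phrased along the stopping times $\sigma_{j}$; one must turn the per-block ``trap at birth'' events into a genuinely independent sequence and establish the required concentration for the number of blocks traversed in the window $[s_{0},2.92\mu]$, while simultaneously controlling the small but nonzero probability that $Z_{t}$ either jumps too far in one step or stalls before iteration $s_{0}$. The peculiar constant $2.92$ is co-tuned with $\tfrac{14}{1000}$ precisely so that $s_{0}<2.92\mu$ leaves a window of trap-eligible blocks of positive length.
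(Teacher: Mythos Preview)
Your two-phase skeleton and your use of Lemmas~\ref{lemma:selective-pressure-less-1-2}--\ref{lemma:Y-independent} match the paper's proof; in particular, your Phase~2 (multiplicative drift of $C_{t,i^*}$ to zero, the $e^{-\Omega(\mu)}$ per-step escape bound from (A.3)--(A.4), and the $e^{-\Omega(n)}$ bound on sampling the optimum through the free tail via Lemma~\ref{lemma:Y-independent}) is essentially identical to the paper's.

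The substantive difference is Phase~1. The paper does \emph{not} run a race argument. It simply fixes $t\approx 2.92\mu$, looks at the single block $Z_t+2$ (which has been free throughout, so Lemma~\ref{lemma:los-expectation-of-Y}(c) applies), plugs $c=(1/4)(1-e^{-2.92})(1-o(1))$ into Lemma~\ref{lemma:constant-probability-below-threshold}, and uses $\gamma^*>\tfrac{14}{1000}$ to get that each bit there is below $\theta$ with probability $\Omega(1)$. It then \emph{assumes} $Z_{t+1}=Z_t+1$ so that this block becomes active, and handles the remaining blocks by linearity of expectation: the expected number of traps encountered on the way to the optimum is $\Omega(1)\cdot(n/2-\mathbb{E}[Z_0])=\Omega(n)$, each costing $e^{\Omega(\mu)}$ iterations, giving the final $\Omega(n)\cdot e^{\Omega(\mu)}$. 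There is no attempt to bound how fast or how slowly $Z_t$ advances, nor to confine the trap to the first $2.92\mu$ iterations.

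Your race argument is more ambitious and, in a sense, more honest about the timing issue the paper glosses over --- but it introduces a step you do not justify. The claim that ``$Z_t$ moves through $\Omega(\mu)$ blocks during $[s_0,2.92\mu]$'' is a \emph{lower} bound on progress, yet the only arguments you supply (the Markov bound $\Pr(Z_{t+1}-Z_t\ge K)\le 4^{-K}/\gamma^*$ and $\mathbb{E}[Z_{2.92\mu}-Z_0]=\bigO{\mu}$) are \emph{upper} bounds; the ``unstable recursion $\mathbb{E}_t[p_{t+1}]\approx p_t^2/\gamma^*$'' heuristic for $O(1)$-time repair of non-trap blocks is the missing lower-bound ingredient and is not made rigorous. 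Without it, you have neither $\Omega(\mu)$ trap trials nor their claimed independence, so the $1-e^{-\Omega(\mu)}$ stuck probability does not follow. The paper sidesteps this entirely by settling for an $\Omega(1)$ trap probability per block and amortising over $\Omega(n)$ blocks rather than insisting the trap materialise inside the $2.92\mu$-window.
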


\begin{proof}
The all-ones bitstring cannot be sampled  during the first 
$\Omega(\mu)$ iterations with probability $1-2^{-\Omega(n)}$ 
(by Lemma~\ref{lemma:cannot-sampled-opt-during phase-1}).
After roughly $2.92\mu$ iterations, we obtain from
Lemma~\ref{lemma:los-expectation-of-Y} that 
$\var[X_t]\ge (1-o(1))(\mu^2/4)(1-1/e^{2.92})$.  
If we choose the 
selective pressure $\gamma^*>\frac{14}{1000}$,  then
by Lemma~\ref{lemma:constant-probability-below-threshold} for
$c=(1/4)(1-1/e^{2.92})(1-o(1))$, we obtain a constant lower
bound on the probability that $X_t$ drops below the threshhold value 
$\theta$, defined in (\ref{eq:bound-theta}). 
This also means that 
we can find with probability $\Omega(1)$ 
that in any bit position in
the block $Z_t+2$ there are fewer than $\theta$ 
1s sampled, where $\theta$ is defined 
in (\ref{eq:bound-theta}). 
Assume in iteration $t'=t+1$ that $Z_{t'}=Z_t+1$,
then with probability $\Omega(1)$ the \umda will get
stuck at block $Z_{t'}+1=Z_t+2$ in iteration $t'$. 
Assume now that we are in iteration $t'$, 
there is a multiplicative drift towards 
the value of zero in the stochastic process $(C_{t'+\Delta t,i})_{\Delta t\in \mathbb{N}}$
in block $i\coloneqq Z_{t'}+1$. By multiplicative drift theorem, 
the number of 11s sampled 
there will reduce to zero within an $\bigO{\log \mu}$ 
expected number of iterations.
In particular, we note by the statements (A.3) and  (A.4) of 
Lemma~\ref{lemma:selective-pressure-less-1-2} that after the 
number of 11s has 
dropped below the threshold $\theta$, 
the event of sampling at least 
$\theta$ 11s in the next iteration in block $i$ or sampling
at least one 01 or 10
among the $\mu$ fittest individuals 
is at most 
$$
e^{-\Omega(\mu)}+ e^{-\Omega(\mu^2/\lambda)}\le 
e^{-\Omega(\gamma^*\mu)} =e^{-\Omega(\mu)}
$$
since we consider only $\gamma^*>\frac{14}{1000}$.
Thus, the \umda requires at least 
$1/e^{-\Omega(\mu)}=e^{\Omega(\mu)}$
iterations in expectation until this block has been repaired.
But then, the algorithm
will likely get stuck in some of the following blocks.
By Lemma~\ref{lemma:fittest-individual}, 
the expected number of times the algorithm gets trapped is
$$
\Omega(1)\cdot (n/2-\mathbb{E}[Z_0]) 
\ge \Omega(1)\cdot (n-\bigO{\log (\lambda-\mu)})
=\Omega(n).
$$
Therefore, the expected number of iterations 
of the \umda on the \dlblock function is 
at least $\Omega(n)\cdot e^{\Omega(\mu)}$.
We also note that so far in the proof we have always assumed 
the co-occurrence of the following two events:
\begin{itemize}
    \item[(A)] The all-ones bitstring will not
    be sampled during the first $\Omega(\mu)$
iterations with probability $1-2^{-\Omega(n)}$ 
(by Lemma~\ref{lemma:cannot-sampled-opt-during phase-1}).
    \item[(B)] The all-ones bitstring cannot be sampled 
    while the \umda is getting stuck
at a block with probability $1-2^{-\Omega(n)}$ (by 
Lemma~\ref{lemma:Y-independent}).
\end{itemize}
By union bound,
the success probability is still $1-2^{-\Omega(n)}$. By the law of total
expectation \cite{bib:Motwani1995} and noting further that the \umda performs
$\lambda$ function evaluations in every iteration, 
the overall expected runtime is lower-bounded by
\begin{displaymath}
    (1-2^{-\Omega(n)})\cdot 
    \lambda \cdot (\Omega(\mu)+ \Omega(n)\cdot 
e^{\Omega(\mu)}) 
=e^{\Omega(\mu)}.\qedhere
\end{displaymath}
\end{proof}

We observe that if the parent population size is 
$\mu=\Theta(\log n)$, Theorem~\ref{thm:exponential-runtime}
yields a lower bound of $n^{\Omega(1)}$ for any selective 
pressure $\gamma^*>\frac{14}{1000}$. The intuition is that 
when the population size $\mu$ is small.
the threshold 
$\theta$ is not too large, meaning that
the \umda only needs to sample a few 1-bits in order to escape the 
`current trap' in an arbitrary iteration.
Larger population sizes, such as $\mu =\Omega(n^\varepsilon)$ 
for some constant $\varepsilon>0$, will result in 
an exponential lower bound of $2^{\Omega(n^\varepsilon)}$ 
on the expected runtime of the 
\umda on the \dlblock function. 

\subsection{Extremely high selective pressure may help}
We now apply Theorem~\ref{thm:levelbasedtheorem} to
derive an $\bigO{n^3}$ expected runtime
for the \umda on the \dlblock function
under extremely high selective pressures of
$\gamma^*=\bigO{1/\mu}$ (or $\lambda=\Omega(\mu^2)$).

\begin{theorem}\label{thm:umda-on-dlb-high-sel-press}
    The \umda with the parent 
    population size $\mu \geq c \log{n}$ for a 
    sufficiently large constant $c>0$, and the offspring population size
    $\lambda \ge (1+\delta)e\mu^2$ 
    for any constant $\delta>0$, 
    has expected optimisation time 
    $\mathcal{O}(n\lambda\log{\lambda}+n^3)$ on the 
    \dlblock function.
\end{theorem}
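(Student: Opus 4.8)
The plan is to apply the level-based theorem (Theorem~\ref{thm:levelbasedtheorem}) with the $n/2+1$ levels $A_0,\dots,A_m$ of (\ref{level-definition}), noting that the \umda instantiates Algorithm~\ref{abstract-algor}: conditioned on $P_t$, the updated vector $p_{t+1}$ is a deterministic function of $P_t$, and the $\lambda$ offspring are then i.i.d.\ samples from the product distribution $\mathcal D(P_t)$ given by $p_{t+1}$. I would choose $\gamma_0\coloneqq\mu/\lambda$, keep the hypothesis' constant $\delta$ (replacing it by $\min\{\delta,1\}$ if necessary, which only weakens $\lambda\ge(1+\delta)e\mu^2$), and set $z_j\coloneqq(1-1/n)^{2j}n^{-2}$ for the $A_j\!\to\!A_{j+1}$ transition, so $z_*\ge 1/(en^2)$. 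The key structural fact is that \dlblock is \emph{level-respecting} under truncation selection: since $\dlblock(x)\in\{2\phi(x),2\phi(x)+1\}$ for $\phi(x)<m$ while $\dlblock(x)=n=2m$ for $\phi(x)=m$, every individual in $A_{\ge j+1}$ strictly outscores every individual in $A_j$. Hence whenever $|P_t\cap A_{\ge j}|\ge\gamma_0\lambda=\mu$, all $\mu$ selected individuals have at least $j$ leading $11$s, forcing $X_{t,i}=\mu$ and thus $p_{t+1,i}=1-1/n$ for all $i\le 2j$, while the two marginals of block $j+1$ only lie in $[1/n,1-1/n]$. This settles (G1): an offspring has its first $j+1$ blocks all equal to $11$, i.e.\ lies in $A_{\ge j+1}$, with probability $\ge(1-1/n)^{2j}n^{-2}=z_j\ge 1/(en^2)$ (using $2j\le n-2$).

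For (G2), fix the $A_j\!\to\!A_{j+1}$ transition, $\gamma\in(0,\gamma_0]$, and a population with $|P_t\cap A_{\ge j}|\ge\mu$ and $|P_t\cap A_{\ge j+1}|\ge\gamma\lambda$. Let $k'$ be the number of selected individuals in $A_{\ge j+1}$; by the level-respecting property all such individuals are among the $\mu$ fittest, so $k'\ge\min\{|P_t\cap A_{\ge j+1}|,\mu\}\ge\gamma\lambda$ and, since $\gamma\lambda>0$ and $k'\in\mathbb N$, $k'\ge 1$. Each of them has a $1$ in positions $2j+1$ and $2j+2$, so $X_{t,2j+1},X_{t,2j+2}\ge k'$ and hence $p_{t+1,2j+1},p_{t+1,2j+2}\ge\min\{1-1/n,k'/\mu\}$, while $p_{t+1,i}=1-1/n$ for $i\le 2j$; therefore
\[
\prob{y\in A_{\ge j+1}}\ \ge\ (1-1/n)^{2j}\bigl(\min\{1-1/n,k'/\mu\}\bigr)^{2}.
\]
It remains to check this is $\ge(1+\delta)k'/\lambda\ge(1+\delta)\gamma$. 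If $k'/\mu>1-1/n$, the bound is $\ge(1-1/n)^{2j+2}>1/e\ge 1/(e\mu)\ge(1+\delta)k'/\lambda$ (here $2j+2\le n-2$ and $\lambda\ge(1+\delta)e\mu^2$). Otherwise the bound is $(1-1/n)^{2j}(k'/\mu)^2$, and the desired inequality rearranges to $k'\ge(1+\delta)\mu^2\big/\bigl((1-1/n)^{2j}\lambda\bigr)$, whose right-hand side is strictly below $1$ for large $n$, because $\lambda\ge(1+\delta)e\mu^2$ and $(1-1/n)^{2j}\ge(1-1/n)^{n-4}$ with $e(1-1/n)^{n-4}>1$ — so $k'\ge 1$ suffices. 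This is exactly where the constant $(1+\delta)e$ enters: the $e$ pays for the $(1-1/n)^{\Theta(n)}$ loss on the leading marginals, and the $(1+\delta)$ cancels the multiplicative slack required by (G2).

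For (G3), with $\gamma_0=\mu/\lambda$, $z_*\ge 1/(en^2)$ and $m=n/2$, the inequality $\lambda\ge(4/(\gamma_0\delta^2))\ln(128m/(z_*\delta^2))$ becomes $\mu\ge(4/\delta^2)\ln\!\bigl(O(n^3/\delta^2)\bigr)=O(\log n)$, which holds once $\mu\ge c\log n$ with $c$ large enough. Theorem~\ref{thm:levelbasedtheorem} then yields
\[
\expect{T}\ \le\ \frac{8}{\delta^2}\sum_{j}\left[\lambda\ln\!\left(\frac{6\delta\lambda}{4+z_j\delta\lambda}\right)+\frac{1}{z_j}\right]\ =\ O\!\left(n\lambda\log\lambda+n^3\right),
\]
since there are $\Theta(n)$ transitions, $1/z_j=n^2(1-1/n)^{-2j}=O(n^2)$, and $\ln(6\delta\lambda/(4+z_j\delta\lambda))=O(\log\lambda)$ with $\delta$ constant. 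The main obstacle is verifying (G2) in the regime $k'=1$ with the transition index as large as possible, which is precisely the bottleneck that fixes the offspring population size at $\lambda\ge(1+\delta)e\mu^2$; everything else is routine, and the argument also makes precise the intuition that under such extreme selective pressure the \umda mimics the $(1,\lambda)~\ea$ on \dlblock.
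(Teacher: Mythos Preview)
Your proposal is correct and follows essentially the same route as the paper: apply Theorem~\ref{thm:levelbasedtheorem} with the levels of~(\ref{level-definition}), $\gamma_0=\mu/\lambda$, and $z_j$ of order $n^{-2}$, using that truncation selection forces $p_{t+1,i}=1-1/n$ for $i\le 2j$ whenever $|P_t\cap A_{\ge j}|\ge\mu$. The only noteworthy difference is in the (G2) verification: the paper writes the terse chain $(1-1/n)^{2j}(\gamma/\gamma_0)^2\ge e^{-1}(\gamma/\gamma_0)^2=\lambda\gamma/(e\mu^2)$, implicitly invoking $\gamma\ge 1/\lambda$ (so $\gamma/\gamma_0\ge 1/\mu$) to pass from a quadratic to a linear dependence on $\gamma$; your argument via the integer count $k'\ge 1$ and the case split on $k'/\mu\lessgtr 1-1/n$ makes that step explicit and is arguably cleaner.
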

\begin{proof}
Recall that 
levels are defined as in (\ref{level-definition}).
    There are $m \coloneqq   (n/2)+1$ 
    levels from $A_0$ to $A_{m}$. 
     
     For condition (G2), for any level 
    $j \in \{0\}\cup [m-2]$ satisfying $|P_{t} \cap 
    A_{\geq j}| \geq \gamma_0\lambda = \mu$ and 
    $|P_{t} \cap A_{\geq j+1}| \geq \lambda 
    \gamma$ for some $\gamma \in (0,\gamma_0]$, 
    we seek a lower bound $(1+\delta)\gamma$ 
    for $\prob{y \in A_{\geq j+1}}$ where 
    $y$ is sampled from the model $p_{t+1}$. 
    The given conditions 
    on $j$ imply that the $\mu$ fittest individuals of
    $P_t$ have at least $j$ leading 11s 
    and among them at least $\lceil \gamma\lambda \rceil$ 
    have at least $j+1$ leading 11s. 
    Hence, $p_{t+1,i}=1-1/n$ for $i \in [2j]$, 
    and for $i\in \{2j+1, 2j+2\}$ that 
    $p_{t+1,i} \geq \max(\min(1-1/n,
    \gamma\lambda/\mu),1/n)\geq \min(1-1/n,\gamma/\gamma_0)$, so
    \begin{align*}
    \prob{y \in A_{\geq j+1}}
        &\geq \prod_{i=1}^{j+1} p_{t+1,i}
        \geq \left(1 - \frac{1}{n}\right)^{2j} 
        \left(\frac{\gamma}{\gamma_0}\right)^2\\
        &\ge  \frac{1}{e}\left(\frac{\gamma}{\gamma_0}\right)^2
        = \frac{\lambda\gamma}{e\mu^2} \geq (1 +\delta)\gamma
    \end{align*}
    due to $\gamma_0 \geq \gamma\geq 1/\lambda$,
    and $\lambda\ge (1+\delta)e\mu^2$
    for any constant $\delta>0$. Therefore, condition (G2)  is now satisfied.
    
    For condition (G1), 
    for any level $j \in \mathbb{N}\cap [0,m-1]$ satisfying $|P_{t} \cap A_{\geq j}
    | \geq \gamma_0\lambda = \mu$ we seek a lower bound $z_j$ on $\prob{y \in 
        A_{\geq j+1}}$. 
        Again the condition on level $j$ implies that
    $p_{t+1,i}=1-1/n$ for $i \in [2j]$. 
    Due to the imposed lower margin, 
    we can assume pessimistically that 
    $p_{t+1,2j+1}=p_{t+1,2j+2} \ge 1/n$. 
    Hence, 
    \begin{align*}
    \prob{y \in A_{\geq j+1}}
        &\geq \left(1 - \frac{1}{n}\right)^{2j} \left(\frac{1}{n}\right)^2 
        \ge \frac{1}{en^2} =: z_j.
    \end{align*}
    So, (G1) is satisfied for $z_j\coloneqq  1/(en^2)$.
    
    Considering (G3), because $\delta$ is a constant, and
    both $1/z_*$ and $m$ are $\mathcal{O}(n)$, 
    there must exist a constant $c>0$ such that 
    $\mu \geq c \log n \geq (4/\delta^2)\ln(128 m / (z_* \delta^2))$. Note that 
    $\lambda = \mu/\gamma_0$, so (G3) is satisfied.
    
    All conditions of Theorem~\ref{thm:levelbasedtheorem} are 
    satisfied, so the expected optimisation 
    time of the \umda on the \dlblock function is
    \begin{displaymath}
        \mathcal{O}\left(\sum_{j=1}^{n/2} 
        \left( \lambda \ln\lambda + n^2\right)\right)
        = \mathcal{O}\left(n\lambda\log\lambda + n^3\right).\quad\qedhere
    \end{displaymath}
    \end{proof}

One might say that the failure of the \umda to optimise 
the \dlblock problem is not necessarily coming from the pairwise deception, 
but from the low selective pressure in the same way that it struggles on the 
\los function \cite{bib:Lehre2019}. However, as a final remark, we think that this claim 
is inaccurate. In fact, for the range of selective pressures
  considered, it has been shown that the \umda optimises other
  non-deceptive problems easily in 
  polynomial expected runtime, including
  \los \cite{Dang:2015,bib:Lehre2019} and \om \cite{Dang:2015}.

  Non-elitist algorithms using ($\mu,\lambda$)-selection are typically
  efficient when $\mu/\lambda<1/e$, i.e., below the
  error threshold (see \cite{Lehre2010-PPSN}). In contrast, to show
  that the \umda optimises \dlblock efficiently, we need to decrease
  this ratio significantly so that we get
  $\mu/\lambda=\bigO{1/\mu}$, i.e., extremely high selective pressure.
We do not normally see such high selective pressures in practical applications of the \umda. Furthermore, under an 
 $\bigO{1/\mu}$ selective pressure the \umda selects few fittest 
 individuals to update the model. In this extreme situation, we believe that 
 the \umda degenerates into the 
 $(1,\lambda)~\ea$ (see Figure~1 for some empirical result of the \umda with
 extreme selective pressure and simple \eas), where only the fittest
 individual(s) are selected to the next generation.
 
 We have already shown that simple \eas can optimise the \dlblock
 function easily under normal selective pressure. For this reason, we
 think that comparing the \umda and simple \eas in this (extreme) regime of the
 selective pressure is not very interesting.

\section{Experiments}
\label{sec:experiments}

In the previous sections, we showed that the \umda fails to optimise
\dlblock since it does not remember what it has learned so far during
the optimisation process. This is due to the lack of ability to
capture interactions between bits.

To complement the theoretical investigation, we studied empirically the behaviour
of the UMDA in the case of normal selective pressure, and in the case
of extremely high selective pressure. As a base-line, we also
considered the standard $(\mu,\lambda)$ EA. Figure \ref{fig:numblocks} shows
the results of 100 repetitions of the \umda and the
$(\mu,\lambda)$ EA on the DLB problem of size $n=200$. The boxplots
show the interquartile range of the number of correct blocks obtained
by the fittest individual in the population, as a function of the
number of fitness evaluations $t$. The maximal number of correct
blocks is $n/2=100$. Each run was terminated after $10^6$ function
evaluations. The UMDA in the extreme selective pressure setting uses
the parameters $\mu=10$ and $\lambda=1000$. The boxplots show that the
algorithm in this setting converges to a non-optimal equilibirum
position (around 20-30 correct blocks), as predicted by the
theoretical analysis. The UMDA in the normal selective pressure
setting uses the parameters $\mu=200$ and $\lambda=1000$. The boxplots
show that the algorithm in this setting improves steadily, again as
predicted by the theoretical analysis. The $(\mu,\lambda)~\ea$ uses
the parameters $\mu=200, \lambda=1000$, and bitwise mutation rate
$1/n$. The boxplots show that the algorithm has a steady progress, as
predicted by the theoretical analysis.

\begin{figure*}
  \centering
  \includegraphics[width=\linewidth]{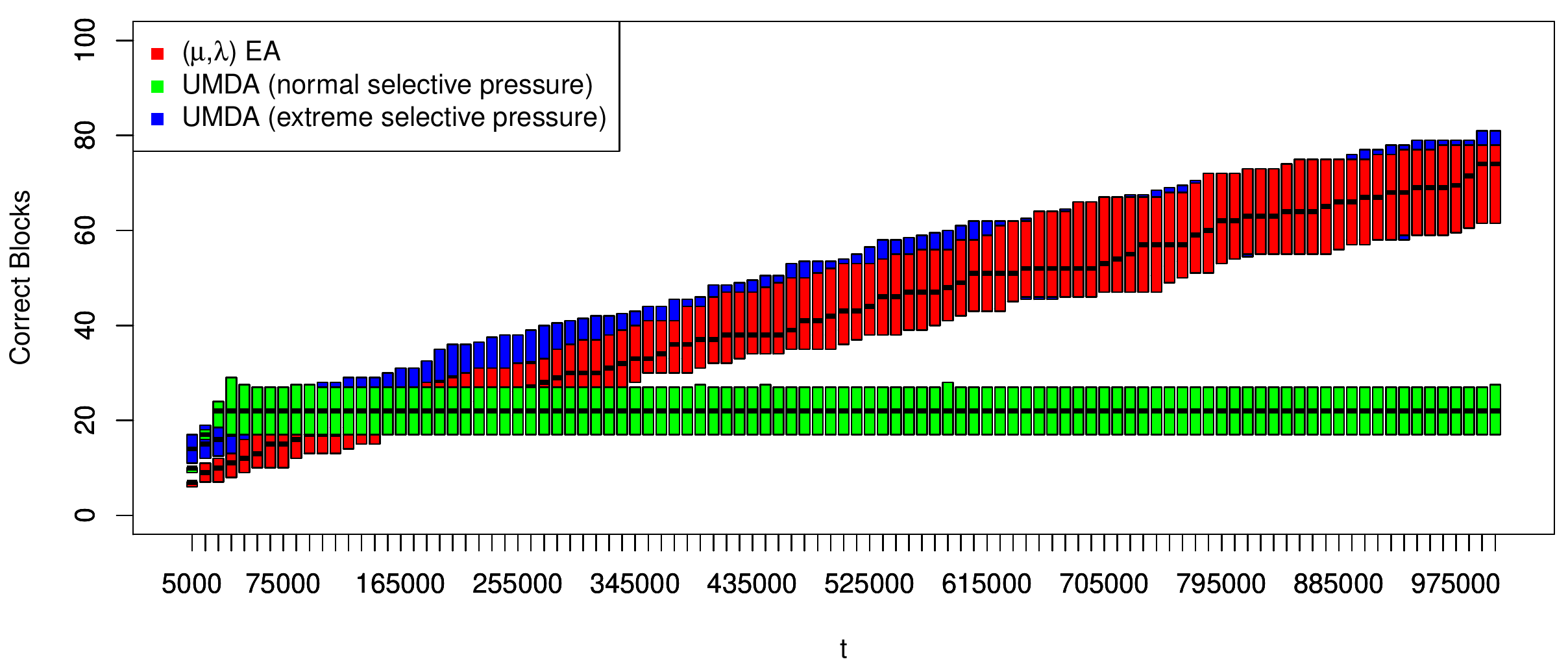}
  \caption{
  Number of correct blocks in the fittest individuals of the $(\mu,\lambda)$
    EA and UMDA on the DLB problem
    with problem size $n=200$ as a function of number of function
    evaluations $t$. UMDA with extreme selective pressure uses
    parent population size $\mu=10$, while the other experiments use
    $\mu=200,$ $\lambda=1000$, and mutation rate $1/n$ for the
    $(\mu,\lambda)$ EA.}
  \label{fig:numblocks}
\end{figure*}

We now look at another class of \edas, which
attempt to learn variable dependencies.
Here, we consider the \mimic algorithm (see Algorithm \ref{mimic-algor}).
The reason behind the inclusion of the \mimic is that
this algorithm builds a chain-structured model in each generation
using entropy and conditional entropy between decision variables. 
In order to optimise \dlblock, algorithms need to correct all blocks from left to right and maintain these blocks over generations. 
Our conjecture is that the \mimic can deal with small traps in \dlblock easily.
This is because, if the $\mu$-th individual in the sorted population has $i$ 
leading 11s, then bits at positions 
$1,2,\ldots,2i$ will have the minimum entropy and conditional entropy, 
which might form the prefix 
of the chain-structured model. 
The algorithm might then be able to 
choose the `right' bits (i.e., bits $2i+1$ and $2i+2$) 
to add to the prefix  and easily increase the 
value of $i$ in the next generation.

We consider three different settings for the population: 
$\lambda = \sqrt{n}$ (i.e. small), $\lambda=\sqrt{n}\log n$ (medium) 
and $\lambda=n$ (large) for $n\in \{10,15,20,\ldots,100\}$.  
As mentioned before, the \mimic
includes a vast number of probability calculations 
(to obtain entropy and conditional entropy), but this does not matter as we are
only interested in the number of fitness evaluations the algorithm performs.
For each value of $n$, the algorithms are run 
100 times and the average runtime is computed. The results are
shown in Figure~\ref{fig:mimic}.

\begin{figure*}[htb]
    \centering
    \subfloat[$\lambda=\sqrt{n}$]{%
        \includegraphics[width=0.315\textwidth]{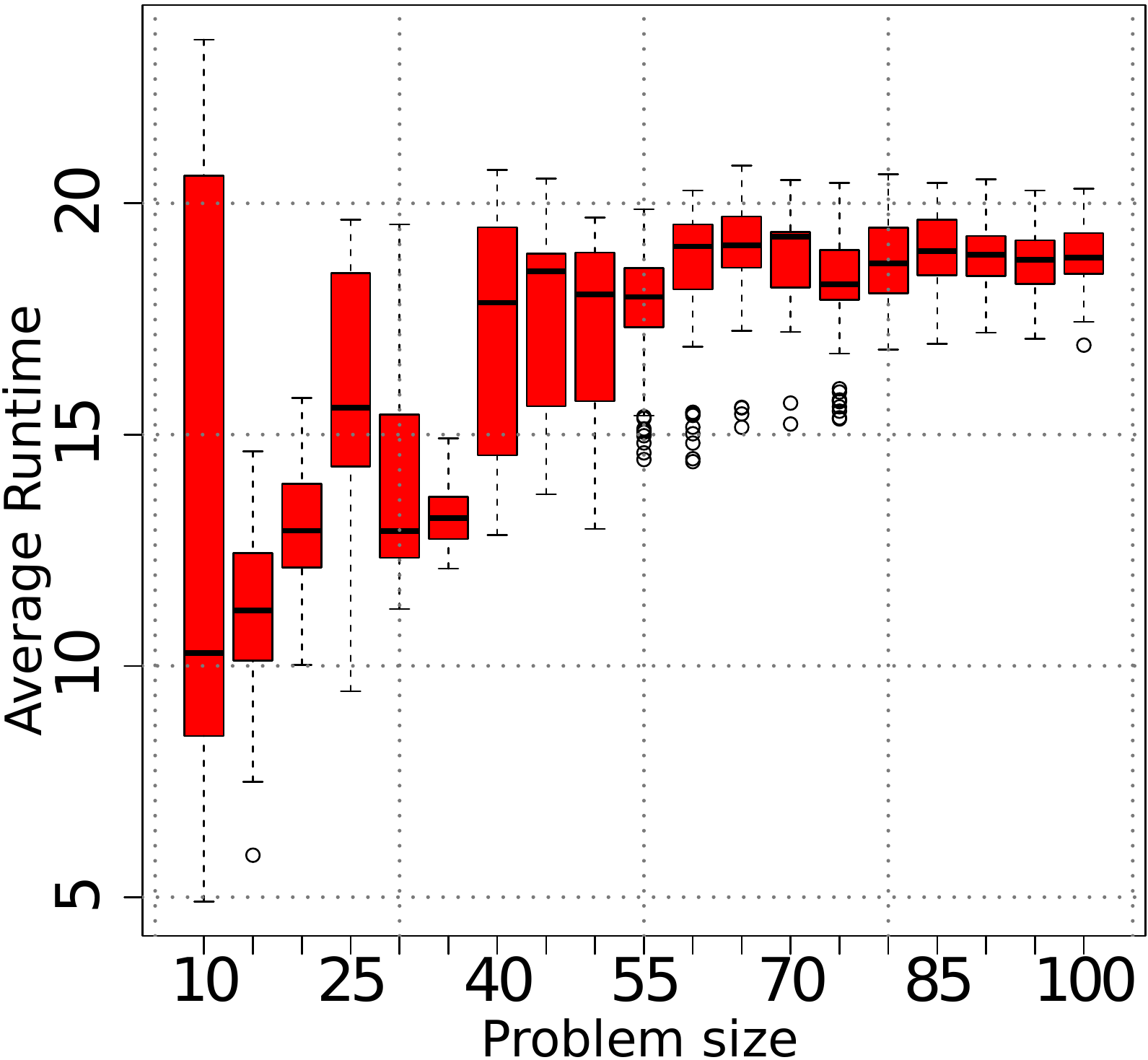}%
        \label{binval-small-mu}%
        }%
    \hfill%
    \subfloat[$\lambda=\sqrt{n}\log n$]{%
        \includegraphics[width=0.315\textwidth]{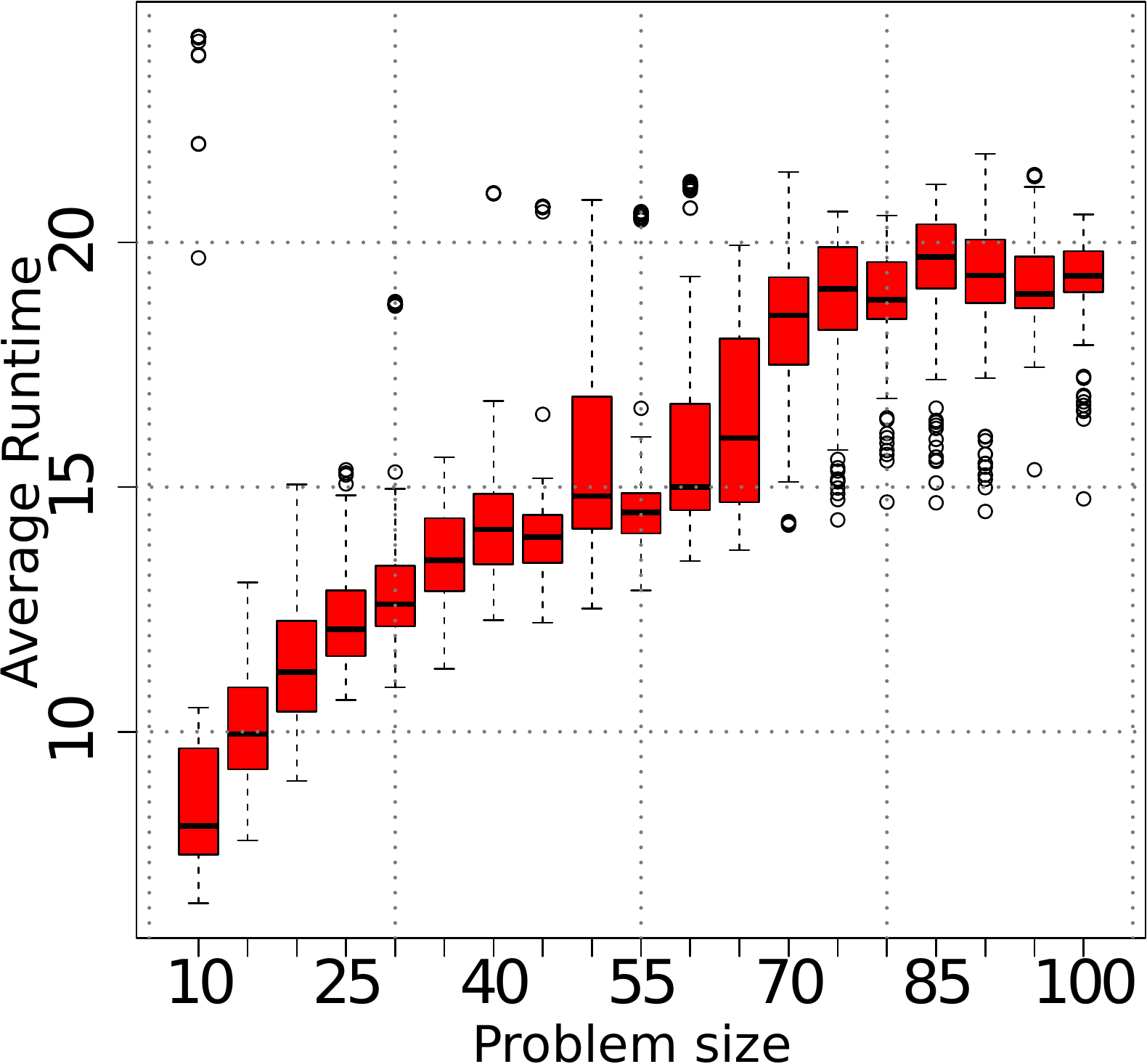}%
        \label{binval-small-mu}%
        }%
    \hfill%
    \subfloat[$\lambda=n$]{%
        \includegraphics[width=0.315\textwidth]{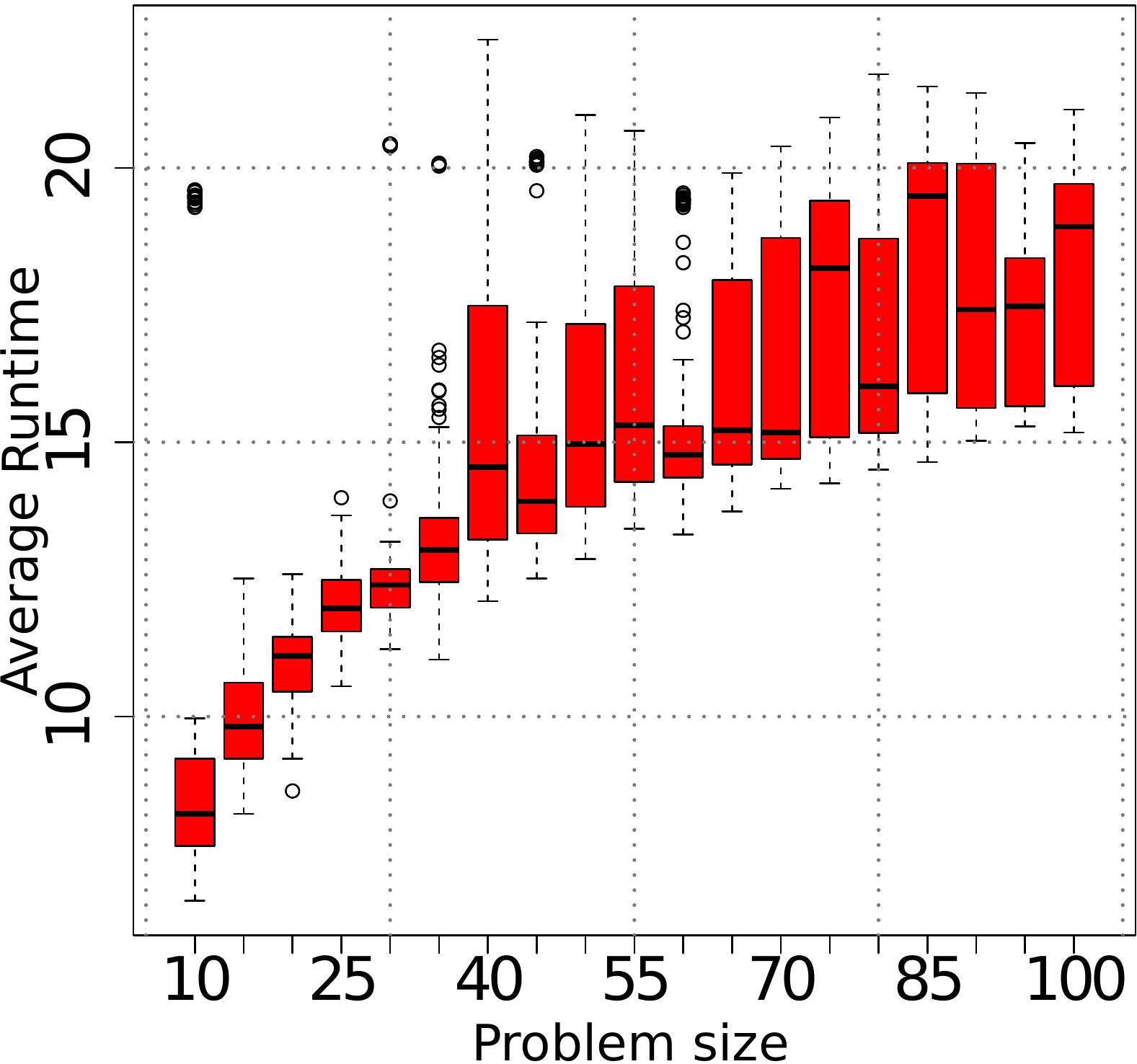}%
        \label{binval-large-mu}%
        }%
    \caption{Average runtime of the \mimic on the \dlblock function. 
    The y-values are the base-2 logarithm of the number of fitness evaluations. 
    There are 100 runtime measures for each value of the problem instance size $n$.}
    \label{fig:mimic}
\end{figure*}

In general, the results show that the \mimic can find the optimum of the
\dlblock problem 
under all choices of population size. When $n$ is small 
(especially around 10), there is a 
very large variance in the number of fitness evaluations. 
This is because the parent population is so small, the model is easily influenced by the few top individuals, which may result in a random walk. When the population size becomes larger, the diversity in the population
can be maintained for a longer period of time, 
and the empirical marginals cannot deviate too far from
the true marginals; thus, the algorithm looks more stable.  
Furthermore, the size of the population seems to 
have a real impact on the runtime of
the algorithm. For small and medium size 
(i.e. only differ by a logarithmic factor), the runtime looks similar;
however, for large $\lambda=n$ the \mimic on average 
requires a smaller number of fitness 
evaluations but exhibits a larger variance.
Although we cannot provide a rigorous analysis of the \mimic on the \dlblock problem,
the shapes in all figures suggest that the number of fitness
evaluations might be polynomial in problem size $n$. 

\section{Conclusions}
\label{sec:conclusion}

In this paper, we have introduced the Deceptive Leading Blocks, in which 
bits are highly correlated (similar to \los) 
and contains many small traps (like a trap function). 
Since this function is new, we first show that simple  \eas
can optimise the function in polynomial time. More specifically, we
consider three typical \eas, 
that are $(1+\lambda)~\ea$, $(\mu+1)~\ea$ and the non-elitist $(\mu,\lambda)~\ea$, and the upper bounds derived 
verified our claims. Next, we aim at showing 
that due to correlation and deception,  the
\dlblock problem may be hard for the \umda, which assumes 
independence between variables. We are able to show a
lower bound of $e^{\Omega(\mu)}$ 
on the expected runtime
of the \umda on \dlblock for any selective pressures $\frac{\mu}{\lambda}>\frac{14}{1000}$
and parent population size
$\mu=\Omega(\log n)$ and $\mu=o(n)$. On the other hand, if the selective
pressure is extremely high (i.e., $\bigO{1/\mu}$), the \umda optimises
the \dlblock function in an $\bigO{n\lambda\log \lambda + n^3}$ expected runtime.

We believe that the difficulty that the \umda faces when optimising \dlblock stems from the first-order statistics underlying the algorithm, that causes the algorithm to forget everything it has learned so far. Motivated by this observation, we look at the class of bivariate \edas, and a typical one is the \mimic. 
Due to the complexity of how the entropy and pair-wise conditional entropy measured to build a chain-structured model, we were unable to analyse it using rigorous arguments. Thus, we present some experimental results of the \mimic on
\dlblock to draw future attention from the community. The experimental findings
suggest that the \mimic might be able to optimise \dlblock 
in polynomial runtime and exhibit the ability to fill in 
the gaps left by the \umda.

We leave it as an open problem for future work to analyse the
behaviour of the algorithm in the case
$\frac{14}{1000}>\frac{\mu}{\lambda}=\omega(1/\mu)$.  Future work should also
consider theoretical aspects of the \mimic on simple toy functions
like \om and \los. Rigorous arguments should be formed to describe how
the chain-structured model is built and how the next population is
sampled using pairwise conditional probabilities following the order
of bits in the model.


\bibliography{references}
\bibliographystyle{abbrvnat}
\end{document}